\documentclass[twoside,11pt]{article}

\usepackage{blindtext}

\usepackage{amssymb}
\usepackage{url}
\usepackage{makecell}
\usepackage{bbding}

\usepackage{booktabs}
\usepackage{amsthm}
\usepackage{multirow}
\usepackage{subfigure}
\usepackage{graphicx}
\usepackage{xcolor}
\usepackage{algpseudocode}
\usepackage{algorithm}
\usepackage{pythonhighlight}

\usepackage{jmlr2e}

\newif\ifcolorfootnotemark
\colorfootnotemarkfalse
\NewCommandCopy{\originalfootnote}{\footnote}
\RenewDocumentCommand{\footnote}{o +m}{%
  \colorfootnotemarktrue
  \IfNoValueTF{#1}{\originalfootnote{#2}}{\originalfootnote[#1]{#2}}%
  \colorfootnotemarkfalse
}
\AddToHook{cmd/@footnotemark/before}{%
  \ifcolorfootnotemark\begingroup\color{red}\fi}
\AddToHook{cmd/@footnotemark/after}{%
  \ifcolorfootnotemark\endgroup\fi}

\makeatletter
\RenewDocumentCommand{\thanks}{+m}{%
  \footnotemark
  \protected@xdef\@thanks{\@thanks
    \protect\footnotetext[\the\c@footnote]{\protect\hypertarget{\Hy@footnote@currentHref}{}#1}}%
}
\makeatother

\DeclareFontShape{OT1}{cmr}{bx}{sc}{<->ssub*cmr/bx/n}{}

\usepackage{amsmath,amsfonts,bm}

\def\eqref#1{Eq.~(\ref{#1})}
\def\ceil#1{\lceil #1 \rceil}

\def\1{\bm{1}}

\def\eps{{\epsilon}}

\def\vone{{\bm{1}}}

\def\mI{{\bm{I}}}

\DeclareMathAlphabet{\mathsfit}{\encodingdefault}{\sfdefault}{m}{sl}
\SetMathAlphabet{\mathsfit}{bold}{\encodingdefault}{\sfdefault}{bx}{n}

\def\gB{{\mathcal{B}}}

\def\gD{{\mathcal{D}}}

\def\gF{{\mathcal{F}}}
\def\gG{{\mathcal{G}}}
\def\gH{{\mathcal{H}}}

\def\gL{{\mathcal{L}}}

\def\gR{{\mathcal{R}}}
\def\gS{{\mathcal{S}}}
\def\gT{{\mathcal{T}}}

\def\gX{{\mathcal{X}}}
\def\gY{{\mathcal{Y}}}

\def\sP{{\mathbb{P}}}

\def\sR{{\mathbb{R}}}

\newcommand{\E}{\mathbb{E}}

\DeclareMathOperator*{\argmax}{arg\,max}
\DeclareMathOperator*{\argmin}{arg\,min}

\def\Dpos{\mathcal{D}_{\mathrm{pos}}}
\def\Dneg{\mathcal{D}_{\mathrm{neg}}}

\def\Lcon{L_{\mathrm{con}}}
\def\Lsup{L_{\mathrm{sup}}}
\def\Lconsup{L_{\mathrm{con}}^{\mathrm{sup}}}

\def\rT{\rho_{\mathcal{T}}}

\def\Disc{\mathrm{Disc}}
\def\Gen{\mathrm{Gen}}

\def\iidsim{\overset{\text{i.i.d}}{\sim}}

\usepackage[capitalize,noabbrev]{cleveref}

\usepackage{lastpage}
\jmlrheading{volume}{year}{pages}{date submitted}{date published}{paper id}{Jialiang Wang, Qing Ma, Xiong Zhou, Gangfeng Hu, Haoliang Li and Xianming Liu}

\ShortHeadings{Internal Discrepancy for Contrastive Learning}{Wang, Ma, Zhou, Hu, Li, Liu}
\firstpageno{1}

\begin{document}
\begin{sloppypar}

% \title{Contraction and Alienation: Towards Theoretical Understanding Non-Contrastive Learning with Neighbor-Averaging Markov Chains}

\title{Internal Discrepancy for Contrastive Learning: Generalization Bounds and Downstream Behaviors}

\author{\name Jialiang Wang$^{1 2}$\thanks{\protect\hypertarget{fn:equal-contribution}{}Equal contribution}\email cswjl@stu.hit.edu.cn \\
        \name Qing Ma$^{2}$\hyperlink{fn:equal-contribution}{\footnotemark[1]}\email qingma2016@gmail.com \\
        \name Xiong Zhou$^1$\email cszx.hit@gmail.com \\
        \name Gangfeng Hu$^1$ \email csgf@hit.edu.cn\\
        \name Haoliang Li$^2$ \email haoliang.li@cityu.edu.hk\\
        \name Xianming Liu$^1$\thanks{Corresponding author} \email csxm@hit.edu.cn\\
        \addr $^1$Harbin Institute of Technology, Harbin, China \\
        \addr $^2$City University of Hong Kong, Hong Kong, China
        }
\editor{My editor}

\maketitle

% \begin{abstract}%   <- trailing '%' for backward compatibility of .sty file
% % \blindtext
% \input{0_abstract}
% \end{abstract}
\begin{abstract}

Contrastive learning has emerged as a popular paradigm for training representation models. While prior studies pursue more practical and less restrictive assumptions to enhance theoretical understanding of contrastive learning, they essentially describe the interconnections among
representations within the data or induced by the function class. To encapsulate these assumptions, we propose the measure of internal discrepancy that quantifies the difference between the mean representation of positive samples and latent cluster centers. We then prove an internal discrepancy-aware generalization bound for downstream supervised tasks following contrastive learning, without requiring additional assumptions. This generalization bound not only serves as a unifying framework connecting various assumptions but also motivates us to further reduce the internal discrepancy, thereby introducing the relaxed k-means loss as an approximation. Moreover, we investigate the downstream behaviors of contrastive learning from the perspective of contrastive learning-incurred supervised losses. In the presence of class imbalance, we shed light on why contrastive learning can facilitate imbalanced learning. Finally, we investigate the optimality of these incurred supervised losses, proving several intriguing properties.

\end{abstract}

\begin{keywords}
 contrastive learning, generalization bound, imbalanced learning, optimality
\end{keywords}

\section{Introduction}
In the past decade of machine learning research, many classification models have relied heavily on the availability of a large amount of labeled data \citep{krizhevsky2012imagenet, he2016deep, vaswani2017attention}. However, it is not always possible to obtain sufficient labeled data for scaling up models, prompting researchers to leverage vast amounts of unlabeled data \citep{misra2020self, grill2020bootstrap, goyal2021self, tomasev2022pushing, he2022masked, kong2023understanding, dehghani2023scaling, zhang2025augmentation}, with self-supervised learning emerging as a promising path to advance machine learning \citep{balestriero2023cookbook}. In particular, we focus on contrastive learning that captures semantically relevant and discriminative information from the data by bringing the representations of positive pairs closer together while pushing the representations of negative samples apart \citep{gutmann2012noise, he2020momentum, tian2020contrastive, Caron_2021_ICCV, hieu2025generalization, rusak2025infonce}.

Despite the empirical success of contrastive learning \citep{mikolov2013distributed, logeswaran2018an, radford2021learning, yu2022coca, xu2023contrastive}, there remains a lack of comprehensive understanding regarding how this paradigm effectively learns meaningful representations that facilitate downstream tasks \citep{tian2020makes, garrido2023on, chen2024your, tan2024contrastive, wang2025linear}. For instance, \citet{kang2021exploring} observed that contrastive learning can achieve a balanced representation space in the presence of class imbalance, \textit{but there has been limited theoretical analysis of this phenomenon}. Due to the absence of supervision, contrastive learning primarily relies on the intrinsic structure within the data \citep{tian2020makes, li2026consistency}, and the inductive biases of the function class and training algorithm \citep{wen2021toward, saunshi2022understanding, haochen2023theoretical, parulekar23infonce}. However, without further assumptions on the data or inductive biases, relying solely on contrastive losses may not adequately establish relevant associations for downstream behaviors.

Prior studies have introduced various data assumptions to bridge this gap, including the (nearly) conditional independence between positive samples \citep{saunshi2019theoretical, lee2021predicting, zou2023generalization}, the overlap of augmentation sets for semantically similar samples \citep{haochen2021provable, wang2022chaos}, the expansion or concentration within augmentation sets \citep{wei2021theoretical, huang2023towards}, and the label consistency of positive pairs \citep{zhang2025augmentation}. While these studies pursue more practical and less restrictive data assumptions to understand contrastive learning, \citet{saunshi2022understanding} pointed out that data assumptions alone, even with a small contrastive loss, are insufficient to capture the success of contrastive learning in realistic scenarios. They further argue that understanding contrastive learning requires incorporating inductive biases of the function class. This has motivated studies to investigate constrained function classes, such as assumptions regarding the ability to implement stronger inner-cluster connections \citep{haochen2023theoretical}, cluster-preserving representations \citep{parulekar23infonce}, and centered representations \citep{cui2025augmentation}. In light of these two lines of research, \textbf{several critical questions arise}:
\begin{enumerate}
    \item \textit{Do prior data assumptions that appear distinct essentially characterize or constrain a common aspect? Is there a measure that connects these data assumptions?}
    \item \textit{Can this measure effectively describe the inductive biases of the function class or training algorithm? }
    \item \textit{Can we further establish a generalization bound based on this measure without requiring additional assumptions, thus providing a unified framework for studying both data assumptions and inductive biases?}
    \item \textit{Can we further explore the downstream behaviors of contrastive learning, including its handling of class imbalance and its optimality?}
\end{enumerate}
We answer these questions affirmatively. The core insight lies in the observation that prior assumptions essentially or partially describe the interconnections among representations of samples, augmentations, and latent clusters. To quantify these interconnections, we propose the measure of \textit{internal discrepancy}, denoted by $\Disc(f)$, which characterizes the discrepancy between the mean representation of positive samples and latent cluster centers. Without relying on specific assumptions, we prove an internal discrepancy-aware generalization bound on downstream supervised performance for contrastive learning, expressed informally as $\Lsup(\widehat{f})\lesssim\Lcon(f)+\Disc(\widehat{f})+\Gen_M$ for any $f\in\gF$. Here, $\Lsup(\widehat{f})$ is a supervised loss w.r.t. the empirical minimizer $\widehat{f}$, $\Lcon(f)$ represents the contrastive loss, and $\Gen_M$ denotes the generalization error. This bound suggests that downstream performance will be determined by both the value of the contrastive loss and the internal discrepancy. By further establishing the association between specific assumptions and the internal discrepancy, we can readily develop theoretical insights into downstream tasks with those assumptions. Replacing $\Disc(\widehat{f})$ with the intra-class discrepancy of $f$ and the positive-pair alignment of $\widehat{f}$ in the bound motivates us to minimize an approximation of the internal discrepancy, namely \textit{the relaxed k-means loss}, which aims to reduce the distance between representations within a truncated range.

Moreover, the supervised loss $\Lsup(\widehat{f})$ corresponds to a specific supervised loss function, which we refer to as a \textit{contrastive learning-incurred supervised loss}. Based on the incurred supervised loss, we provide insights into the downstream behaviors of contrastive learning, specifically through the InfoNCE loss \citep{oord2018representation} and the spectral contrastive loss (SpeCL) \citep{haochen2021provable}. In class-imbalanced scenarios, we prove that the InfoNCE-incurred loss approximates the balanced softmax loss \citep{ren2020balanced} that accommodates class distribution shifts to mitigate class imbalance, and that the cosine similarities between representations learned by the SpeCL-incurred loss remain consistent regardless of the class distribution. These results help explain why contrastive learning can facilitate class-imbalanced learning. Furthermore, we explore the optimality of these contrastive learning-incurred supervised losses, proving that representations learned under the InfoNCE-incurred loss exhibit variability collapse and converge to a simplex equiangular tight frame. Additionally, the SpeCL-incurred loss leads to optimal features that achieve the best approximation of the identity matrix.

Our main contributions are as follows:
\begin{itemize}
    \item We introduce internal discrepancy, a measure that characterizes the discrepancy between the mean representation of positive samples and latent cluster centers and can be associated with various data assumptions.
    \item We derive the internal discrepancy-aware generalization bound on downstream performance for contrastive learning without requiring assumptions on the data. The generalization bound motivates us to propose the relaxed k-means loss to approximately minimize the internal discrepancy.
    \item We investigate the role of contrastive learning in mitigating class imbalance, revealing that the InfoNCE-incurred loss approximates the balanced softmax loss and that class imbalance only scales representations under the SpeCL-incurred loss.
    \item We prove the optimality of contrastive learning through its incurred supervised losses, including variability collapse and convergence to a simplex equiangular tight frame for the InfoNCE-incurred loss, and the best approximation to the identity matrix by optimal representations for the SpeCL-incurred loss.
\end{itemize}

\section{Preliminaries}
\paragraph{Notation.}
Let $\gX$ denote the set of all possible data points, $\gY$ be the total label space, and $\gF$ be the function class of norm-bounded representation functions $f:\gX\rightarrow \sR^d$ s.t. $\|f(\cdot)\|\le R$ for some $R>0$. We define $\gD$ as the distribution of samples\footnote{For ease of exposition, we consider a distribution over original samples and their augmentations.}, $\gD^c$ as its component w.r.t. class $c$, and $\gD_{\gT}$ as the joint distribution of samples and their corresponding labels within the (latent) class space $\gT\subseteq \gY$. Let $\rT$ be the class distribution of $\gT$ and $|\gT|$ be the cardinality of $\gT$. Throughout, we assume $|\gT|\ge2$ and $\rT(c)>0$ for all $c\in\gT$. We further assume that each sample $x\sim\gD$ has a deterministic label $y$, and write $(x,y)\sim\gD_{\gT}$ for brevity.

\paragraph{Supervised Loss.} 
The loss incurred by the classifier $g:\gX\rightarrow \sR^{|\gT|}$ on a point $(x,y)\sim \gD_{\gT}$ is defined as $h\left(g(x)_y,\{g(x)_{c}\}_{c\in\gT,c\neq y}\right)$, typically tailored for enlarging the prediction $g(x)_y$ of the target class while reducing the predictions for non-target classes $\{g(x)_c\}_{c\neq y}$. Consequently, the supervised loss w.r.t. the classifier $g$ is
\begin{equation}
L_{\mathrm{sup}}(\gD_{\gT},g,h)=\E_{(x,y)\sim\gD_{\gT}}[h(g(x)_y,\{g(x)_{c}\}_{c\in\gT,c\neq y})].
\end{equation}

\paragraph{Linear Probing.}
To evaluate the downstream classification performance of the representation model $f$, we train a linear classifier $W\in\sR^{|\gT|\times d}$ and use $g(x)=Wf(x)$ to calculate the classification loss $\Lsup(\gD_{\gT}, Wf,h)$. By fixing $f$ and optimizing the linear classifier, we use a slight abuse of notation and define the supervised loss of $f$ on $\gD_{\gT}$ as the infimum over $W$:
\begin{equation}
L_{\mathrm{sup}}(\gD_{\gT}, f,h)=\inf_{W\in\sR^{|\gT|\times d}} L_{\mathrm{sup}}(\gD_{\gT}, Wf,h).
\end{equation}
Since $L_{\mathrm{sup}}(\gD_{\gT}, f,h)\le L_{\mathrm{sup}}(\gD_{\gT}, Wf,h)$ for any $W$, we focus on bounding a surrogate classification loss with the mean classifier that comprises class means of representations \citep{saunshi2019theoretical, wang2022chaos, huang2023towards, joshi2023data}. Throughout this paper, unless otherwise specified, 
$W$ is the mean classifier, i.e., $w_c=\E_{x\sim\gD^c}f(x)$ for all $c\in\gT$.

\paragraph{Contrastive Learning.}
% \section{Internal-Discrepancy-Aware Generalization Bound}
Contrastive learning is based on the fundamental idea of bringing representations of positive pairs closer while pushing representations of negative samples apart. 
We model an anchor $x$ and its positive sample $x^+$ by the joint distribution $\Dpos(x,x^+)=\gD(x)\Dpos(x^+|x)$\footnote{In supervised contrastive learning, $\Dpos(x^+|x)$ specifically includes samples that belong to the same class as $x$. In the unsupervised setting, $\Dpos(x^+|x)$ covers the distribution of random views (or augmentations) of $x$.}, and $k\ge1$ negative samples $\{x_i^-\}_{i=1}^k$ are drawn i.i.d. from $\Dneg(x^-|x)$\footnote{For supervised contrastive learning, $\Dneg(x^-|x)$ is characterized by $\gD^{\not y_x}(x^-)$, which represents samples from classes other than the class of $x$. In the unsupervised scenario, $\Dneg(x^-|x)=\gD(x^-)$ describes the overarching data distribution, encompassing all classes without using label information.}. The population contrastive loss is defined as   
\begin{equation}
\label{contrastive-loss}
\Lcon(f,\gD;\ell)=\E_{\substack{x,x^+\sim \Dpos\\\{x_i^-\}_{i=1}^k\iidsim \Dneg}}\ell\left(f,x,x^+,\{x_i^-\}_{i=1}^k\right),
\end{equation}
where $\ell$ represents the contrastive loss function. For the sake of brevity, we will use $\ell\left(f,x,x^+,\{x_i^-\}_{i=1}^k\right)$ to denote $\ell\left(f(x)^\top f(x^+),\{f(x)^\top f(x_i^-)\}_{i=1}^k\right)$.  Additionally, we might abbreviate $\Lcon(f,\gD;\ell)$ as $\Lcon(f;\ell)$ or $\Lcon(f)$ in certain contexts. Intuitively, the essence of minimizing the contrastive loss lies in amplifying the similarity between positive-pair representations $f(x)$ and $f(x^+)$, while striving to separate the representations of negative samples. Correspondingly, \citet{wang2020understanding} have identified two properties related to this process: the alignment of positive pairs and the uniformity of features.

We exemplify two contrastive loss functions below. The first is the widely used InfoNCE loss:
\begin{example}[InfoNCE]
The InfoNCE loss in contrastive learning \citep{oord2018representation, chen2020simple, he2020momentum} is defined as $\ell(f,x,x^+,\{x_i^-\}_{i=1}^k)=\log \left[1+\sum_{i=1}^k \exp(f(x)^\top(f(x_i^-)-f(x^+)))\right]$.
\end{example}

The second is the spectral contrastive loss that is both sample-contrastive and dimension-contrastive \citep{haochen2021provable, haochen2023theoretical, haochen2022beyond, balestriero2022contrastive, zhou2024zero}, acting as a well-established theoretical surrogate for the InfoNCE loss:

\begin{example}[Spectral Contrastive Loss (SpeCL)]
The spectral contrastive loss \citep{haochen2021provable} is formulated as $\ell(f,x,x^+,\{x_i^-\}_{i=1}^k)=-2f(x)^\top f(x^+)+\frac{1}{k}\sum_{i=1}^k \left(f(x)^\top f(x_i^-)\right)^2$.
\end{example}
As detailed in Appendix \ref{appendix:convexity-lipschitzness}, the separate convexity of the InfoNCE loss and the spectral contrastive loss can be easily verified w.r.t. the representations $f(x)$, $f(x^+)$, and $f(x_i^-)$. Furthermore, these two loss functions are also Lipschitz continuous when $f$ adheres to bounded norm constraints. Therefore, we consider contrastive loss functions with both convexity and Lipschitzness.

For the tuples $\left\{\left(x_j,x_j^+,\{x_{ji}^-\}_{i=1}^k\right)\right\}_{j=1}^M$ drawn i.i.d. according to the sampling scheme above, the empirical counterpart is defined as
\begin{equation}
\widehat{L}_{\mathrm{con}}(f)=\frac{1}{M}\sum\nolimits_{j=1}^M\ell\left(f,x_j,x_j^+,\{x_{ji}^-\}_{i=1}^k\right).
\end{equation}

The goal of contrastive learning is to identify the risk minimizer $f^*\in\argmin_{f\in\gF}\Lcon(f,\gD)$. In practical scenarios, where access is limited to a finite number of samples, our focus pivots towards finding the empirical risk minimizer $\widehat{f}\in \argmin_{f\in\gF}\widehat{L}_{\mathrm{con}}(f)$. As in supervised learning, the generalization bound for contrastive learning is measured by the difference between $\widehat{f}$ and $f^*$, but with a greater emphasis on bounding the downstream supervised loss $\Lsup(\gD_{\gT},W\widehat{f},h)$ for some $W$.

% The discrepancy between empirical risk minimizer and 

\section{Internal Discrepancy for Contrastive Learning}

In this section, we begin by introducing the concept of internal discrepancy and its relationship to various assumptions outlined in the literature. Following this, we establish an internal discrepancy-aware generalization bound for contrastive learning, which does not rely on any specific data assumptions. Moreover, we propose the relaxed k-means loss as a means to approximate the minimization of internal discrepancy. Additional results for scenarios involving a single negative sample are provided in Appendix \ref{appendix:more-results}.
% In this section, we first present the definition of internal discrepancy and its association to several data assumptions in prior studies. Subsequently, we prove the internal discrepancy-aware generalization bound for contrastive learning, which does not require any data assumptions, and further introduce the relaxed k-means loss to approximately minimize internal discrepancy. We also provide some additional results for scenarios involving a single negative sample and supervised contrastive learning in Appendix \ref{appendix:more-results}.

\subsection{Internal Discrepancy}
\label{sec:internal-discrepancy}
Despite the empirical success of contrastive learning, a significant gap persists in theoretical understanding of how this paradigm effectively learns meaningful relationships between data points. Due to the absence of labels in the data, contrastive learning primarily relies on the associations within the data and their augmentations, and the inductive biases of the function class. Prior studies have presented various data assumptions to establish relevant theories, such as conditional independence of positive pairs \citep{saunshi2019theoretical, lee2021predicting, zou2023generalization} and the overlap of augmentation sets \citep{haochen2021provable, wang2022chaos}. They have also considered assumptions on the function class, including centered representations \citep{cui2025augmentation}. While diverse, these assumptions essentially or at least partially describe interconnections between samples, augmentations, and latent clusters.

To essentially measure these interconnections, we define the internal discrepancy as follows:
\begin{definition}[Internal Discrepancy]
For the data distribution $\gD$ with latent classes $\gT$, the internal discrepancy is defined as follows:
\begin{equation}
\begin{aligned}
\Disc(f)&=\E_{\substack{c\sim \rT\\ x\sim\gD^c}}\left\|\E_{x^+\sim\Dpos(\cdot|x)}f(x^+)-\E_{x\sim\gD^c}f(x)\right\|,\\
\end{aligned}
\end{equation}
where $\|\cdot\|$ denotes the $\ell_2$-norm.
\end{definition}

\begin{figure}[t]
    \centering
    \includegraphics[width=0.98\linewidth]{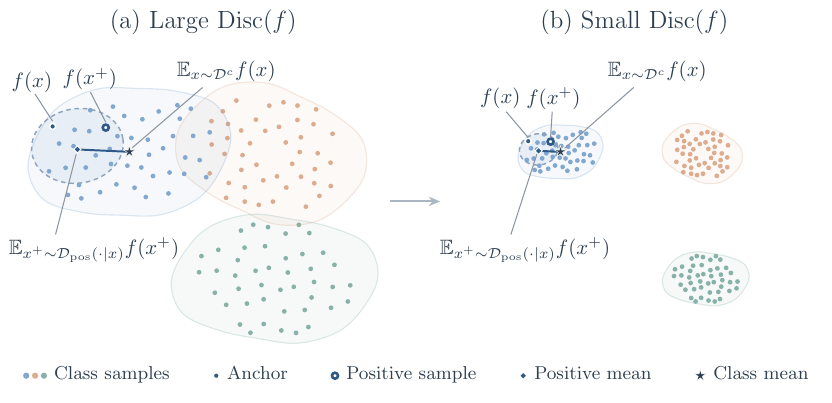}
    \caption{Schematic comparison of large and small internal discrepancy. Colors denote latent classes, while dashed regions highlight local positive pairs. The filled and open circles indicate $f(x)$ and $f(x^+)$. The diamond and star mark $\E_{x^+\sim\Dpos(\cdot\mid x)}f(x^+)$ and $\E_{x\sim\gD^c}f(x)$, respectively. The expected distance between the conditional positive mean and the class mean equals $\Disc(f)$. The right panel depicts more compact class and local regions, accompanied by a shorter segment in the illustrated configuration.}
    \label{fig:internal-discrepancy-contraction}
\end{figure}

Internal discrepancy quantitatively characterizes the alignment between the mean representation of positive samples of a data point and the classwise mean representation. Moreover, we have:
\begin{equation}
\begin{aligned}
\Disc(f)\le &\underbrace{\E_{x,x^+\sim\Dpos(x,x^+)}\|f(x^+)-f(x)\|}_{\text{positive-pair alignment }\Disc_a(f)}+ \underbrace{\E_{\substack{c\sim \rT\\ x\sim\gD^c}}\|f(x)-\E_{x\sim\gD^c}f(x)\|}_{\text{intra-class discrepancy }\Disc_i(f)},
\end{aligned}
\end{equation}
which implies that internal discrepancy is further governed by two key components: the positive-pair alignment term $\Disc_a(f)$, also referred to as the alignment loss \citep{wang2020understanding}, and the intra-class discrepancy $\Disc_i(f)$ that measures the discrepancy between the representations of individual samples and their respective latent class means. While prior works often posit specific assumptions that appear distinct, we demonstrate that they essentially serve as conditions to constrain the value of internal discrepancy $\Disc(f)$ or the two terms in its upper bound. Figure~\ref{fig:internal-discrepancy-contraction} illustrates a configuration in which more compact class clouds and local positive-pair regions accompany a smaller distance between the conditional positive mean and the class mean, reducing internal discrepancy.

% While prior works often posits specific assumptions about the alignment loss and intra-class discrepancy, we elaborate on the association of internal discrepancy $\Disc(f)$ with these assumptions.

\paragraph{Association with Previous Assumptions.} The internal discrepancy metric provides a novel perspective for integrating and expanding upon assumptions made in prior literature regarding data characteristics \citep{saunshi2019theoretical, wang2022chaos, huang2023towards, zhang2025augmentation} and the function class \citep{haochen2023theoretical, parulekar23infonce, cui2025augmentation}:

(i) The assumption of \textit{conditional independence} of positive-pair draws \citep{saunshi2019theoretical} states that $\Dpos(x,x^+|c)=\gD^c(x)\gD^c(x^+)$. This leads to a zero internal discrepancy ($\Disc(f)=0$), as $\E_{x^+\sim\Dpos(\cdot|x)}f(x^+)=\E_{x\sim\gD^c}f(x)$.

(ii) The \textit{perfect alignment} assumption between positive pairs (or the augmentation-invariant assumption) \citep[Assumption 4.6]{wang2022chaos} implies $f(x)=f(x^+)$ for all $x,x^+\sim\Dpos(x,x^+)$, that is, $\Disc_a(f)=0$. Moreover, the internal discrepancy $\Disc(f)$ simplifies to the intra-class discrepancy $\E_{\substack{c\sim \rT\\ x\sim\gD^c}}\left\|f(x)-\E_{x\sim\gD^c}f(x)\right\|$, quantified as the expected distance between each sample and its corresponding class mean.

(iii) The \textit{$(\sigma,\delta)$-augmentation} described by \citet[Definition 1]{huang2023towards} means that for each class, whose sample set is $G_c$, there exists a subset $G_c^0\subseteq G_c$ such that $\sP[x\in G_c^0]\ge \sigma\sP[x\in G_c]$ with $\sigma\in(0,1]$ and $\sup_{x_1,x_2\in G_c^0}d_A(x_1,x_2)\le \delta$, where $d_A(x_1,x_2)=\min_{x_1'\in A(x_1),x_2'\in A(x_2)} \|x_1'-x_2'\|$ for augmentation $A$. Under the additional assumptions of $\eps$-close representations among augmented data ($\|f(x_1)-f(x_2)\|\le \eps$ for all $x_1,x_2\in A(x)$) and $\iota$-Lipschitzness of $f$ in \citet{huang2023towards}, we obtain $\Disc(f)\le\sigma^2(\iota\delta+2\eps)+2R(1-\sigma^2)$.

(iv) The \textit{$\gF$-implementable inner-cluster connection larger than $\beta$} assumption \citep[Assumption 3.3]{haochen2023theoretical} implies $\E_{x,x^+\sim\Dpos^c}\|f(x^+)-f(x)\|^2\ge\beta\E_{x\sim\gD^c}\|f(x)-\E_{x\sim\gD^c}f(x)\|^2$ for any $f\in\gF$. Then we can derive that $\Disc(f)\le\Disc_a(f)+\frac{1}{\sqrt{\beta}}\sqrt{\E_{c\sim\rT}\E_{x,x^+\sim\Dpos^c}\|f(x^+)-f(x)\|^2}$, which measures the divergence between representations of positive pairs.

(v) For the \textit{cluster-preserving} function class \citep[Definition 4.1]{parulekar23infonce} satisfying $f(x)=f(x')$ for all $x,x'\iidsim\gD^c$, $c\in\gT$ and $f\in\gF$, $\Disc(f)=\E_{x\sim\gD}\|f(x)-\E_{x^+\sim\Dpos(\cdot|x)}f(x^+)\|\le \Disc_a(f)$, which means that the internal discrepancy is bounded by the positive-pair alignment term.

(vi) For the \textit{label consistency} assumption \citep[Assumption 4]{zhang2025augmentation}, positive pairs satisfy $\sP_{x,x^+\sim\Dpos}[y_x\neq y_{x^+}]\le\alpha$ for some $\alpha\in[0,1]$, which yields $\Disc(f)\le\Disc_i(f)+2R\alpha$ when $x^+$ also follows the marginal distribution $\gD$. This implies that the internal discrepancy is bounded by the intra-class discrepancy and an additional term accounting for positive pairs from different classes. Moreover, the connected intra-class augmentation graphs and $\eps$-alignment conditions in \citet[Assumptions 8 and 9]{zhang2025augmentation} provide a means to control the intra-class discrepancy by propagating alignment along paths within each class.

(vii) For the \textit{centered representation} assumption \citep[Assumption 2.2]{cui2025augmentation}, the original sample satisfies $\E_{x^+\sim\Dpos(\cdot|x)}f(x^+)=f(x)$. When $\gD^c$ denotes the distribution of original samples in class $c$ and $\Dpos(\cdot|x)$ is induced by augmenting the original anchor $x$, we obtain $\Disc(f)=\Disc_i(f)$ directly from the definition. This assumption constrains the mean representation of augmented samples while allowing $\Disc_a(f)>0$, so that the internal discrepancy reduces to the intra-class discrepancy without requiring the perfect alignment in (ii).

% A zero internal discrepancy does not require conditional independence. For example, let $x=(y,u)$, where $y$ is uniform on $\{-1,1\}$ and $u$ is independently uniform on $\{-1,0,1\}$. Positive pairs preserve $y$, with $u^+=0$ when $u=0$ and $u^+$ uniform on $\{-1,1\}$ otherwise. The two marginals agree, but positive pairs are not conditionally independent within each class. For $f(y,u)=(ay,bu)$ with $a,b>0$ and $a^2+b^2\le R^2$, the identity $\E[u^+|u]=0$ gives $\Disc(f)=0$, while $\Disc_a(f)=\Disc_i(f)=2b/3$.

In essence, by formulating a generalization bound that takes into account internal discrepancy as the intermediate bridge, we can readily use specific data assumptions to develop theoretical insights into downstream tasks. This only requires us to establish the association between internal discrepancy and these data assumptions.

\subsection{Internal Discrepancy-Aware Generalization Bound}
\label{sec:contrastive-generalization-bound}

We first show that there exists a supervised loss bounded by the population contrastive loss and internal discrepancy:
\begin{theorem}
\label{thm:bound-for-contrastive-losses}
For the loss function $\ell$ that is convex with respect to each of $f(x^+)$ and $f(x_i^-)$ and $\eta$-Lipschitz w.r.t. the positive representation $f(x^+)$, and the distribution $\gD$ with latent classes $\gT$, for all $f\in\gF$, there exist a linear classifier $W\in\sR^{|\gT|\times d}$ and a supervised loss $h:\sR^{|\gT|}\times\gT\rightarrow \sR$ such that
\begin{equation}
    \Lsup(\gD_{\gT},Wf,h)\le \Lcon(f;\ell)+\eta\cdot\Disc(f),
\end{equation}
where equality holds if $f(x)=f(x^+)=\E_{x'\sim\gD^c}f(x')$ for all $x,x^+\sim\Dpos^c$ and $c\in\gT$. And $h(w_y^\top f(x),\{w_c^\top f(x)\}_{c\neq y})=\E_{\{c_i\}_{i=1}^k\iidsim\rho_{\gT}}\ell(f(x),w_y, \{w_{c_i}\}_{i=1}^k)$ is the incurred supervised loss.
\end{theorem}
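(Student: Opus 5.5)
The idea is to take $W$ to be the mean classifier, $w_c=\E_{x\sim\gD^c}f(x)$, and $h$ the incurred supervised loss of the statement. We then go from $\Lcon(f;\ell)$ down to $\Lsup(\gD_{\gT},Wf,h)$ in three moves: Jensen on the negatives, Jensen on the positive, and a Lipschitz correction that costs $\eta\,\Disc(f)$.

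First we treat the negatives. We assume the unsupervised setting, $\Dneg(\cdot|x)=\gD$, so each $x_i^-$ can be drawn by sampling $c_i\sim\rT$ and then $x_i^-\sim\gD^{c_i}$, independently of $(x,x^+)$. Condition on $x$, $x^+$ and the labels $\{c_i\}$. Since $\ell$ is convex separately in each $f(x_i^-)$, applying Jensen one coordinate at a time (using independence of the $x_i^-$ given the $c_i$) gives
\begin{equation*}
\E_{\{x_i^-\}\mid\{c_i\}}\,\ell\bigl(f(x),f(x^+),\{f(x_i^-)\}_{i=1}^k\bigr)\ \ge\ \ell\bigl(f(x),f(x^+),\{w_{c_i}\}_{i=1}^k\bigr).
\end{equation*}
Here $\ell(f(x),u,\{v_i\})$ denotes $\ell(f(x)^\top u,\{f(x)^\top v_i\})$, matching the notation in $h$.

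Next we treat the positive. Write $x\sim\gD$ as $c\sim\rT$, $x\sim\gD^c$, so that $y=c$. Convexity in the positive slot and Jensen over $x^+\sim\Dpos(\cdot|x)$ give a lower bound $\ell(f(x),\mu(x),\{w_{c_i}\})$, where $\mu(x)=\E_{x^+\sim\Dpos(\cdot|x)}f(x^+)$. The $\eta$-Lipschitz property in the positive representation then gives
\begin{equation*}
\ell\bigl(f(x),\mu(x),\{w_{c_i}\}\bigr)\ \ge\ \ell\bigl(f(x),w_y,\{w_{c_i}\}\bigr)-\eta\,\|\mu(x)-w_y\|.
\end{equation*}
Taking expectation over $c$, $x$ and $\{c_i\}\iidsim\rT$, the first term becomes $\E_{(x,y)}h(\cdot)=\Lsup(\gD_{\gT},Wf,h)$. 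The second term becomes exactly $\eta\,\Disc(f)$ by definition. Rearranging yields the claimed inequality.

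For the equality case, if $f(x)=f(x^+)=w_c$ on class $c$, then every representation in class $c$ is constant and equal to $w_c$. Both Jensen steps are then equalities, since the averaged quantities are deterministic, and $\mu(x)=w_y$, so $\Disc(f)=0$ and the Lipschitz step is also tight.

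The main obstacle is the negatives step. We must justify that the negatives decompose by latent class independently of the anchor, and that $h$ only depends on $w_y^\top f(x)$ and $\{w_c^\top f(x)\}$. For the supervised negative distribution $\gD^{\not y_x}$, the class distribution of the $c_i$ would instead be $\rT$ conditioned on $c\neq y$. The plan there is to either restrict to the unsupervised $\Dneg=\gD$ as in the statement's $h$, or absorb this conditioning into the definition of $h$.
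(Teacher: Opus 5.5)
Your proposal is correct and follows the paper's proof: the mean classifier, Jensen's inequality in the negatives and in the positive, then the $\eta$-Lipschitz step that produces $\eta\cdot\Disc(f)$, with the same equality argument. The paper applies Jensen to the positive before the negatives, which makes no difference given separate convexity and independence; like you, it restricts to $\Dneg=\gD$ here and treats class-conditioned negatives separately in Theorem \ref{thm:bound-supervised-CL}.
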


The following proposition shows that the internal discrepancy term in Theorem \ref{thm:bound-for-contrastive-losses} cannot be dropped, even for a near-minimizer of the contrastive loss:
\begin{proposition}
\label{prop:tightness}
For any $k\ge1$, class distribution $\rT$, and $\eps>0$ with $d\ge|\gT|\ceil{k/(2\eps)}$, there exist a distribution $\gD$ with latent classes $\gT$ and a representation $f$ with $\|f(x)\|=R$ such that, for the InfoNCE loss with $\eta=R$, positive samples $x^+=x$, and negative samples drawn from $\gD$, we have $\Disc(f)=R$, the mean classifier $W=0$, $\Lcon(f)\le\inf_{f'}\Lcon(f')+\eps$ with the infimum over all $f'$ satisfying $\|f'(x)\|\le R$, and
\begin{equation}
    \Lsup(\gD_{\gT},Wf,h)=\log(1+k)\ge\Lcon(f)+\eta\cdot\Disc(f)-\log\frac{e^{R^2}+k}{1+k}-\eps.
\end{equation}
\end{proposition}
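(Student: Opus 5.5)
The plan is to build an explicit ``antipodal'' construction in which every class has representation mean zero, while each point is perfectly aligned with its positive ($x^+=x$). Then $W=0$ forces $\Lsup=\log(1+k)$ and $\Disc(f)=R$. The InfoNCE loss is close to its infimum because negatives are almost never the anchor itself.

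\textbf{Construction.} Set $m=\ceil{k/(2\eps)}$. Since $d\ge|\gT|m$, we may pick orthonormal vectors $\{e_{c,j}\}_{c\in\gT,\,1\le j\le m}$ in $\sR^d$. For each class $c$, let $\gD^c$ be uniform over $2m$ distinct points $x_{c,j}^{\pm}$. Define $f(x_{c,j}^{\pm})=\pm R\,e_{c,j}$, so $\|f(x)\|=R$, and take $\Dpos(\cdot|x)=\delta_x$. Then $\E_{x\sim\gD^c}f(x)=0$ for every $c$, so the mean classifier is $W=0$. Moreover $\Disc(f)=\E\|f(x)-0\|=R$. With all class means zero, the incurred loss from Theorem~\ref{thm:bound-for-contrastive-losses} is $h=\E\,\ell(f(x),0,\{0\}_{i=1}^k)=\log(1+k)$, which gives $\Lsup(\gD_{\gT},Wf,h)=\log(1+k)$.

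\textbf{Upper bound on $\Lcon(f)$.} For $x^-\sim\gD$, the inner product $f(x)^\top f(x^-)$ takes only three values. It equals $R^2$ when $x^-=x$, which has probability $\rT(c)/(2m)\le 1/(2m)$. It equals $-R^2$ at the antipode, and $0$ otherwise. Since $f(x)^\top f(x^+)=R^2$, the concavity of $\log$ (Jensen applied to the negatives) gives
\[
\Lcon(f)\le \log\Bigl(1+k\,\E e^{f(x)^\top f(x^-)-R^2}\Bigr)\le \log\Bigl(1+k\bigl(\tfrac{1}{2m}+e^{-R^2}\bigr)\Bigr)\le \log\bigl(1+ke^{-R^2}\bigr)+\tfrac{k}{2m}.
\]
The last step uses $\log(1+a+b)-\log(1+a)\le b$ for $a,b\ge0$, and $k/(2m)\le\eps$.

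\textbf{Lower bound on the infimum.} This is the step requiring the most care. Fix any $f'$ with $\|f'\|\le R$, and let $\mu=\E_{\gD}f'(x^-)$. First, since $\exp$ is convex and the negatives are i.i.d.\ and independent of $x$, conditionally on $x$ we get $\E\sum_i e^{a_i}\ge k e^{f'(x)^\top\mu}$. Second, positives satisfy $f'(x)^\top f'(x^+)\le R^2$. Third, the map $t\mapsto\log(1+ke^{t-R^2})$ is convex, so Jensen over $x\sim\gD$ applies. Combining these, and noting that the marginal of $x$ is $\gD$ so that $\E_x f'(x)^\top\mu=\|\mu\|^2\ge0$, we obtain $\Lcon(f')\ge\log(1+ke^{\|\mu\|^2-R^2})\ge\log(1+ke^{-R^2})$. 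Hence $\Lcon(f)\le\inf_{f'}\Lcon(f')+\eps$.

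\textbf{Final inequality.} The displayed claim is equivalent to $\Lcon(f)\le\log(1+k)-R^2+\log\frac{e^{R^2}+k}{1+k}+\eps=\log(1+ke^{-R^2})+\eps$. This is exactly the upper bound derived above, with $\eta\cdot\Disc(f)=R\cdot R=R^2$.
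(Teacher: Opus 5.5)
Your construction, the values $W=0$, $\Disc(f)=R$, $\Lsup=\log(1+k)$, the upper bound $\Lcon(f)\le\log(1+ke^{-R^2})+\eps$, and the final rearrangement all match the paper's proof. The paper bounds the loss pointwise by $\log(1+ke^{-R^2})+J$, with $J$ the number of negatives equal to the anchor; you instead use concavity of $\log$ over the negatives, and both work.

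The gap is in the first step of your lower bound on the infimum. Convexity of $\exp$ gives $\E\sum_i e^{a_i}\ge ke^{f'(x)^\top\mu}$. But this bounds from below the argument of the concave function $t\mapsto\log(1+e^{-R^2}t)$. For a concave function Jensen runs the wrong way: $\E\log(1+e^{-R^2}S)\le\log(1+e^{-R^2}\E S)$, which is exactly the direction you used in your upper bound. So your three steps, as justified, do not combine into $\E_{x^-}\log\bigl(1+\sum_i e^{a_i-R^2}\bigr)\ge\log\bigl(1+ke^{f'(x)^\top\mu-R^2}\bigr)$.

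The missing fact is that the whole map $(a_1,\ldots,a_k)\mapsto\log\bigl(1+\sum_ie^{a_i-R^2}\bigr)$ is jointly convex. It is a log-sum-exp of affine functions, with the constant term $1=e^0$. Jensen applied to this function over the i.i.d.\ negatives, conditionally on $x$, gives the displayed inequality directly, and this is how the paper argues. With that replacement, your remaining steps go through and the proof is complete: bound $\|f'(x)\|^2\le R^2$, then apply Jensen over the anchor using $\E_xf'(x)^\top\mu=\|\mu\|^2\ge0$.
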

Since $\log\frac{e^{R^2}+k}{1+k}\le\frac{e^{R^2}-1}{1+k}$, the bound in Theorem \ref{thm:bound-for-contrastive-losses} becomes tight in this example as $k$ grows, with the gap $\Lsup(\gD_{\gT},Wf,h)-\Lcon(f)$ approaching $\eta\cdot\Disc(f)$.

\begin{figure}[!t]
    \centering
    \subfigure[CIFAR-10]{
    \label{fig:cifar10}
    \includegraphics[width=1.8in]{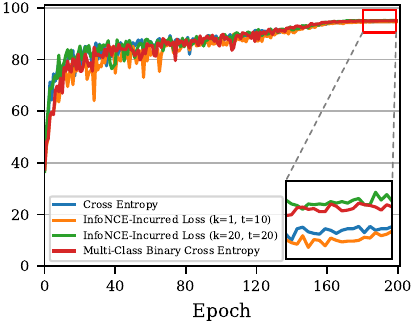}
    }
    \subfigure[CIFAR-100]{
    \label{fig:cifar100}
    \includegraphics[width=1.8in]{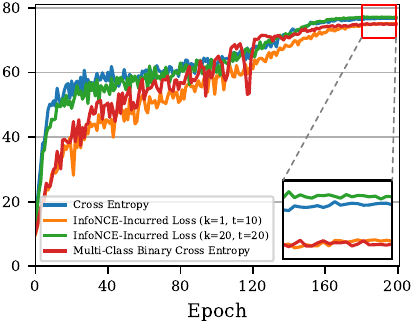}
    }
    \subfigure[ImageNet-100]{
    \label{fig:imagenet100}
    \includegraphics[width=1.8in]{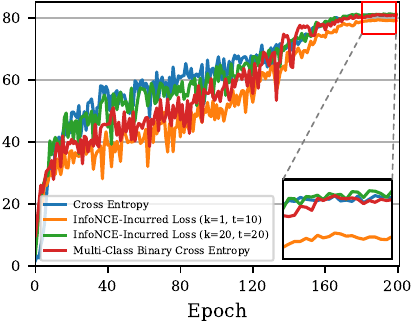}
    }
   
    \caption{Illustration of validation accuracies (\%) on CIFAR-10, CIFAR-100, and ImageNet-100 using cross entropy, InfoNCE-incurred loss as per Theorem \ref{thm:bound-for-contrastive-losses}, and multi-class binary cross entropy defined in Equation \ref{eq:multi-class-binary-ce}. The depicted results demonstrate that these three loss functions deliver comparable performance.}
    \label{fig:ce-infonce-mbce}

\end{figure}

\begin{table}[t]
\centering
\caption{Validation accuracies (\%) on the CIFAR-10 and CIFAR-100 datasets. These results are obtained through supervised learning using cross entropy (CE) or InfoNCE-incurred loss (with the number of negatives $k$ and the number of sampling repetitions $t$). As is evident from the results, the InfoNCE-incurred supervised loss, when optimized with suitable values of $k$ and $t$, can yield performance that is on par with or even surpasses that of the traditional CE approach.}
\label{tab:incurred-CE}

\begin{sc}
\begin{tabular}{c|c|cccc}
\toprule
\multirow{2}{*}{\textbf{Dataset}}
& \multirow{2}{*}{\textbf{Method}}
& \multicolumn{4}{c}{\textbf{Sampling repetitions $t$}} \\
& & $10$ & $20$ & $40$ & $100$ \\
\midrule\midrule
\multirow{6}{*}{\textbf{CIFAR-10}}
& CE      & \multicolumn{4}{c}{94.79} \\
\cmidrule{2-6}
& $k=1$   & 94.79 & 94.93 & 94.83 & 94.91 \\
& $k=10$  & 95.20 & 94.87 & 95.05 & 94.71 \\
& $k=20$  & 94.81 & 94.82 & 94.85 & 94.65 \\
& $k=40$  & 94.76 & 94.73 & 94.79 & 94.73 \\
& $k=100$ & 94.49 & 95.07 & 94.66 & 94.70 \\
\midrule
\multirow{6}{*}{\textbf{CIFAR-100}}
& CE      & \multicolumn{4}{c}{76.37} \\
\cmidrule{2-6}
& $k=1$   & 74.68 & 75.03 & 75.00 & 74.87 \\
& $k=10$  & 76.79 & 76.76 & 76.79 & 76.84 \\
& $k=20$  & 77.18 & 77.53 & 76.74 & 76.82 \\
& $k=40$  & 76.77 & 77.27 & 76.86 & 76.91 \\
& $k=100$ & 76.56 & 76.64 & 76.76 & 76.77 \\
\bottomrule
\end{tabular}
\end{sc}
\end{table}

\paragraph{Contrastive Learning-Incurred Supervised Loss.}
The supervised loss $h$ incurred by contrastive learning involves an expectation over $k$ i.i.d. class draws $\{c_i\}_{i=1}^k$ from $\rT$ and may suffer from class collisions due to the presence of the target-class weight among the negatives. Moreover, the specific form of $h$ is intricately tied to factors such as the number of negatives $k$ and the latent class distribution $\rT$, posing practical challenges in its application. Nevertheless, the incurred loss $h$ can be effective for classification problems. For instance, the InfoNCE-incurred supervised loss can be reformulated as $h(Wf(x),y)=\sum_{j=0}^k\binom{k}{j}(\rT(y))^j(1-\rT(y))^{k-j}\E_{\{c_i\}_{i=1}^{k-j}\iidsim\rT^{\not y}}\log[1+j+\sum_{i=1}^{k-j} e^{f(x)^\top(w_{c_i}-w_y)}]$ (where $\rT^{\not y}$ denotes the class distribution excluding class $y$), which explicitly accounts for class collision. This formulation bears similarity to the cross entropy (CE) loss $\log\left[1+\sum_{c\in\gT,c\neq y}e^{f(x)^\top(w_c-w_y)}\right]$. In Figure \ref{fig:ce-infonce-mbce} and Table \ref{tab:incurred-CE}, we demonstrate that the InfoNCE-incurred loss (we randomly sample $k$ classifier weights $t$ times) can yield comparable results to CE. Additionally, we prove below that they share the same global minimizers under mild conditions. Further analysis of the incurred loss will be developed in Section \ref{sec:incurred-losses}.

\begin{lemma}
\label{lemma:infonce-ce}
Consider the class-balanced distribution and the InfoNCE-incurred supervised risk $\gH(f)=\E_{(x,y)\sim\gD_\gT}\E_{\{c_i\}_{i=1}^k\iidsim\rT}\log\left[1+\sum_{i=1}^{k}e^{f(x)^\top(w_{c_i}-w_y)}\right]$. If $d\ge|\gT|-1$, then $\gH(f)$ and the CE loss $\E_{(x,y)\sim\gD_\gT}\log\left[1+\sum_{c\neq y}e^{f(x)^\top(w_c-w_{y})}\right]$ share the same global minimizers over all representation functions satisfying $\|f(x)\|\le R$.
\end{lemma}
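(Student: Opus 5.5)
The plan is to show that both risks have the same lower bound, reached exactly at neural collapse onto a simplex equiangular tight frame (ETF). Here the classifier is the mean classifier $w_c=\E_{x\sim\gD^c}f(x)$ and $\rT$ is uniform with $K=|\gT|$. Concretely, I will show that the minimizers of each risk are precisely the $f$ with $f(x)=w_y$ almost surely, where $\{w_c\}_{c\in\gT}$ is a simplex ETF of norm $R$. Since both risks share this characterization, they share the same set of global minimizers.

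\emph{Step 1 (Jensen within a class).} Fix $y$, the classes $c_1,\dots,c_k$, and the means $\{w_c\}$. The map $z\mapsto\log[1+\sum_i e^{z^\top(w_{c_i}-w_y)}]$ is convex in $z$, as a log-sum-exp of affine functions. The same holds for the CE integrand. Averaging over $x\sim\gD^y$ and using $\E_{x\sim\gD^y}f(x)=w_y$, I get
\begin{equation*}
\gH(f)\ge \E_{y}\E_{\{c_i\}}\log\Bigl[1+\sum\nolimits_{i=1}^k e^{w_y^\top(w_{c_i}-w_y)}\Bigr],
\end{equation*}
and an analogous inequality for CE. Both right-hand sides depend only on the class means.

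\emph{Step 2 (optimizing the means).} For CE I apply Jensen to the convex function $a\mapsto\log(1+\sum_{c\ne y}e^{a_c})$, jointly over $y$ and the entries. This gives the bound $\log(1+(K-1)e^{m})$ with
\begin{equation*}
m=\frac{1}{K(K-1)}\sum_{y}\sum_{c\ne y}w_y^\top(w_c-w_y)=\frac{\|\sum_c w_c\|^2-K\sum_c\|w_c\|^2}{K(K-1)}\ge-\frac{KR^2}{K-1}.
\end{equation*}
Equality requires $\sum_c w_c=0$, $\|w_c\|=R$ for all $c$, and all pairwise inner products equal. This is exactly a simplex ETF, and it exists because $d\ge K-1$.

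For $\gH$ the collisions $c_i=y$ contribute fixed $e^0$ terms, so a single Jensen step is not tight. I will therefore use the decomposition by the number of collisions $j$ already displayed in the paper. Conditioned on $j$, the remaining $k-j$ classes are i.i.d. uniform on $\gT\setminus\{y\}$. Applying Jensen jointly over $y$ and these classes gives the bound $\log(1+j+(k-j)e^{m})$, with the same $m$ as for CE. This bound is increasing in $m$, so every term, and hence the binomial mixture, is minimized by the same ETF. Equality holds there, because under the ETF all $w_y^\top(w_c-w_y)$ with $c\neq y$ are equal.

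\emph{Step 3 (characterizing equality).} This is the step I expect to be the main obstacle: showing that no other $f$ attains the common infimum of either loss. Equality in Step 2 forces the ETF. Equality in Step 1 needs more care, because log-sum-exp is not strictly convex along directions orthogonal to all $w_c-w_y$. I will sidestep this with the norm constraint. Since $\|w_y\|=R$ is the mean of $f(x)$ with $\|f(x)\|\le R$, strict convexity of the squared norm, $\E\|f(x)\|^2\ge\|w_y\|^2=R^2$, forces $f(x)=w_y$ almost surely on each class. Conversely, every such collapsed ETF configuration attains both lower bounds. So the two sets of global minimizers coincide: they are exactly the collapsed simplex-ETF representations, unique up to rotation.
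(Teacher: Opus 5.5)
Your proposal is correct and is essentially the paper's argument. Jensen over the anchor and, after conditioning on the number $j$ of target-class collisions, over the remaining negative classes reduces both risks to a function strictly increasing in $m=-\frac{K}{K-1}\bigl(\frac{1}{K}\sum_c\|w_c\|^2-\|\bar{w}\|^2\bigr)$; the norm constraint then forces $\bar{w}=0$, $\|w_c\|=R$ and variability collapse, and strict convexity of the Jensen step forces equal pairwise inner products. The only difference is that the paper reads off the equal inner products from the scalar Jensen step on the event $j=k-1$, whereas you implicitly use strict convexity of $z\mapsto\log(1+j+\sum_i e^{z_i})$ for every $j<k$, which holds because of the constant $1+j>0$ and is worth stating explicitly.
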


While a variety of works have studied the capacity of contrastive learning to recover meaningful representations or latent clusters under certain data assumptions, we put forth a generalization bound for Lipschitz convex losses, which holds without any additional assumptions.
\begin{theorem}[Generalization Bound]
\label{thm:generalization-bound-for-contrastive-learning}
Under the conditions in Theorem \ref{thm:bound-for-contrastive-losses}, assume further that $\ell$ is $\eta\zeta_1(R)$-Lipschitz w.r.t. $(f(x),f(x^+),\{f(x_i^-)\}_{i=1}^k)$ and has a range of length at most $\zeta_2(R)$. With probability at least $1-\delta$, for all $f\in\gF$, there exist a linear classifier $W\in \sR^{|\gT|\times d}$ and a supervised loss $h$ such that
\begin{equation}
\Lsup(\gD_{\gT},W\widehat{f},h)\le \Lcon(f)+\eta\cdot\Disc(\widehat{f})+\Gen_{M},
\end{equation}
where $\Gen_M=O\left(\frac{\eta\zeta_1(R)\gR_{\gS}(\gF)}{M}+\zeta_2(R)\sqrt{\frac{\log(2/\delta)}{M}}\right)$ and $\gR_{\gS}(\gF)$ is the (unnormalized) empirical Rademacher complexity of $\gF$ on $M$ i.i.d. training tuples $\gS$.
\end{theorem}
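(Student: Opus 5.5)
The argument chains Theorem \ref{thm:bound-for-contrastive-losses} with a standard uniform-convergence bound for the empirical minimizer $\widehat{f}$. First, I apply Theorem \ref{thm:bound-for-contrastive-losses} to the fixed function $\widehat{f}\in\gF$. It supplies the mean classifier $W$ with rows $w_c=\E_{x\sim\gD^c}\widehat{f}(x)$ and the incurred loss $h$ such that
\begin{equation*}
\Lsup(\gD_{\gT},W\widehat{f},h)\le \Lcon(\widehat{f})+\eta\cdot\Disc(\widehat{f}).
\end{equation*}
No randomness enters this step; it holds deterministically for whatever $\widehat{f}$ the sample produces. What remains is to compare $\Lcon(\widehat{f})$ with $\Lcon(f)$ for an arbitrary $f\in\gF$. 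I use the usual decomposition
\begin{equation*}
\Lcon(\widehat{f})-\Lcon(f)=\bigl[\Lcon(\widehat{f})-\widehat{L}_{\mathrm{con}}(\widehat{f})\bigr]+\bigl[\widehat{L}_{\mathrm{con}}(\widehat{f})-\widehat{L}_{\mathrm{con}}(f)\bigr]+\bigl[\widehat{L}_{\mathrm{con}}(f)-\Lcon(f)\bigr].
\end{equation*}
The middle term is at most $0$ because $\widehat{f}$ minimizes the empirical loss. The first and last terms are each at most $\sup_{g\in\gF}|\Lcon(g)-\widehat{L}_{\mathrm{con}}(g)|$.

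Next, I bound this supremum. The loss values lie in an interval of length at most $\zeta_2(R)$, and the $M$ tuples are i.i.d. Symmetrization together with McDiarmid's inequality therefore gives, with probability at least $1-\delta$,
\begin{equation*}
\sup_{g\in\gF}\bigl|\Lcon(g)-\widehat{L}_{\mathrm{con}}(g)\bigr|\le \frac{2}{M}\,\E_{\sigma}\sup_{g\in\gF}\sum_{j=1}^M\sigma_j\,\ell\bigl(g,x_j,x_j^+,\{x_{ji}^-\}_{i=1}^k\bigr)+O\Bigl(\zeta_2(R)\sqrt{\tfrac{\log(2/\delta)}{M}}\Bigr).
\end{equation*}
Here $\delta$ is split between the two tails, which produces the $\log(2/\delta)$.

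The main obstacle is the Rademacher term. The loss takes the vector-valued input $(g(x_j),g(x_j^+),\{g(x_{ji}^-)\}_{i=1}^k)\in\sR^{(k+2)d}$, so the scalar contraction lemma does not apply directly. I would use the vector-contraction inequality of Maurer for $\eta\zeta_1(R)$-Lipschitz functions of a vector argument. It bounds the term by $\sqrt2\,\eta\zeta_1(R)\,\E\sup_{g}\sum_{j}\sum_{t}\sigma_{jt}\,g_t(\cdot)$, where $t$ ranges over all coordinates of the $(k+2)d$-dimensional stacked representation. I then identify this quantity with the (unnormalized) empirical Rademacher complexity $\gR_{\gS}(\gF)$ of $\gF$ evaluated on the tuples $\gS$, with coordinates and tuple members absorbed into the definition. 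Any leftover constant factors involving $k$ or $d$ go into the $O(\cdot)$ or into $\zeta_1$. This yields the term $\eta\zeta_1(R)\gR_{\gS}(\gF)/M$.

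Finally, I combine the pieces. On the high-probability event, for every $f\in\gF$ simultaneously, $\Lcon(\widehat{f})\le\Lcon(f)+\Gen_M$. Substituting this into the bound from the first step gives $\Lsup(\gD_{\gT},W\widehat{f},h)\le \Lcon(f)+\eta\cdot\Disc(\widehat{f})+\Gen_M$, with $W$ and $h$ the ones produced by Theorem \ref{thm:bound-for-contrastive-losses} for $\widehat{f}$.
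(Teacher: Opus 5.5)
Your proposal is correct and follows essentially the same route as the paper. The paper also applies Theorem \ref{thm:bound-for-contrastive-losses} deterministically to $\widehat{f}$ with the mean classifier, then bounds $\Lcon(\widehat{f})-\Lcon(f)$ by twice a uniform deviation, controlled through the classical empirical Rademacher bound (applied in both directions with a union bound) and Maurer's vector-contraction inequality, where $\gR_{\gS}(\gF)$ is defined directly over all $(k+2)dM$ coordinates of the stacked representations. With that definition no $k$- or $d$-dependent factors are left over, so you do not need your step of absorbing them into the $O(\cdot)$ or into $\zeta_1$.
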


\paragraph{Remark.}
Theorem \ref{thm:generalization-bound-for-contrastive-learning} supports the observation of \citet{saunshi2022understanding} that achieving a low contrastive loss alone may not guarantee good downstream performance, as exemplified by Proposition \ref{prop:tightness}. A low contrastive loss and a small internal discrepancy provide sufficient control of the incurred supervised risk. This theorem also highlights several insights: (i) The inductive biases of the function class and optimization algorithm should lead to the learned model exhibiting both a low contrastive loss $\Lcon(\widehat{f})$ and a small internal discrepancy $\Disc(\widehat{f})$. Given that $\Disc(f)\le \Disc_a(f)+\Disc_i(f)$, improving positive-pair alignment and reducing intra-class discrepancy tighten this upper bound. (ii) In alignment with classical learning theory principles, an adequately large number $M$ of independent training tuples aids in reducing $\Gen_M$ when the complexity term grows sublinearly in $M$. This perspective may help explain the empirical effectiveness of rich data augmentation in contrastive learning \citep{chen2020simple, huang2023towards}.
% For InfoNCE with $k=1$, Equation \ref{eq:infonce-classification-bound} in Appendix \ref{appendix:more-results} further bounds the classification error of the mean classifier under the original class prior.

The incurred supervised loss further controls the classification error of the linear classifier and its balanced error that weights all classes equally:
\begin{corollary}
\label{cor:classification-error}
Let $h$ be the InfoNCE-incurred loss in Theorem \ref{thm:bound-for-contrastive-losses}. For $\{c_i\}_{i=1}^k\iidsim\rT$, write $k_c=\sum_{i=1}^k\mathbb{I}(c_i=c)$ for $c\in\gT$, and define $h_0(y)=\E\log(1+k_y)$ and $\omega_k(y)=\min_{c\neq y}\E\log\left(1+\frac{k_c}{1+k_y}\right)$ for $y\in\gT$. Then for any $f$ and $W$, with any fixed rule for breaking ties, the prediction $\hat{y}(x)=\argmax_{c\in\gT}w_c^\top f(x)$ satisfies
\begin{equation}
    \begin{aligned}
        \sP_{(x,y)\sim\gD_{\gT}}\left[\hat{y}(x)\neq y\right]
        &\le\frac{\Lsup(\gD_{\gT},Wf,h)-\E_{y\sim\rT}h_0(y)}{\min_{y\in\gT}\omega_k(y)},\\
        \frac{1}{|\gT|}\sum_{y\in\gT}\sP_{x\sim\gD^y}\left[\hat{y}(x)\neq y\right]
        &\le\frac{\Lsup(\gD_{\gT},Wf,h)-\E_{y\sim\rT}h_0(y)}{|\gT|\min_{y\in\gT}\rT(y)\omega_k(y)}.
    \end{aligned}
\end{equation}
With probability at least $1-\delta$, Theorem \ref{thm:generalization-bound-for-contrastive-learning} further bounds both numerators for the mean classifier of $\widehat{f}$ by $\Lcon(f)+R\Disc(\widehat{f})+\Gen_M-\E_{y\sim\rT}h_0(y)$ for all $f\in\gF$.
\end{corollary}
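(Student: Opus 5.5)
The idea is to lower-bound the incurred loss pointwise by $h_0(y)$ plus a penalty $\omega_k(y)$ whenever the prediction is wrong, and then take expectations. Fix $(x,y)$, and write $s_c=w_c^\top f(x)$ and $z_c=e^{s_c-s_y}$. The incurred loss is then
\begin{equation*}
h=\E\log\Bigl(1+\sum_{c}k_c z_c\Bigr)=\E\log\Bigl(1+k_y+\sum_{c\neq y}k_c z_c\Bigr),
\end{equation*}
where the expectation is over the multinomial counts $\{k_c\}$ and we used $z_y=1$.

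\textbf{Step 1 (pointwise lower bound).} Every term in the sum is nonnegative, so $h\ge h_0(y)$ always. Suppose now that $\hat{y}(x)=c'\neq y$. Because $c'$ attains the argmax (with any tie-breaking rule), $s_{c'}\ge s_y$, and hence $z_{c'}\ge1$. Dropping the other nonnegative terms gives
\begin{equation*}
h\ge \E\log(1+k_y+k_{c'})=h_0(y)+\E\log\Bigl(1+\frac{k_{c'}}{1+k_y}\Bigr)\ge h_0(y)+\omega_k(y).
\end{equation*}
The equality uses linearity of expectation on $\log(1+k_y)+\log\bigl(1+k_{c'}/(1+k_y)\bigr)$. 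Combining the two cases,
\begin{equation*}
h(Wf(x),y)-h_0(y)\ge \omega_k(y)\,\mathbb{I}\bigl(\hat{y}(x)\neq y\bigr).
\end{equation*}

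\textbf{Step 2 (integration).} Take the expectation over $(x,y)\sim\gD_{\gT}$. For the first inequality, bound $\omega_k(y)\ge\min_{y}\omega_k(y)$ and divide. Before dividing we must check that $\omega_k(y)>0$. This holds because $\rT(c)>0$ for every class, so $k_c\ge1$ with positive probability, and the summand is nonnegative.

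For the balanced error, expand the left-hand expectation as
\begin{equation*}
\sum_y \rT(y)\,\omega_k(y)\,\sP_{x\sim\gD^y}\bigl[\hat{y}(x)\neq y\bigr]\ge \min_y\bigl(\rT(y)\omega_k(y)\bigr)\sum_y \sP_{x\sim\gD^y}\bigl[\hat{y}(x)\neq y\bigr],
\end{equation*}
then divide by $|\gT|\min_y\rT(y)\omega_k(y)$.

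\textbf{Step 3 (high-probability statement).} Apply Theorem \ref{thm:generalization-bound-for-contrastive-learning} with $\eta=R$, which is the Lipschitz constant of InfoNCE with respect to $f(x^+)$ under the norm bound. Its proof uses the mean classifier of $\widehat{f}$ and this incurred $h$. That bounds $\Lsup(\gD_{\gT},W\widehat{f},h)$, and subtracting $\E_{y\sim\rT}h_0(y)$ from both sides gives the stated bound on the numerators.

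The main obstacle is Step 1's handling of the tie-breaking and collision terms. The key observation is that the argmax gives $z_{c'}\ge1$ regardless of how ties are broken, and that $k_y$ negatives colliding with the target class contribute exactly the baseline $\log(1+k_y)$. Everything else is monotonicity of the logarithm.
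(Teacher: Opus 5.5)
Your proposal is correct and follows the paper's proof essentially step for step. You use the same pointwise bound $h(Wf(x),y)-h_0(y)\ge\omega_k(y)\,\mathbb{I}(\hat{y}(x)\neq y)$, obtained by grouping the negatives by class and keeping only the $k_y$ and $k_{\hat{y}(x)}$ terms; the same $\rT$-weighted integration with the two ways of pulling out a minimum; and the same appeal to Theorem~\ref{thm:generalization-bound-for-contrastive-learning} with $\eta=R$. Your explicit check that $\omega_k(y)>0$ (from $\rT(c)>0$) is a small detail the paper leaves implicit.
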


\paragraph{Extension to Supervised Contrastive Learning.}
In unsupervised contrastive learning, positive pairs usually consist of two randomly augmented views, whereas in supervised contrastive learning, positive pairs are drawn i.i.d. from the same class\footnote{For the purpose of simplifying our analysis, we omit positive pairs arising from the augmentations of the same sample.}, embodying the principle of class-conditional independence between positive pairs (i.e., $\Dpos(x,x^+|c)=\gD^c(x)\gD^c(x^+)$, and $\Disc(f)=0$). Consequently, all results about random negative samples presented by \citet{saunshi2019theoretical} are relevant to the present setting.

In the supervised scenario, negative samples can instead be drawn from classes other than that of the anchor, that is, $\Dneg(x^-|x)=\gD^{\not y_{x}}(x^-)$. Then supervised contrastive learning aims at minimizing
\begin{equation}
    \label{eq:supervised-contrastive-loss}
\begin{aligned}
&\Lconsup(f,\gD;\ell)=\E_{\substack{x,x^+\sim \Dpos\\\{x_i^-\}_{i=1}^k\iidsim \Dneg}}\ell\left(f,x,x^+,\{x_i^-\}_{i=1}^k\right)\\
=&\E_{c\sim\rT}\E_{\substack{x,x^+\iidsim \gD^c\\\{x_i^-\}_{i=1}^k\iidsim \gD^{\not c}}}\ell\left(f,x,x^+,\{x_i^-\}_{i=1}^k\right),
\end{aligned}
\end{equation}
and its empirical counterpart is $\widehat{L}_{\mathrm{con}}^{\mathrm{sup}}(f;\ell)$. By an argument analogous to that of Theorem \ref{thm:generalization-bound-for-contrastive-learning}, we obtain the following result:

\begin{theorem}
\label{thm:bound-supervised-CL}
Under the loss conditions in Theorem \ref{thm:generalization-bound-for-contrastive-learning}, with probability at least $1-\delta$, for all $f\in\gF$, there exist a linear classifier $W\in\sR^{|\gT|\times d}$ and a supervised loss $h:\sR^{|\gT|}\times\gT\rightarrow \sR$ such that
\begin{equation}
\Lsup(\gD_{\gT},W\widehat{f},h)\le \Lconsup(f,\gD;\ell)+\Gen_M,
\end{equation}
where $\widehat{f}\in\argmin_{f\in\gF}\widehat{L}_{\mathrm{con}}^{\mathrm{sup}}(f;\ell)$, and the incurred supervised loss is formulated as $h=\E_{\{c_i\}_{i=1}^k\iidsim\rho_{\gT}^{\not y}}\ell(f(x),w_y,\{w_{c_i}\}_{i=1}^k)$.
\end{theorem}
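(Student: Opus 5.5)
The plan follows the proof of Theorem \ref{thm:generalization-bound-for-contrastive-learning}, with the internal discrepancy term vanishing. The first step is a population-level inequality for supervised contrastive learning: for every $f\in\gF$, with the mean classifier $w_c=\E_{x\sim\gD^c}f(x)$ and the incurred loss $h=\E_{\{c_i\}\iidsim\rT^{\not y}}\ell(f(x),w_y,\{w_{c_i}\}_{i=1}^k)$, we show $\Lsup(\gD_{\gT},Wf,h)\le\Lconsup(f,\gD;\ell)$.

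To see this, rewrite $\Lconsup$ using \eqref{eq:supervised-contrastive-loss}. The pair $x,x^+$ is drawn i.i.d. from $\gD^c$. Each negative $x_i^-\sim\gD^{\not c}$ is a mixture: first draw $c_i\sim\rT^{\not c}$, then $x_i^-\sim\gD^{c_i}$. Condition on $x$ and on the labels $\{c_i\}$.
\begin{itemize}
\item Since $\ell$ is convex in $f(x^+)$, Jensen's inequality over $x^+\sim\gD^c$ lets us replace $f(x^+)$ by $\E_{x^+\sim\gD^c}f(x^+)=w_c$.
\item Since $\ell$ is convex in each $f(x_i^-)$ and the negatives are conditionally independent given their labels, applying Jensen coordinatewise lets us replace each $f(x_i^-)$ by $w_{c_i}$.
\end{itemize}
Taking expectations over $x\sim\gD^c$ and $c\sim\rT$ gives exactly $\E_{(x,y)}h(\cdot)$, which proves the inequality. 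This is Theorem \ref{thm:bound-for-contrastive-losses} with $\Disc(f)=0$: conditional independence makes the conditional positive mean equal to the class mean, so no Lipschitz correction in $f(x^+)$ is needed.

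The second step is the uniform-convergence bound. Apply the population inequality to $\widehat{f}$ and decompose
\[
\Lconsup(\widehat{f})-\Lconsup(f)=\bigl[\Lconsup(\widehat{f})-\widehat{L}_{\mathrm{con}}^{\mathrm{sup}}(\widehat{f})\bigr]+\bigl[\widehat{L}_{\mathrm{con}}^{\mathrm{sup}}(\widehat{f})-\widehat{L}_{\mathrm{con}}^{\mathrm{sup}}(f)\bigr]+\bigl[\widehat{L}_{\mathrm{con}}^{\mathrm{sup}}(f)-\Lconsup(f)\bigr].
\]
The middle bracket is $\le0$ because $\widehat{f}$ is the empirical minimizer.

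The first bracket is controlled by a Rademacher uniform bound over the class $\{\ell\circ f\}$. The loss has range of length $\zeta_2(R)$, so McDiarmid's inequality gives the $\zeta_2(R)\sqrt{\log(2/\delta)/M}$ term. For the complexity term, use the vector-contraction inequality (Maurer) with the $\eta\zeta_1(R)$-Lipschitz property of $\ell$ in $(f(x),f(x^+),\{f(x_i^-)\})$. This reduces the complexity to $O(\eta\zeta_1(R)\gR_{\gS}(\gF)/M)$, where $\gR_{\gS}(\gF)$ is taken over the $(k+2)M$ evaluation points of the tuples, absorbed into the same notation. The third bracket involves the fixed $f$, so Hoeffding's inequality suffices. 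A union bound with $\delta/2$ for each event gives $\Gen_M$.

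The only real subtlety is the sampling scheme. The training tuples must be drawn i.i.d. from the supervised scheme, i.e., $c\sim\rT$, $x,x^+\sim\gD^c$, and $x^-_i\sim\gD^{\not c}$, so that the $M$ tuples are independent and the uniform bound applies.

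The step I expect to need the most care is the Jensen step on the negatives. The incurred loss must use $\rT^{\not y}$, not $\rT$. This requires writing $\gD^{\not c}=\sum_{c'\neq c}\rT^{\not c}(c')\gD^{c'}$ with $\rT^{\not c}(c')=\rT(c')/(1-\rT(c))$, so that averaging $f(x_i^-)$ within each drawn label produces $w_{c_i}$. Convexity must be applied one negative at a time, holding the others fixed, which is valid by the separate convexity assumption.
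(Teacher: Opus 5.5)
Your overall route is the paper's. The Jensen step replaces $f(x^+)$ by $w_y$ using class-conditional independence, and replaces each $f(x_i^-)$ by $w_{c_i}$ after conditioning on its class drawn from $\rT^{\not y}$. This gives $\Lsup(\gD_{\gT},W\widehat{f},h)\le\Lconsup(\widehat{f},\gD;\ell)$. The Rademacher/vector-contraction bound together with empirical optimality then gives the comparison with $\Lconsup(f,\gD;\ell)+\Gen_M$. The Jensen part, including the mixture representation of $\gD^{\not c}$, is correct as you wrote it.

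One step needs repair: the handling of the third bracket. The theorem asserts a single event of probability at least $1-\delta$ on which the bound holds simultaneously for \emph{all} $f\in\gF$. Hoeffding's inequality applied to a fixed comparator $f$ only gives an event that depends on $f$, and you cannot take a union bound over uncountably many $f$. So your argument as written proves ``for each $f$, with probability at least $1-\delta$,'' which is weaker than the statement.

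There are two standard fixes, and neither changes the order of $\Gen_M$.
\begin{enumerate}
\item Control the third bracket with the same two-sided uniform bound. The paper does this by applying the empirical Rademacher bound to both $\gG$ and $1-\gG$, which yields $\sup_{f\in\gF}|\Lconsup(f,\gD;\ell)-\widehat{L}_{\mathrm{con}}^{\mathrm{sup}}(f;\ell)|\le\Gamma_M$ on one event.
\item Keep Hoeffding, but apply it only to a population minimizer $f^\star\in\gF$. This suffices because $\Lconsup(\widehat{f},\gD;\ell)\le\inf_{f'\in\gF}\Lconsup(f',\gD;\ell)+\Gen_M$ already implies the bound for every $f$. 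If no minimizer exists, apply Hoeffding to near-minimizers $f_n$ instead. The events $\{\Lconsup(\widehat{f},\gD;\ell)\le\inf_{f'\in\gF}\Lconsup(f',\gD;\ell)+\eps_n+\Gen_M\}$ each have probability at least $1-\delta$ and decrease as $\eps_n\downarrow0$. Continuity from above then gives the limiting event with probability at least $1-\delta$.
\end{enumerate}
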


\subsection{The Relaxed K-Means Loss on Representations}
As mentioned above, Theorem \ref{thm:bound-for-contrastive-losses} shows that the population supervised loss is bounded by the contrastive loss $\Lcon(f)$ and the internal discrepancy $\Disc(f)$. However, the generalization bound in Theorem \ref{thm:generalization-bound-for-contrastive-learning} is limited by $\Disc(\widehat{f})$, which depends on the empirical risk minimizer $\widehat{f}$. Since $\Disc(f)\le\Disc_a(f)+\Disc_i(f)$ and the positive-pair alignment $\Disc_a(f)$ is already encouraged by contrastive learning \citep{wang2020understanding}, we may also consider minimizing the empirical counterpart of the intra-class discrepancy $\Disc_i(f)$:
\begin{equation}
\widehat{\Disc}_i(f)=\frac{1}{N}\sum_{n=1}^N\left\|f(x_n)-\widehat{\mu}_{y_{x_n}}\right\|,
\end{equation}
where $y_{x_n}$ denotes the label of sample $x_n$, $\widehat{\mu}_{c}=\frac{1}{N_c}\sum_{n=1}^Nf(x_n)\mathbb{I}(y_{x_n}=c)$, and $N_c$ is the number of samples within class $c$.

Combining Theorem \ref{thm:bound-for-contrastive-losses} with a uniform bound on the estimation error of $\widehat{\Disc}_i$ over $\gF$, we can derive the following proposition:
\begin{proposition}
\label{prop:regularized-bound}
Under the notation and conditions in Theorem \ref{thm:generalization-bound-for-contrastive-learning}, with probability at least $1-\delta$, any $\widehat{f}\in\argmin_{f\in\gF}\widehat{L}_{\mathrm{con}}(f)+\eta\cdot\widehat{\Disc}_i(f)$ satisfies
\begin{equation*}
\Lsup(\gD_{\gT},W\widehat{f},h)\le\inf_{f\in\gF}\left\{\Lcon(f)+\eta\cdot\Disc_i(f)\right\}+\eta\cdot\Disc_a(\widehat{f})+\Gen_M+\eta\cdot\Gen_N,
\end{equation*}
where $h$ is the incurred supervised loss in Theorem \ref{thm:bound-for-contrastive-losses}, $\Gen_N=O\left(\frac{\gR_{\gS'}(\gF)}{N}+R\sqrt{\frac{d|\gT|+\log(2/\delta)}{N}}\right)$, and $\gR_{\gS'}(\gF)$ is the (unnormalized) empirical Rademacher complexity of $\gF$ on $\gS'=\{x_n\}_{n=1}^N$.
\end{proposition}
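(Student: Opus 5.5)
We follow the standard excess-risk decomposition, with the regularized objective $\widehat{J}(f)=\widehat{L}_{\mathrm{con}}(f)+\eta\cdot\widehat{\Disc}_i(f)$ in place of the plain empirical contrastive loss. First apply Theorem \ref{thm:bound-for-contrastive-losses} to $\widehat{f}$ and split the internal discrepancy as $\Disc(\widehat{f})\le\Disc_a(\widehat{f})+\Disc_i(\widehat{f})$. With $W$ the mean classifier of $\widehat{f}$ and $h$ the incurred loss, this gives
\begin{equation*}
\Lsup(\gD_{\gT},W\widehat{f},h)\le\Lcon(\widehat{f})+\eta\cdot\Disc_i(\widehat{f})+\eta\cdot\Disc_a(\widehat{f}).
\end{equation*}
The term $\eta\cdot\Disc_a(\widehat{f})$ is carried to the final bound unchanged. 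It remains to compare $J(\widehat{f})=\Lcon(\widehat{f})+\eta\Disc_i(\widehat{f})$ with $\inf_{f}J(f)$.

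For any $f\in\gF$ we write
\begin{equation*}
J(\widehat{f})-J(f)=[J(\widehat{f})-\widehat{J}(\widehat{f})]+[\widehat{J}(\widehat{f})-\widehat{J}(f)]+[\widehat{J}(f)-J(f)].
\end{equation*}
The middle bracket is nonpositive because $\widehat{f}$ minimizes $\widehat{J}$. The outer brackets are at most $2\sup_{f\in\gF}|\Lcon(f)-\widehat{L}_{\mathrm{con}}(f)|+2\eta\sup_{f\in\gF}|\Disc_i(f)-\widehat{\Disc}_i(f)|$. The first supremum is exactly the uniform deviation already controlled in the proof of Theorem \ref{thm:generalization-bound-for-contrastive-learning}: vector-contraction (Maurer) with the $\eta\zeta_1(R)$-Lipschitz constant, then McDiarmid with the range $\zeta_2(R)$. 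It is at most $\Gen_M$ with probability $1-\delta/2$. We split $\delta$ between the two events and take a union bound, absorbing the constants into the $O(\cdot)$. Taking the infimum over $f$ then yields the stated inequality.

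The main obstacle is the uniform bound on $|\Disc_i(f)-\widehat{\Disc}_i(f)|$, since $\widehat{\Disc}_i$ is not an i.i.d. average: it involves the estimated means $\widehat{\mu}_c$ (and the empirical class frequencies $N_c/N$, which replace $\rT$). We introduce the intermediate quantity $\widetilde{\Disc}_i(f)=\frac1N\sum_n\|f(x_n)-\mu_{y_{x_n}}(f)\|$ with the true means $\mu_c(f)=\E_{x\sim\gD^c}f(x)$. By the triangle inequality,
\begin{equation*}
|\widehat{\Disc}_i-\widetilde{\Disc}_i|\le\sum_c\frac{N_c}{N}\|\widehat{\mu}_c-\mu_c\|.
\end{equation*}
The term $\widetilde{\Disc}_i(f)-\Disc_i(f)$ is an i.i.d. deviation of the function $x\mapsto\|f(x)-\mu_{y_x}(f)\|$, which is bounded by $2R$ and $1$-Lipschitz in $f(x)$. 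Symmetrization and vector contraction handle it: they give $O(\gR_{\gS'}(\gF)/N)$ plus an $R\sqrt{\log(1/\delta)/N}$ McDiarmid term. The dependence of the shifts $\mu_c(f)$ on $f$ can be handled by noting they lie in the $R$-ball, so they contribute at most another Rademacher term. The mean-estimation term $\sup_f\sum_c\frac{N_c}{N}\|\widehat{\mu}_c-\mu_c\|$ we would write as $\sup_f\sup_{\|u_c\|\le1}\frac1N\sum_n u_{y_{x_n}}^\top(f(x_n)-\mu_{y_{x_n}})$. This is a supremum of an empirical process indexed by $f$ and by $|\gT|$ unit vectors in $\sR^d$. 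A covering argument over the $u_c$ (net size $\exp(O(d|\gT|))$) combined with Rademacher bounds in $f$ gives $O(\gR_{\gS'}(\gF)/N+R\sqrt{(d|\gT|+\log(2/\delta))/N})$. This is precisely where the $d|\gT|$ in $\Gen_N$ arises. Summing these contributions bounds the second supremum by $\Gen_N$ with probability $1-\delta/2$, completing the proof.
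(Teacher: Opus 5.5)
Your proposal is correct and follows the paper's proof essentially step by step. The shared steps are: Theorem~\ref{thm:bound-for-contrastive-losses} applied to $\widehat{f}$ with $\Disc\le\Disc_a+\Disc_i$, the three-term excess-risk decomposition for the regularized objective, reuse of the contrastive uniform deviation, the intermediate $\widetilde{\Disc}_i$ with true class means (whose shifts in the $R$-ball add an $R\sqrt{d|\gT|N}$ Rademacher term), and the dual-norm representation of the mean-estimation error as a supremum over unit vectors $u_c$. The only difference is how you control that supremum over the $u_c$. You use a constant-resolution net and a union bound, which works because the supremum is the $\ell_1(\ell_2)$-type norm $\frac{1}{N}\sum_c\|\sum_{n\in I_c}(f(x_n)-\mu_c)\|$, so the discretization error is absorbed multiplicatively. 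The paper instead folds the $u_c$ into the function class and applies vector contraction to the $\sqrt{2}$-Lipschitz bilinear map $(a,b)\mapsto a^\top b/(2R)$, obtaining the $2R\sqrt{d|\gT|N}$ term directly from the Rademacher complexity of the unit-ball class.
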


% Compared with Theorem \ref{thm:generalization-bound-for-contrastive-learning}, Proposition \ref{prop:regularized-bound} requires no additional assumptions and replaces $\Disc_i(\widehat{f})$ in the upper bound of $\Disc(\widehat{f})$ with $\Disc_i(f)$ of the comparator $f$, at the cost of the estimation error $\Gen_N$ of $\widehat{\Disc}_i$, while the remaining alignment term $\Disc_a(\widehat{f})$ requires no labels.

\begin{figure}[t]
    \centering
    \subfigure[CIFAR-10]{
    \label{fig:cifar10-cl}
    \includegraphics[width=1.8in]{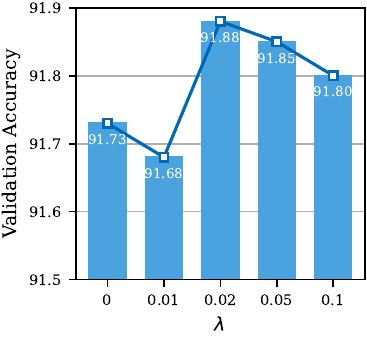}
    }
    \subfigure[CIFAR-100]{
    \label{fig:cifar100-cl}
    \includegraphics[width=1.8in]{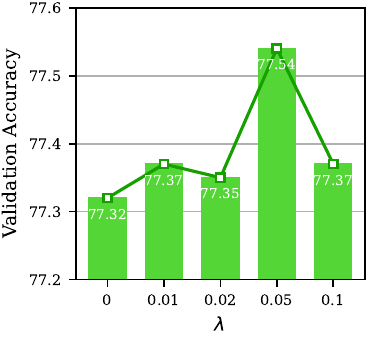}
    }
    \subfigure[ImageNet-100]{
    \label{fig:imagenet100-cl}
    \includegraphics[width=1.8in]{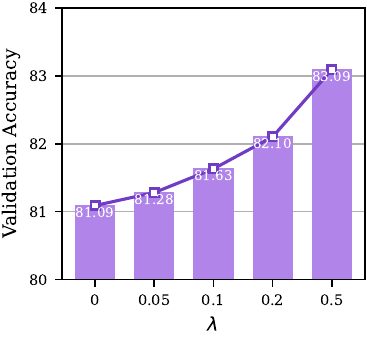}
    }

    \caption{Linear probing accuracies (\%) on CIFAR-10, CIFAR-100, and ImageNet-100 using unsupervised contrastive learning with the InfoNCE loss and the relaxed k-means loss, i.e., $\widehat{L}_{\mathrm{con}}+\lambda\cdot \widehat{L}_{\mathrm{k\text{-}means}}(f)$, where $\tau=0.5$.}
    \label{fig:relaxed-kmeans-loss}

\end{figure}

\paragraph{The Relaxed K-Means Loss.} Due to the absence of specific labels for calculating $\widehat{\Disc}_i(f)$, we resort to using a surrogate. One intuitive approach is to employ k-means clustering to obtain $C$ centroids $\{\hat{\mu}_c\}_{c\in[C]}$ from the representations $\{f(x_i)\}_{i=1}^M$, followed by computing the minimal distance between each representation and these centroids, i.e., $\frac{1}{M}\sum_{i=1}^M\min_{c\in[C]}\|f(x_i)-\hat{\mu}_c\|$. However, this loss is not only inefficient but also requires a predetermined number of clusters. Moreover, the uniformity property of contrastive learning and stochastic batch training would make clustering challenging and potentially imprecise. To alleviate this issue, we consider reducing the distance between representations within a certain distance range. That is, we treat each representation as a center and reduce the distance from neighboring representations to it. This can be straightforwardly implemented with a truncation as follows:
\begin{equation}
\label{eq:kmean-loss}
\widehat{L}_{\mathrm{k\text{-}means}}(f)=\sum_{i,j}\min\left\{\left\|f(x_i)-f(x_j)\right\|,\tau\right\},
\end{equation}
where $\tau>0$ is a threshold to filter overly large pairwise distances. Although it requires no labels, the relaxed k-means loss controls the intra-class discrepancy as follows:
\begin{proposition}
\label{prop:relaxed-kmeans}
Suppose that $\widehat{L}_{\mathrm{k\text{-}means}}(f)$ is computed on the sample $\{x_n\}_{n=1}^N$ in $\widehat{\Disc}_i(f)$, i.e., $i,j\in[N]$ in Equation \ref{eq:kmean-loss}, and that $\tau\le2R$. Write $N_{\min}=\min_{c\in\gT}N_c$ and $N_{\max}=\max_{c\in\gT}N_c$. Then for all $f\in\gF$,
\begin{equation}
    \frac{\tau N_{\min}N}{2R}\widehat{\Disc}_i(f)\le\widehat{L}_{\mathrm{k\text{-}means}}(f)\le2N_{\max}N\widehat{\Disc}_i(f)+\tau\Bigl(N^2-\sum_{c\in\gT}N_c^2\Bigr).
\end{equation}
If, in addition, $\|f(x_i)-f(x_j)\|\ge\tau$ for all $i,j\in[N]$ with $y_{x_i}\neq y_{x_j}$, then
\begin{equation}
    \frac{\tau N_{\min}N}{2R}\widehat{\Disc}_i(f)\le\widehat{L}_{\mathrm{k\text{-}means}}(f)-\tau\Bigl(N^2-\sum_{c\in\gT}N_c^2\Bigr)\le2N_{\max}N\widehat{\Disc}_i(f).
\end{equation}
\end{proposition}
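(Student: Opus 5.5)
We prove both inequalities by splitting the double sum in Equation \ref{eq:kmean-loss} into intra-class pairs ($y_{x_i}=y_{x_j}$) and inter-class pairs ($y_{x_i}\neq y_{x_j}$), writing $z_n=f(x_n)$ and $\widehat{\mu}_c$ for the empirical class means. The number of ordered inter-class pairs is exactly $N^2-\sum_{c\in\gT}N_c^2$. Each inter-class term lies in $[0,\tau]$, and equals $\tau$ under the additional separation hypothesis. Hence both displayed statements reduce to two-sided bounds on the intra-class part
\begin{equation*}
S=\sum_{c\in\gT}\sum_{i,j:\,y_{x_i}=y_{x_j}=c}\min\{\|z_i-z_j\|,\tau\}.
\end{equation*}
Concretely, we need $\frac{\tau N_{\min}N}{2R}\widehat{\Disc}_i(f)\le S\le 2N_{\max}N\widehat{\Disc}_i(f)$. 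For the first display, the lower bound then follows by dropping the nonnegative inter-class part, and the upper bound by bounding that part by $\tau$ times the pair count. For the second display, the inter-class part equals $\tau(N^2-\sum_c N_c^2)$ exactly.

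\textbf{Upper bound on $S$.} We use $\min\{a,\tau\}\le a$ and the triangle inequality $\|z_i-z_j\|\le\|z_i-\widehat{\mu}_c\|+\|z_j-\widehat{\mu}_c\|$. Summing over the $N_c^2$ ordered pairs in class $c$ gives $2N_c\sum_{i:\,y_{x_i}=c}\|z_i-\widehat{\mu}_c\|$. Bounding $N_c\le N_{\max}$ and summing over classes yields $S\le 2N_{\max}\sum_n\|z_n-\widehat{\mu}_{y_{x_n}}\|=2N_{\max}N\widehat{\Disc}_i(f)$.

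\textbf{Lower bound on $S$.} This is the main step, since the truncation must be undone. Since $\|z_i\|,\|z_j\|\le R$, we have $\|z_i-z_j\|\le 2R$. The function $a\mapsto\min\{a,\tau\}$ is concave on $[0,2R]$ and vanishes at $0$, so it lies above its chord there. Using $\tau\le 2R$, this gives $\min\{a,\tau\}\ge\frac{\tau}{2R}a$ for $a\in[0,2R]$. Therefore
\begin{equation*}
S\ge\frac{\tau}{2R}\sum_{c\in\gT}\sum_{i,j\in c}\|z_i-z_j\|.
\end{equation*}
For fixed $i$ in class $c$, convexity of the norm (Jensen over $j$) gives $\sum_{j\in c}\|z_i-z_j\|\ge N_c\|z_i-\widehat{\mu}_c\|$. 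Summing over $i$ and using $N_c\ge N_{\min}$ yields
\begin{equation*}
\sum_{i,j\in c}\|z_i-z_j\|\ge N_{\min}\sum_{i\in c}\|z_i-\widehat{\mu}_c\|.
\end{equation*}
Summing over classes gives $S\ge\frac{\tau N_{\min}}{2R}N\widehat{\Disc}_i(f)$.

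Combining the two bounds on $S$ with the inter-class accounting gives both displayed inequalities. The only points needing care are that the pairs are ordered, with diagonal terms equal to $0$ so they can be included harmlessly, and that the hypothesis $\tau\le 2R$ is exactly what makes the chord bound valid on the whole range $[0,2R]$ of pairwise distances.
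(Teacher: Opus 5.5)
Your proposal is correct and follows essentially the same route as the paper's proof. Both split the sum into within-class and between-class pairs, and both use the chord bound $\min\{a,\tau\}\ge\tau a/(2R)$ with Jensen's inequality for the lower bound and the triangle inequality through $\widehat{\mu}_c$ for the upper bound.
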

% Proposition \ref{prop:relaxed-kmeans} shows that minimizing the relaxed k-means loss reduces an upper bound on $\widehat{\Disc}_i(f)$ without labels. If pairs from different classes are at least $\tau$ apart, the loss matches $\widehat{\Disc}_i(f)$ up to constant factors and an additive term independent of $f$; otherwise, such pairs within distance $\tau$ also lower the loss as they move closer.
As shown in Figure \ref{fig:relaxed-kmeans-loss}, the incorporation of relaxed k-means loss contributes to an improvement in linear probing, especially on the ImageNet-100 dataset (the complete experimental results can be found in Appendix~\ref{sec:more-results-relaxed-kmeans}). This underscores the feasibility of minimizing the internal discrepancy in this manner. We hope that the relaxed k-means loss can stimulate new insights into unsupervised representation learning.

\section{On the Supervised Losses Incurred by Contrastive Learning}
\label{sec:incurred-losses}
In Section \ref{sec:contrastive-generalization-bound}, contrastive learning-incurred supervised losses are central to analyzing the downstream performance of contrastive learning. These losses entail an expectation over $k$ i.i.d. class draws $\{c_i\}_{i=1}^k$ and differ in form from typical classification losses. In this section, we investigate the behavior of the supervised losses incurred by contrastive learning, focusing specifically on InfoNCE and SpeCL.

\subsection{Implications for Mitigating Class Imbalance}
Real-world data often exhibits an imbalanced class distribution, primarily because some classes have an abundance of examples, while many others are associated with only a few. This class imbalance can make naive learning biased towards the majority classes, leading to poor performance and a narrow feature space for minority classes \citep{ren2020balanced}. However, several studies have underlined the potential of contrastive learning to improve imbalanced learning. For instance, \citet{kang2021exploring} observed that contrastive learning helps learn a balanced feature space, even in the presence of a highly skewed class distribution. Similarly, \citet{li2022targeted} argued that supervised contrastive learning can offer benefits to imbalanced learning. Nevertheless, neither study fully elucidates the underlying mechanisms driving the benefits of contrastive learning for imbalanced learning. In what follows, we theoretically demonstrate that \textit{contrastive learning implicitly mitigates class imbalance}.

\begin{figure}[t]
    \centering
    \subfigure[CIFAR-10-LT]{
    \label{fig:cifar10-lt}
    \includegraphics[width=1.8in]{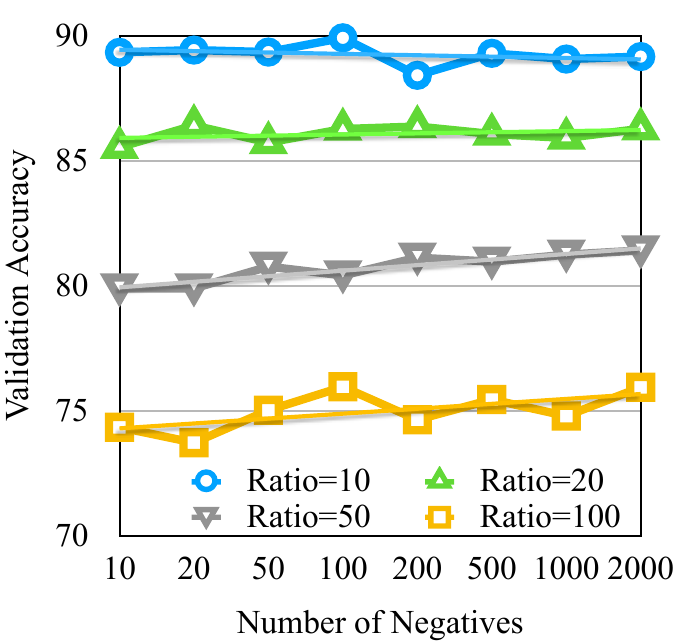}
    }
    \subfigure[CIFAR-100-LT]{
    \label{fig:cifar100-lt}
    \includegraphics[width=1.8in]{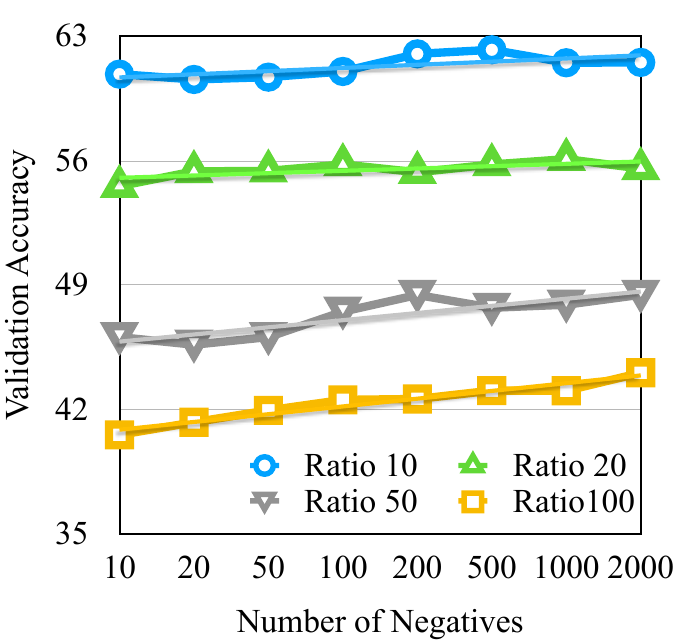}
    }

    \caption{Validation accuracies (\%) on long-tailed CIFAR-10 and long-tailed CIFAR-100 datasets using supervised losses incurred by InfoNCE with different numbers of negatives and imbalance ratios $\{10,20,50,100\}$, where the imbalance ratio denotes the sample size ratio between the largest and smallest classes.}
    \label{fig:long-tailed-cifar}

\end{figure}

\begin{table}[!t]
\centering
\caption{Validation accuracies (\%) on long-tailed CIFAR-10 and long-tailed CIFAR-100 datasets. These results are obtained through supervised learning using cross entropy (CE), balanced softmax (BS), the InfoNCE-incurred loss (with the number of negatives $k$ and the number of sampling repetitions $t=20$), and the SpeCL-incurred (\textsc{SpeCL-i}) loss, where \textsc{SpeCL-i}$^\dagger$ denotes the results with normalized features and classifier weights. It is evident that the supervised loss incurred by InfoNCE outperforms CE, and \textsc{SpeCL-i}$^\dagger$ demonstrates superior performance compared to \textsc{SpeCL-i}. Furthermore, the performance of the InfoNCE-incurred loss generally improves as the number of negatives increases, particularly in scenarios with larger imbalance ratios.}
\label{tab:imbalanced-learning}

\begin{sc}
\begin{tabular}{c|c|cccc}
\toprule
\multirow{2}{*}{\textbf{Dataset}}
& \multirow{2}{*}{\textbf{Method}}
& \multicolumn{4}{c}{\textbf{Imbalance Ratio}} \\
& & 10 & 20 & 50 & 100 \\
\midrule
\midrule
\multirow{8}{*}{\textbf{CIFAR-10-LT}}
& CE                & 87.85 & 83.26 & 75.12 & 66.19 \\
& BS                & 89.33 & 86.22 & 81.99 & 76.12 \\
\cmidrule{2-6}
& $k=10$            & 89.35 & 85.55 & 79.90 & 74.34 \\
& $k=20$            & 89.44 & 86.42 & 79.90 & 73.72 \\
% & $k=50$            & 89.36 & 85.74 & 80.77 & 75.02 \\
% & $k=100$           & 89.93 & 86.28 & 80.42 & 75.96 \\
& $k=200$           & 88.43 & 86.38 & 81.13 & 74.64 \\
% & $k=500$           & 89.31 & 86.08 & 80.97 & 75.45 \\
% & $k=1000$          & 89.08 & 85.92 & 81.25 & 74.79 \\
& $k=2000$          & 89.18 & 86.29 & 81.42 & 75.94 \\
& SpeCL-i          & 88.14 & 84.12 & 76.20 & 69.62 \\
& SpeCL-i$^\dagger$ & 88.68 & 85.26 & 78.41 & 72.95 \\
\midrule
\multirow{8}{*}{\textbf{CIFAR-100-LT}}
& CE                & 59.62 & 52.71 & 43.65 & 37.69 \\
& BS                & 62.30 & 55.80 & 48.91 & 43.80 \\
\cmidrule{2-6}
& $k=10$            & 60.86 & 54.56 & 46.07 & 40.57 \\
% & $k=50$            & 60.69 & 55.42 & 46.07 & 41.95 \\
& $k=20$            & 60.56 & 55.39 & 45.64 & 41.28 \\
% & $k=100$           & 61.01 & 55.78 & 47.50 & 42.54 \\
& $k=200$           & 62.00 & 55.34 & 48.44 & 42.59 \\
% & $k=500$           & 62.23 & 55.78 & 47.72 & 43.06 \\
% & $k=1000$          & 61.50 & 56.08 & 47.89 & 43.05 \\
& $k=2000$          & 61.52 & 55.55 & 48.45 & 44.06 \\
& SpeCL-i           & 59.34 & 51.27 & 43.99 & 38.50 \\
& SpeCL-i$^\dagger$ & 61.03 & 53.61 & 46.51 & 41.11 \\
\bottomrule
\end{tabular}
\end{sc}
\end{table}

\paragraph{InfoNCE-Incurred Loss Approximates Balanced Softmax Loss.}
We demonstrate that the loss incurred by InfoNCE closely approximates the balanced softmax loss \citep{ren2020balanced}, which accommodates class distribution shifts to mitigate class imbalance. The balanced softmax loss is expressed as $\tilde{h}=\log\left[\E_{c\sim\rT}e^{f(x)^\top(w_c-w_y)}\right]$, where we omit the term $-\log\rT(y)$ that is independent of $f$ and $W$. We also note that the InfoNCE-incurred loss $h$ in Theorem \ref{thm:generalization-bound-for-contrastive-learning} is defined as $h(x,y)=\E_{\{c_i\}_{i=1}^k\iidsim\rho_{\gT}}\log\left[1+\sum_{i=1}^k e^{f(x)^\top(w_{c_i}-w_{y})}\right]$, which can be interpreted as an approximation of $\log\left[1+k\E_{c\sim\rT}e^{f(x)^\top(w_c-w_y)}\right]$ that resembles $\tilde{h}$. Theoretically, we confirm that

\begin{theorem}
\label{thm:approximation-infonce-balanced-softmax}
For the InfoNCE-incurred loss $h$ and the balanced softmax loss $\tilde{h}$, we have $h=\tilde{h}+\log k+O(k^{-1})$, and with probability at least $1-\delta$, for all $f\in\gF$,  $\Lsup(\gD_{\gT},W\widehat{f},\tilde{h})+\log k\le \Lcon(f)+\eta\cdot\Disc(\widehat{f})+\Gen_{M}+O(k^{-1})$.
\end{theorem}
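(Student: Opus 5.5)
Fix $(x,y)$ and write $Z_i=e^{f(x)^\top(w_{c_i}-w_y)}$ for $c_i\iidsim\rT$, $i=1,\dots,k$. Set $\mu=\E_{c\sim\rT}e^{f(x)^\top(w_c-w_y)}$ and $\bar Z=\frac1k\sum_i Z_i$. Then
\begin{equation*}
h=\E\log\left[1+k\bar Z\right]=\log k+\E\log\left[\tfrac1k+\bar Z\right],
\end{equation*}
while $\tilde h=\log\mu$. The plan is to show that $\E\log[\frac1k+\bar Z]=\log\mu+O(k^{-1})$, with a constant uniform over all $x$, $f\in\gF$ and mean classifiers $W$. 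We then plug this pointwise approximation into Theorem~\ref{thm:generalization-bound-for-contrastive-learning}.

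\textbf{Uniform bounds.} Since $\|f(\cdot)\|\le R$ and $w_c=\E_{x\sim\gD^c}f(x)$ gives $\|w_c\|\le R$, every $Z_i$ lies in $[e^{-2R^2},e^{2R^2}]$. Hence $\mu$ and $\bar Z$ lie in this interval as well.

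\textbf{Expansion.} Write
\begin{equation*}
\log[\tfrac1k+\bar Z]-\log\mu=\log\Bigl(1+\frac{1/k+(\bar Z-\mu)}{\mu}\Bigr).
\end{equation*}
Set $u=(1/k+\bar Z-\mu)/\mu$, which is bounded below by $e^{-4R^2}-1>-1$. On this range, $|\log(1+u)-u|\le C_R u^2$ for a constant $C_R$ depending only on $R$. Taking expectations, the linear term gives $\E u=\frac{1}{k\mu}\le e^{2R^2}/k$, because $\E\bar Z=\mu$. The quadratic term satisfies
\begin{equation*}
\E u^2\le \frac{2}{\mu^2}\Bigl(\frac1{k^2}+\mathrm{Var}(\bar Z)\Bigr),
\end{equation*}
and independence of the $c_i$ gives $\mathrm{Var}(\bar Z)=\mathrm{Var}(Z_1)/k\le e^{4R^2}/k$. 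Together these give $|h-\tilde h-\log k|\le C'_R/k$, i.e.\ $h=\tilde h+\log k+O(k^{-1})$ with the $O$ uniform in $(x,y,f,W)$.

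\textbf{Transfer to the risk.} Take expectations over $(x,y)\sim\gD_\gT$ to get $\Lsup(\gD_\gT,W\widehat f,\tilde h)+\log k\le \Lsup(\gD_\gT,W\widehat f,h)+O(k^{-1})$. The right-hand side is then bounded by Theorem~\ref{thm:generalization-bound-for-contrastive-learning}, which holds with probability $1-\delta$ simultaneously for all $f\in\gF$ and uses the mean classifier $W$ of $\widehat f$. Its hypotheses hold for InfoNCE by Appendix~\ref{appendix:convexity-lipschitzness}.

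\textbf{Main obstacle.} The main difficulty is keeping the $O(k^{-1})$ constant uniform. Both the variance step and the second-order Taylor step need $\mu$ and the $Z_i$ bounded away from $0$ and $\infty$, and the boundedness of $f$ and $w_c$ supplies exactly this. The resulting constant is exponential in $R^2$, but it does not depend on $k$.
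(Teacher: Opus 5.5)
Your proof is correct and follows essentially the paper's argument (Lemma~\ref{lemma:approximation-error}, then substitution into Theorem~\ref{thm:generalization-bound-for-contrastive-learning}): a second-order bound on the logarithm, $\operatorname{Var}(\bar Z)=\operatorname{Var}(Z_1)/k$, and the bounded range of the $Z_i$ coming from $\|f(x)\|,\|w_c\|\le R$. The only difference is bookkeeping: you expand around $\mu$ in one step, whereas the paper expands around $\mu+1/k$ to split off a nonnegative Jensen gap at most $V_R/k$, with $V_R=(e^{R^2}-1)^2/2$, plus the deterministic term $\log(1+1/(k\mu))\le 1/(k\mu)$; this gives sharper one-sided constants than your $e^{O(R^2)}$ constant, and only the Jensen-gap direction is needed for the risk bound.
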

The theorem above demonstrates that the loss incurred by InfoNCE approximates the balanced softmax loss, with an error term $|h-\tilde{h}-\log k|$ that converges to zero at a rate of $O(k^{-1})$. In other words, this approximation becomes progressively more accurate as the number of negatives $k$ increases. As shown in Figure \ref{fig:long-tailed-cifar} and Table \ref{tab:imbalanced-learning}, the performance of the InfoNCE-incurred loss usually improves for highly skewed class distributions (imbalance ratios of 50 and 100) as $k$ increases, which suggests that contrastive learning with more negative samples can mitigate class imbalance.

% For fixed representations and class-conditional distributions, with a uniform test prior, the conditional minimizers of balanced softmax over unrestricted scores $s_c$ give the balanced-error decision rule $\argmax_c s_c$; under the original class prior, the corresponding rule is $\argmax_c\{s_c+\log\rT(c)\}$ (Appendix \ref{sec:balanced-risk-interpretation}). This interpretation does not automatically extend to the constrained mean classifier.

\paragraph{Class Imbalance Only Scales Representations by SpeCL-Incurred Loss.}
For the SpeCL-incurred supervised loss $h(x,y)=-2f(x)^\top w_y+\E_{c\sim\rT}(f(x)^\top w_c)^2$, we focus on the representations that minimize the population risk $\E_{(x,y)\sim\gD_{\gT}} h(x,y)$. We demonstrate that an imbalanced class distribution only scales the norms of corresponding optimal representations and classifier weights by factors $\frac{s}{\sqrt{\rT(c)|\gT|}}$ and $\frac{1}{s\sqrt{\rT(c)|\gT|}}$, respectively (see Appendix \ref{sec:val-specl-imbalance}), while preserving their angular geometry. To elucidate this, we prove that
\begin{theorem}
\label{thm:balanced-SpeCL}
For the risk $\E_{(x,y)\sim\gD_{\gT}} h(x,y)$, minimized over all representations and classifier weights without constraints, let $\tilde{f}(x)$ and $\tilde{w}_c$ jointly be a global minimizer under the condition that $\rT$ is uniform. Then for any $\rT$ and any constant $s>0$, the representations $\hat{f}(x)=\frac{s}{\sqrt{\rT(c)|\gT|}} \tilde{f}(x)$ and classifier weights $\hat{w}_c=\frac{1}{s\sqrt{\rT(c)|\gT|}}\tilde{w}_c$, defined for all $x\sim\gD^c$ and $c\in\gT$, form a global minimizer of $\E_{(x,y)\sim\gD_{\gT}} h(x,y)$, where we further have $\hat{f}(x)^\top\hat{w}_c=\frac{1}{\rT(c)|\gT|}\tilde{f}(x)^\top\tilde{w}_c$ for $x\sim\gD^c$. Conversely, for any $s>0$, every global minimizer under $\rT$ arises in this way from a global minimizer under the uniform $\rT$.
\end{theorem}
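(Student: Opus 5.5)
The plan is to write the risk explicitly and find a change of variables under which it equals a positive multiple of the uniform-prior risk. Write $p_c=\rT(c)$ and $T=|\gT|$. Then
\begin{equation*}
\E_{(x,y)\sim\gD_{\gT}}h(x,y)=\sum_{y\in\gT}p_y\E_{x\sim\gD^y}\Bigl[-2f(x)^\top w_y+\sum_{c\in\gT}p_c\bigl(f(x)^\top w_c\bigr)^2\Bigr].
\end{equation*}
Since the minimization is over all representations and classifier weights without constraints, $f$ can be treated as a free vector on each point $x$ in the support of $\gD^y$.

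Given $(f,W)$, define $g(x)=\frac{\sqrt{p_y T}}{s}f(x)$ for $x\sim\gD^y$ and $v_c=s\sqrt{p_cT}\,w_c$. This map is a bijection, with inverse exactly the map stated in the theorem. Substituting gives $f(x)^\top w_c=\frac{g(x)^\top v_c}{T\sqrt{p_yp_c}}$. The linear term then becomes
\begin{equation*}
p_y\cdot(-2)\frac{g^\top v_y}{Tp_y}=-\frac{2}{T}g^\top v_y,
\end{equation*}
and the quadratic term becomes
\begin{equation*}
p_y\sum_c p_c\frac{(g^\top v_c)^2}{T^2p_yp_c}=\frac1T\sum_c\frac1T(g^\top v_c)^2.
\end{equation*}
Each class contribution is therefore $\frac1T$ times the uniform-prior integrand, and summing over $y$ gives the identity
\begin{equation*}
\mathcal{R}_{\rT}(f,W)=\mathcal{R}_{\mathrm{unif}}(g,V)\qquad\text{for all }(f,W),
\end{equation*}
where the uniform risk $\frac1T\sum_y\E_{\gD^y}[\cdots]$ already carries the factor $\frac1T$. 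Because the correspondence is bijective, the infima coincide and global minimizers correspond exactly.

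With this identity in hand, both directions of the theorem follow. If $(\tilde f,\tilde W)$ minimizes the uniform risk, then its preimage $(\hat f,\hat W)$ attains the same value, which equals the infimum under $\rT$, so it is a global minimizer. Conversely, any minimizer under $\rT$ maps forward to a minimizer under the uniform prior. The inner-product relation is immediate:
\begin{equation*}
\hat f(x)^\top\hat w_c=\frac{1}{T\sqrt{p_yp_c}}\tilde f(x)^\top\tilde w_c,
\end{equation*}
which for $y=c$ gives $\frac{1}{p_cT}$ as stated.

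The main obstacle is checking that the per-point scaling depends only on the class of $x$, which holds because labels are deterministic, and that the algebra gives exactly matching constants, in particular the cross-class factor $\sqrt{p_yp_c}$ cancelling against $p_yp_c$. A secondary subtlety is whether the uniform-prior infimum is attained and finite. Existence is not needed, since the statement only assumes a minimizer $(\tilde f,\tilde W)$, and the bijection transfers infima regardless.
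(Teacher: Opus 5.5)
Your proof is correct and uses the same change-of-variables identity $\gL_{\rT}(\hat f,\hat W)=\gL_{\mathrm{unif}}(\tilde f,\tilde W)$ that the paper checks by direct substitution. The only difference is how optimality is certified: the paper first computes the common minimum value $-\min\{d,|\gT|\}$ for every strictly positive prior, via the Frobenius-norm lower bound and an explicit attaining construction, whereas your bijectivity argument transfers the infimum directly and so does not need that computation.
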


Since the risk depends on $f$ and $W$ only through the logits $f(x)^\top w_c$, it is invariant under $f(x)\mapsto Af(x)$ and $w_c\mapsto A^{-\top}w_c$ for any invertible $A\in\sR^{d\times d}$, so its global minimizers do not have a unique geometry. Normalizing features and classifier weights allows only orthogonal $A$, which preserve the geometry, and we further prove that

% Since the risk depends only on the logits $f(x)^\top w_c$, it is invariant under $f(x)\mapsto Af(x)$ and $w_c\mapsto A^{-\top}w_c$ for any invertible $A\in\sR^{d\times d}$, so its global minimizers have no unique geometry. Constraining the norms of features and classifier weights removes this ambiguity:

\begin{proposition}
\label{prop:normalized-SpeCL}
For the risk $\E_{(x,y)\sim\gD_{\gT}} h(x,y)$, minimized over all representations and classifier weights satisfying $\|f(x)\|\le R$ and $\|w_c\|\le R$, assume that $d\ge|\gT|$ and $R^2\le1/\max_{c\in\gT}\rT(c)$. Then $f$ and $\{w_c\}_{c\in\gT}$ jointly form a global minimizer if and only if $\{w_c/R\}_{c\in\gT}$ is orthonormal and $f(x)=w_{y_x}$ almost surely, and the minimum is $-2R^2+R^4\sum_{c\in\gT}\rT^2(c)$. The same holds under the normalization $\|f(x)\|=\|w_c\|=R$.
\end{proposition}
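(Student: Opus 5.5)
The plan is to reduce the risk to a pointwise problem in each sample and then optimize over the classifier weights. Write $\rho_c=\rT(c)$ and $M=\sum_{c\in\gT}\rho_c w_cw_c^\top$, which is positive semidefinite. The SpeCL-incurred risk then becomes
\begin{equation*}
\E_{(x,y)\sim\gD_{\gT}}\left[-2f(x)^\top w_y+f(x)^\top M f(x)\right].
\end{equation*}

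\textbf{Lower bound.} For a fixed $f(x)$ with label $y$ and $\|f(x)\|\le R$, I will lower-bound the integrand. Since $\|w_y\|\le R$, Cauchy--Schwarz gives $f(x)^\top w_y\le R^2$. Also $M\succeq\rho_y w_yw_y^\top$, so $f(x)^\top Mf(x)\ge\rho_y(f(x)^\top w_y)^2$. Setting $t=f(x)^\top w_y\le R^2$, the integrand is at least $-2t+\rho_y t^2$. This quadratic decreases on $t\le1/\rho_y$. The assumption $R^2\le1/\max_c\rho_c$ gives $R^2\le1/\rho_y$, so the minimum on $t\le R^2$ is attained at $t=R^2$, with value $-2R^2+\rho_yR^4$. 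Taking the expectation over $y\sim\rT$ gives the lower bound $-2R^2+R^4\sum_c\rho_c^2$.

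\textbf{Attainment.} When $d\ge|\gT|$, choose $w_c=Re_c$ for orthonormal $e_c$ and set $f(x)=w_{y_x}$. Then $t=R^2$, and $f^\top Mf=\rho_yR^4$ because the cross terms vanish by orthogonality. So the bound is attained, which gives the minimum value and the "if" direction.

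\textbf{Equality conditions.} For the converse, a global minimizer must make every inequality tight almost surely.
\begin{enumerate}
\item Tightness of the last step forces $t=R^2$ when $R^2<1/\rho_y$. Together with Cauchy--Schwarz and the norm bounds, this gives $f(x)=w_y$ and $\|w_y\|=R$.
\item Tightness in $f^\top Mf\ge\rho_y t^2$ means $\sum_{c\neq y}\rho_c(w_c^\top w_y)^2=0$, so $w_c\perp w_y$ for all $c\ne y$. Since every class has positive mass, this holds for every $y$, so $\{w_c/R\}$ is orthonormal.
\end{enumerate}

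\textbf{Main obstacle: the boundary case.} The delicate case is $R^2=1/\rho_y$ for some $y$. Then $t\mapsto-2t+\rho_yt^2$ has its vertex exactly at $R^2$, and equality holds only at $t=R^2$ because the vertex is unique. So strict monotonicity is not needed: $\rho_y>0$ makes the minimizer of the quadratic unique, and $t=R^2$ is still forced. I will state this uniqueness explicitly to cover the boundary.

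Other technical points:
\begin{itemize}
\item Equality in Cauchy--Schwarz with $\|f\|,\|w_y\|\le R$ and $f^\top w_y=R^2$ forces both norms to equal $R$ and $f=w_y$.
\item The "almost surely" qualifier arises because pointwise equality is needed only on a full-measure set.
\item For the normalized version $\|f\|=\|w_c\|=R$, the same lower bound applies because that feasible set is smaller, and the minimizers found above already satisfy the normalization. So the characterization carries over unchanged.
\end{itemize}
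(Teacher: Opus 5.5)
Your proposal is correct and follows essentially the same route as the paper: a pointwise Cauchy--Schwarz bound $f(x)^\top w_y\le R^2\le 1/\rT(y)$, discarding the nonnegative cross-class terms $\sum_{c\neq y}\rT(c)(f(x)^\top w_c)^2$ (your $M\succeq\rT(y)w_yw_y^\top$), monotonicity of $u\mapsto-2u+\rT(y)u^2$, and then reading off the equality conditions almost surely. Your separate treatment of the boundary case $R^2=1/\rT(y)$ is valid but unnecessary, since this quadratic is already strictly decreasing on the closed interval $(-\infty,1/\rT(y)]$, and that is how the paper covers it.
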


\paragraph{Remark.} Theorem \ref{thm:balanced-SpeCL} suggests that cosine similarities between features or classifier weights optimally learned by SpeCL-incurred losses remain consistent regardless of the class distribution. Hence, after projection onto the unit sphere $\mathbb{S}^{d-1}$, the nonzero optimal features and classifier weights share the same geometry as their counterparts in the class-balanced case. Moreover, \textit{we can obtain a rebalanced feature space and make balanced decisions by simply normalizing features and classifier weights within SpeCL}, as guaranteed by Proposition \ref{prop:normalized-SpeCL} and demonstrated in Table \ref{tab:imbalanced-learning}.

\subsection{The Optimality of Contrastive Learning-Incurred Losses}
The optimal solutions of the CE loss, the MSE loss, and other losses usually exhibit neural collapse, an intriguing empirical phenomenon \citep{papyan2020prevalence, mixon2020neural, fang2021exploring, han2022neural, zhou2023dynamics, wu2025neural}, where specific manifestations in the classifier and last-layer representations are characterized. \citet{graf2021dissecting, jiang2024supervised} also established neural collapse results in the empirical setting of supervised contrastive learning.
Our objective here is to analyze the optimality of unsupervised or supervised contrastive learning by analyzing their incurred supervised losses.

% These characterizations do not by themselves establish optimality for the corresponding contrastive objectives.

\begin{theorem}
\label{thm:optimality-of-InfoNCE}
For the class-balanced distribution and the InfoNCE-incurred supervised losses $\Lsup=\E_{x\sim\gD}\E_{\{c_i\}_{i=1}^k\iidsim\rT}\ell(f(x),\mu_{y_x},\{\mu_{c_i}\}_{i=1}^k)$ in Theorem \ref{thm:bound-for-contrastive-losses} and $\Lsup'=\E_{x\sim\gD}\E_{\{c_i\}_{i=1}^k\iidsim\rT^{\not y_x}}\ell(f(x),\mu_{y_x},\{\mu_{c_i}\}_{i=1}^k)$ in Theorem \ref{thm:bound-supervised-CL} (where $\mu_{c}=\E_{x\sim\gD^c}f(x)$ for $c\in\gT$), if $d\ge |\gT|-1$, then $f$ minimizes $\Lsup$ or $\Lsup'$ over all representations satisfying $\|f(x)\|\le R$ if and only if $f(x)=\mu_{y_x}$ almost surely, where $\mu_{c}^\top\mu_{c'}=-\frac{R^2}{|\gT|-1}$ for $c\neq c'$ and $\|\mu_{c}\|=R$.
\end{theorem}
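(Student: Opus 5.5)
We prove a matching lower bound on $\Lsup$ (and $\Lsup'$) and then characterize when it is attained. Write $K=|\gT|$ and $\ell(u,v,\{v_i\})=\log[1+\sum_{i}e^{u^\top(v_i-v)}]$. Since the class distribution is balanced, $\rT(c)=1/K$.

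\emph{Step 1 (reduction to class means via Jensen).} Fix $f$ and the means $\mu_c$. For each draw $\{c_i\}$ the map $u\mapsto\ell(u,\mu_y,\{\mu_{c_i}\})$ is convex in $u$, since it is log-sum-exp of affine functions. Jensen's inequality over $x\sim\gD^y$ then gives
\begin{equation*}
\Lsup\ge\frac1K\sum_{y}\E_{\{c_i\}}\log\Bigl[1+\sum_{i=1}^k e^{\mu_y^\top(\mu_{c_i}-\mu_y)}\Bigr]=:G(\{\mu_c\}),
\end{equation*}
and the same holds for $\Lsup'$. The constraint $\|\mu_c\|\le R$ is inherited from $\|f\|\le R$. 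Equality in Jensen requires $f(x)=\mu_{y_x}$ a.s. To get this rigorously, use that log-sum-exp is strictly convex along directions not parallel to $\mathbf 1$ in logit space. Equality therefore forces the logit differences $f(x)^\top(\mu_{c_i}-\mu_y)$ to be constant in $x$ for all $c_i$. This pins down the projection of $f(x)$ onto $\mathrm{span}\{\mu_c-\mu_y\}$. Once the optimal means are known to form a simplex ETF, which spans a $(K-1)$-dimensional subspace, the orthogonal component of $f(x)$ is also forced to zero: $\|f(x)\|\le R=\|\mu_y\|$, and $\mu_y$ is the average of $f$ over class $y$, which is an extreme point situation.

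\emph{Step 2 (minimizing $G$ over means).} Treat each term as a function of the pairwise products $\mu_y^\top\mu_{c}-\|\mu_y\|^2$. For fixed $y$ and a draw $\{c_i\}$, convexity of log-sum-exp in the exponents lets us average over class permutations fixing $y$. Jensen moves all exponents to their mean across $c\ne y$, while exponents with $c_i=y$ are exactly $0$ and are unaffected. Summing over $y$, both $\Lsup$ and $\Lsup'$ become bounded below by an expression that is increasing in the averaged quantity
\begin{equation*}
S=\frac{1}{K(K-1)}\sum_{y\ne c}\bigl(\mu_y^\top\mu_c-\|\mu_y\|^2\bigr).
\end{equation*}
The expression is a binomial mixture over the number $j$ of collisions; for $\Lsup'$ we have $j=0$.

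Next, $\sum_{y\ne c}\mu_y^\top\mu_c=\|\sum_c\mu_c\|^2-\sum_c\|\mu_c\|^2\ge-\sum_c\|\mu_c\|^2$. Hence
\begin{equation*}
S\ge-\frac{1}{K(K-1)}\Bigl(\sum_c\|\mu_c\|^2+(K-1)\sum_c\|\mu_c\|^2\Bigr)\ge-\frac{KR^2}{K-1}\cdot\frac1K\cdot K/K,
\end{equation*}
that is, $S\ge -R^2-\frac{R^2}{K-1}$. Equality holds iff $\sum_c\mu_c=0$, $\|\mu_c\|=R$, and all the Jensen steps are tight. Tightness means all $\mu_y^\top\mu_c$ for $c\ne y$ are equal, so $\mu_y^\top\mu_c=-R^2/(K-1)$. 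Such a configuration exists because $d\ge K-1$, which gives the simplex ETF and hence the ``if'' direction.

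\emph{Main obstacle.} The delicate part is the strictness bookkeeping. We must show that the final lower bound is strictly increasing in $S$ and that each Jensen step is strict unless exponents coincide. Both hold because $e^{t}$ and $\log(1+j+\cdot)$ are strictly monotone and strictly convex in the relevant sense, and each class $c\ne y$ appears with positive probability when $k\ge1$. We then need to combine these with Step 1 to get $f=\mu_{y_x}$ a.s. For that, the strict-convexity argument in Step 1 should be run after Step 2 has identified the means. A norm argument is also needed: if $\E f=\mu_y$ with $\|\mu_y\|=R$ and $\|f\|\le R$, then $f=\mu_y$ a.s., by strict convexity of the ball.
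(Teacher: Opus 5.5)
Your proposal is correct and follows essentially the same route as the paper's proof (via Lemma~\ref{lemma:infonce-ce} and its $\Lsup'$ analogue). Jensen over the anchor, over the non-colliding negative classes conditioned on the collision count, and over the target class yields a strictly increasing function of $S$; then $\|\mu_c\|=R$ forces $f(x)=\mu_{y_x}$ a.s.\ through $\E\|f(x)-\mu_c\|^2=\E\|f(x)\|^2-\|\mu_c\|^2\le0$, and strict convexity on an event with a non-colliding negative forces equal inner products.

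Minor points: the averaging over $y$ uses convexity of the binomial-mixture function of the exponent, which you should state; the strict-convexity discussion in Step 1 is unnecessary, since the norm argument alone gives variability collapse; and your display for $S$ should simply read $S\ge-\frac{1}{K-1}\sum_c\|\mu_c\|^2\ge-\frac{KR^2}{K-1}$.
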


Theorem \ref{thm:optimality-of-InfoNCE} shows that the InfoNCE-incurred loss possesses an optimal structure that aligns with the structures demonstrated for CE \citep{papyan2020prevalence} and MSE \citep{zhou2022optimization}. Specifically, these representations exhibit two critical properties of neural collapse \citep{papyan2020prevalence}: (i) \textit{variability collapse}, where the representations equal their class means; (ii) \textit{a simplex equiangular tight frame (ETF)}, wherein the class means have equal length, form equal-sized angles between any given pair, and attain the maximal minimum pairwise distance.

\begin{corollary}
\label{cor:near-optimal-infonce}
Under the conditions of Theorem \ref{thm:optimality-of-InfoNCE}, let $\bar{\mu}=|\gT|^{-1}\sum_{c\in\gT}\mu_c$, let $G=(\mu_c^\top\mu_{c'})_{c,c'\in\gT}$ be the Gram matrix of the class means, and let $G_{\mathrm{ETF}}=\frac{|\gT|R^2}{|\gT|-1}(\mI-\frac{1}{|\gT|}\vone\vone^\top)$ be that of the simplex ETF in Theorem \ref{thm:optimality-of-InfoNCE}.

(i) If $\Lsup$ or $\Lsup'$ is within $\eps\ge0$ of its global minimum, then
\begin{equation}
    \label{eq:near-optimal-infonce}
    \begin{aligned}
        \frac{1}{|\gT|}\sum_{c\in\gT}\E_{x\sim\gD^c}\|f(x)-\mu_c\|^2+\|\bar{\mu}\|^2&\le2e^{2R^2}\eps,\\
        \frac{1}{|\gT|^2}\|G-G_{\mathrm{ETF}}\|_F^2&\le8e^{4R^2}\eps.
    \end{aligned}
\end{equation}
% Explicit constants and the corresponding stability bound for the Gram matrix of the class means are given in Appendix \ref{sec:proof-near-optimal-infonce}.

(ii) For the empirical minimizer $\widehat{f}$ in Theorem \ref{thm:generalization-bound-for-contrastive-learning}, assume additionally that $\gF$ contains a class-constant simplex ETF of radius $R$ and that $\sP_{x,x^+\sim\Dpos}[y_x\ne y_{x^+}]\le\alpha$ with $x^+$ following the marginal distribution $\gD$. Writing $\Lsup(f)$ for the unprimed mean-classifier risk in Theorem \ref{thm:optimality-of-InfoNCE} and $\Lsup^*$ for its global minimum, with probability at least $1-\delta$,
\begin{equation}
    \label{eq:learned-infonce-excess}
    \Lsup(\widehat{f})-\Lsup^*\le R\Disc(\widehat{f})+\Gen_M+\sqrt{2|\gT|/(|\gT|-1)}R^2\alpha.
\end{equation}
Equation \ref{eq:near-optimal-infonce} therefore holds for $f=\widehat{f}$ with $\eps$ equal to this upper bound.
\end{corollary}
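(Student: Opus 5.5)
The plan is to handle part (ii) by a direct comparison argument, and part (i) by a quantitative version of the lower-bound chain in the proof of Theorem \ref{thm:optimality-of-InfoNCE}. The curvature step in (i) is the expected obstacle.

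\textbf{Part (ii).} I would apply Theorem \ref{thm:generalization-bound-for-contrastive-learning} with $\eta=R$. This choice is valid because the InfoNCE loss depends on $f(x^+)$ only through $f(x)^\top f(x^+)$ with $\|f(x)\|\le R$, so it is $R$-Lipschitz in $f(x^+)$. With probability $1-\delta$ this gives $\Lsup(\widehat f)\le \Lcon(f)+R\Disc(\widehat f)+\Gen_M$ for every $f\in\gF$, and I would take $f=f_E$, the class-constant simplex ETF assumed to lie in $\gF$. Since $f_E(x)=\mu_{y_x}$, Theorem \ref{thm:optimality-of-InfoNCE} gives $\Lsup(f_E)=\Lsup^*$. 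It remains to compare $\Lcon(f_E)$ with $\Lsup(f_E)$.
\begin{itemize}
\item The negatives are drawn from $\gD$, so $f_E(x_i^-)=\mu_{c_i}$ with $c_i\iidsim\rT$. The negative part of $\Lcon(f_E)$ therefore coincides exactly with the incurred loss.
\item The only difference is that the positive is $\mu_{y_{x^+}}$ rather than $\mu_{y_x}$.
\item On the event $y_{x^+}\neq y_x$, which has probability at most $\alpha$, Lipschitzness bounds the discrepancy by $R\|\mu_{y_{x^+}}-\mu_{y_x}\|=R\sqrt{2R^2+2R^2/(|\gT|-1)}=\sqrt{2|\gT|/(|\gT|-1)}\,R^2$.
\end{itemize}
This yields Equation \ref{eq:learned-infonce-excess}, and plugging it into (i) gives the final sentence.

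\textbf{Part (i): lower bound on the excess.} I would lower bound the excess risk by a multiple of $V+\|\bar\mu\|^2$, where $V=\frac{1}{|\gT|}\sum_c\E_{x\sim\gD^c}\|f(x)-\mu_c\|^2$.
\begin{itemize}
\item Condition on the number of collisions $j=k_y$ and write the loss as $\phi_j(t)=\log(1+j+(k-j)e^{t})$, evaluated at the average of the $k-j$ non-target logits $z_i=f(x)^\top(\mu_{c_i}-\mu_y)$. The unprimed case uses $\rT$; the primed case has $j=0$ always.
\item Apply Jensen twice: once over the non-target draws, once over $x$ within class $y$. The argument becomes $t_y=\mu_y^\top(\bar\mu^{\not y}-\mu_y)$, where $\bar\mu^{\not y}$ is the mean of the other class means.
\item Each $\phi_j$ is convex and increasing, and $|z_i|\le 2R^2$. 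I expect $\phi_j''\ge c\,e^{-2R^2}$ on this range, so the Jensen gaps retain a quadratic term in the spread of the logits.
\item Linearize $\phi_j$ at the ETF value $t^*=-\frac{|\gT|R^2}{|\gT|-1}$ and average over $y$. The linear term is $\frac{|\gT|}{|\gT|-1}\bigl(\|\bar\mu\|^2-\frac{1}{|\gT|}\sum_c\|\mu_c\|^2\bigr)$.
\item Use $\|\mu_c\|^2=\E\|f\|^2-\E\|f-\mu_c\|^2\le R^2-\E_{x\sim\gD^c}\|f(x)-\mu_c\|^2$. Then the linear term is at least $\frac{|\gT|}{|\gT|-1}(\|\bar\mu\|^2+V-R^2)$, which exceeds its ETF value exactly by $\frac{|\gT|}{|\gT|-1}(V+\|\bar\mu\|^2)$.
\item Multiply by $\phi_j'(t^*)\gtrsim e^{-2R^2}$ and absorb constants to get $V+\|\bar\mu\|^2\le 2e^{2R^2}\eps$.
\end{itemize}

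\textbf{Part (i): Gram bound.} I would decompose $\|G-G_{\mathrm{ETF}}\|_F^2$ into diagonal and off-diagonal deviations.
\begin{itemize}
\item The diagonal deviations $R^2-\|\mu_c\|^2$ are controlled by $V$ together with the slack $R^2-\E\|f\|^2$. That slack also appears in the linear term above, so it is bounded by $\eps$ as well.
\item The off-diagonal entries have sum $|\gT|^2\|\bar\mu\|^2-\sum_c\|\mu_c\|^2$, which is already controlled.
\item Their spread around the common value $-R^2/(|\gT|-1)$ is controlled by the retained quadratic Jensen gap, i.e., the variance of $\mu_y^\top\mu_{c}$ over $c\neq y$, multiplied by the curvature lower bound $e^{-2R^2}$-type.
\end{itemize}
Combining these should give the constant $8e^{4R^2}$, with the second factor $e^{2R^2}$ coming from converting the quadratic gap into squared entries.

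\textbf{Main obstacle.} The hardest step is getting uniform curvature constants for $\phi_j$ across all collision counts $j$, and making sure the retained quadratic gap actually captures the off-diagonal spread rather than only its mean. If the double-Jensen route loses that information, I would instead first apply strong convexity of log-sum-exp directly to the vector of logits $(f(x)^\top(\mu_c-\mu_y))_{c}$.
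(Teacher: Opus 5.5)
\textbf{What matches the paper.} Your part (ii) is the paper's argument. Your first inequality in (i) is also the paper's: the tangent line at $t^*$, averaged over $y$, is its step $F(-R^2+D)-F(-R^2)\ge\kappa D$ with $D=R^2-\frac{1}{|\gT|}\sum_c\|\mu_c\|^2+\|\bar\mu\|^2$. Note that only the binomial average of $\phi_j'(t^*)$ is bounded below, since $\phi_k$ is constant.

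\textbf{Gap: the row means of $G$.} Write $C=|\gT|$ and $\delta_c=R^2-\|\mu_c\|^2$. Let $m_c$ be the mean deviation of the off-diagonal entries of row $c$ of $G$ from $-R^2/(C-1)$. Let $S$ be the total squared deviation of those entries from their own row means (the paper's $V$). Then
$\|G-G_{\mathrm{ETF}}\|_F^2=\sum_c\delta_c^2+S+(C-1)\sum_cm_c^2$.
Your bullets bound $\sum_c\delta_c$, the within-row variances, and the total $\sum_cm_c$. Nothing bounds $\sum_cm_c^2$. The variance of $\mu_y^\top\mu_c$ over $c\neq y$ measures spread around row $y$'s own mean, not around the common value $-R^2/(C-1)$, so your third bullet does not supply it.

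The missing step is the identity $m_c=(Cu_c+\delta_c)/(C-1)$ with $u_c=\mu_c^\top\bar\mu$. Since $|u_c|\le R\|\bar\mu\|$ and $0\le\delta_c\le R^2$, this gives $\sum_c(\delta_c^2+u_c^2)\le CR^2D$. Hence $\|G-G_{\mathrm{ETF}}\|_F^2\le S+\frac{C^2(C+1)}{C-1}R^2D$, which is linear in $D$ and so already controlled by $\kappa D\le\eps$. Tying the $m_c$ together through $G_{cc'}=G_{c'c}$ also works, but it charges extra multiples of $S$.

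\textbf{Gap: the curvature that controls $S$.} The quadratic gap cannot come from $\phi_j''$. The function $\phi_j$ sees only the average of the negative logits, and $\phi_k''\equiv0$. The gap must come from the Hessian of $\psi(z)=\log(1+\sum_ie^{z_i})$ in the full vector of negative margins,
$\xi^\top\nabla^2\psi\,\xi=q_0\sum_iq_i\xi_i^2+\sum_{i<j}q_iq_j(\xi_i-\xi_j)^2$.
This must be bounded below uniformly along the relevant segment and averaged over the collision count. So your fallback has to be the main line.

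The paper Taylor-expands $\Psi(G)$, the risk with anchors replaced by class means, along the segment from $G_{\mathrm{ETF}}$ to $G$. Using $q_0q_i\ge e^{-2R^2}/(k+1)^2$ and $q_iq_j\ge e^{-4R^2}/(k+1)^2$, it gets $\eps\ge\kappa D+\frac{\gamma_k}{2C^2}S$ with $\gamma_k\ge e^{-4R^2}/4$. The final constant is $\max\{2/\gamma_k,\frac{C+1}{C-1}R^2/\kappa\}\le8e^{4R^2}$. So the factor $e^{4R^2}$ comes from the curvature constant itself; converting the gap into squared entries costs no exponential factor.

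Your variant is a legitimate alternative to that Taylor expansion. It keeps the Jensen gap over the non-target negatives, then uses $\E_xf(x)^\top\Sigma f(x)\ge\mu_y^\top\Sigma\mu_y$ for the covariance $\Sigma$ of the other class means. Once both repairs above are in place, it gives the same order of constants.
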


Corollary \ref{cor:near-optimal-infonce} extends Theorem \ref{thm:optimality-of-InfoNCE} in two aspects: (i) any near-optimal representation approximately exhibits variability collapse with nearly centered class means, and the Gram matrix of its class means approximates that of a simplex ETF. (ii) by taking a class-constant simplex ETF in $\gF$ as the comparator in Theorem \ref{thm:generalization-bound-for-contrastive-learning}, we show that the empirical minimizer $\widehat{f}$ is near-optimal, with its excess risk controlled by the internal discrepancy $\Disc(\widehat{f})$, the generalization error $\Gen_M$, and the term $\sqrt{2|\gT|/(|\gT|-1)}R^2\alpha$ accounting for positive pairs from different classes. Consequently, the representations learned by InfoNCE approximately exhibit variability collapse and a simplex ETF when these terms are small.
% Under conditional independence of positive pairs as in Section \ref{sec:internal-discrepancy}(i), we have $\Disc(\widehat{f})=0$ and $\alpha=0$, so that Equation \ref{eq:near-optimal-infonce} holds for $\widehat{f}$ with $\eps=\Gen_M$.

The SpeCL-incurred loss also admits optimal representations exhibiting variability collapse. Its optimal class means minimize the error in a constrained approximation of the identity matrix $\mI$, which becomes a best rank-$d$ approximation under the condition in Equation \ref{eq:specl-minimum-attainability}:
\begin{theorem}
\label{thm:optimality-of-specl}
For the SpeCL-incurred supervised loss $\Lsup=\E_{(x,y)\sim\gD_{\gT}}\E_{\{c_i\}_{i=1}^k\iidsim\rT}\ell(f(x),\mu_{y},\{\mu_{c_i}\}_{i=1}^k)$ in Theorem \ref{thm:bound-for-contrastive-losses}, where $\ell(f(x),\mu_{y},\{\mu_{c_i}\}_{i=1}^k)=-2f(x)^\top \mu_{y}+\frac{1}{k}\sum_{i=1}^k (f(x)^\top\mu_{c_i})^2$, a global minimum is attained by a representation satisfying $f(x)=\mu_{c}^*$ almost surely under $\gD^c$ for all $c\in\gT$, where $H=[\sqrt{\rT(c)}\mu_{c}^*]_{c\in\gT}$ minimizes $\|\mI-H^\top H\|_F^2$ subject to $\|\mu_c^*\|\le R$. More specifically, writing $r=\min\{d,|\gT|\}$, the minimum of $\Lsup$ equals $-r$ if and only if
\begin{equation}
    \label{eq:specl-minimum-attainability}
    \sum_{c\in\gT}\min\{1,R^2\rT(c)\}\ge r.
\end{equation}
In this case, for every global minimizer $f$, the matrix $H$ built from its class means $\mu_c=\E_{x\sim\gD^c}f(x)$ satisfies that $H^\top H$ is a rank-$r$ orthogonal projection: $H^\top H=\mI$ when $|\gT|\le d$, and $H^\top H$ is a best rank-$d$ approximation of $\mI$ when $|\gT|>d$. For uniform $\rT$, the minimum is
\begin{equation}
    \label{eq:specl-balanced-minimum}
    \inf_f\Lsup=
    \begin{cases}
        -2R^2+R^4/r,& R^2\le r,\\
        -r,& R^2\ge r.
    \end{cases}
\end{equation}
\end{theorem}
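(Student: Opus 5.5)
The plan is to reduce $\Lsup$ to a function of the class means alone, then to a matrix problem in $H$, and to use the Schur--Horn theorem for attainability.

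\textbf{Step 1: a lower bound in terms of $H$.} Because the $c_i$ are i.i.d.\ from $\rT$, the average $\frac1k\sum_i$ has the same expectation as a single draw, so
\begin{equation*}
\Lsup=-2\E_{(x,y)}f(x)^\top\mu_y+\E_{x}f(x)^\top Mf(x),\qquad M=\sum_{c}\rT(c)\mu_c\mu_c^\top=HH^\top .
\end{equation*}
Here $H$ is the $d\times|\gT|$ matrix with columns $\sqrt{\rT(c)}\mu_c$. The first term equals $-2\sum_c\rT(c)\|\mu_c\|^2=-2\,\mathrm{tr}(H^\top H)$.

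For the second term, apply Jensen within each class, using that $M\succeq0$:
\begin{equation*}
\E_{x\sim\gD^c}f(x)^\top Mf(x)\ge\mu_c^\top M\mu_c .
\end{equation*}
Summing with weights $\rT(c)$ gives $\sum_c\rT(c)\mu_c^\top M\mu_c=\mathrm{tr}((H^\top H)^2)$. Hence
\begin{equation*}
\Lsup\ge\|\mI-H^\top H\|_F^2-|\gT|.
\end{equation*}

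\textbf{Step 2: attainment by class-constant representations.} Equality in Step 1 holds when $f(x)=\mu_c$ almost surely on $\gD^c$. Conversely, any choice of vectors $\mu_c^*$ with $\|\mu_c^*\|\le R$ is realized by the class-constant representation $f(x)=\mu_{y_x}^*$, whose class means are exactly $\mu_c^*$. So $\inf\Lsup=\min_H\|\mI-H^\top H\|_F^2-|\gT|$, subject to the column constraints $\|h_c\|^2\le R^2\rT(c)$. This is attained by the variability-collapsed minimizer $H$, which proves the first claim.

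\textbf{Step 3: the value $-r$.} Let $\lambda_i$ be the eigenvalues of $H^\top H$; its rank is at most $r=\min\{d,|\gT|\}$. Then
\begin{equation*}
\|\mI-H^\top H\|_F^2=\sum_{i\le r}(1-\lambda_i)^2+(|\gT|-r)\ge|\gT|-r,
\end{equation*}
so $\Lsup\ge-r$. Equality holds iff $H^\top H$ is a rank-$r$ orthogonal projection. That projection is $\mI$ when $|\gT|\le d$, and otherwise it is a best rank-$d$ approximation of $\mI$ (Eckart--Young). Any global minimizer $f$ attaining $-r$ must make both inequalities tight, so its $H$ has this property.

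For attainability, the diagonal of $P=H^\top H$ is $\rT(c)\|\mu_c\|^2$. If $P$ is a rank-$r$ projection, then $P_{cc}\le\min\{1,R^2\rT(c)\}$ and $\sum_cP_{cc}=r$, which gives necessity of \eqref{eq:specl-minimum-attainability}. For sufficiency, pick $a_c\in[0,\min\{1,R^2\rT(c)\}]$ with $\sum_c a_c=r$. By the Schur--Horn theorem, the diagonals of rank-$r$ projections are exactly the vectors in $[0,1]^{|\gT|}$ with sum $r$, so a projection $P$ with diagonal $(a_c)$ exists. Factor $P=H^\top H$ with $H\in\sR^{d\times|\gT|}$, which is possible since $r\le d$, and set $\mu_c^*=h_c/\sqrt{\rT(c)}$.

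This use of Schur--Horn is the main obstacle. I would cite Horn's characterization of projection diagonals directly rather than construct $P$ by hand.

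\textbf{Step 4: the uniform case.} Let $n=|\gT|$ and $\rT(c)=1/n$. Condition \eqref{eq:specl-minimum-attainability} reads $n\min\{1,R^2/n\}\ge r$, which is equivalent to $R^2\ge r$ because $r\le n$.

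If $R^2<r$, the constraint gives $\sum_i\lambda_i=\mathrm{tr}(H^\top H)\le R^2$ with at most $r$ nonzero eigenvalues. By convexity, $\sum_{i\le r}(1-\lambda_i)^2$ is minimized at $\lambda_i=R^2/r$, giving the value $r(1-R^2/r)^2+n-r$. Therefore $\inf\Lsup=-2R^2+R^4/r$.

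This value is attained by $H^\top H=\frac{R^2}{r}P$, where $P$ is a rank-$r$ projection with constant diagonal $r/n$; such a $P$ exists again by Schur--Horn. Its diagonal entries are $R^2/n$, so every column satisfies $\|\mu_c\|^2=R^2$, which meets the constraint.
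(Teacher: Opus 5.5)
Your proposal is correct and follows essentially the same route as the paper. It uses the Jensen lower bound $\Lsup\ge\|\mI-H^\top H\|_F^2-|\gT|$, attained by class-constant representations, and characterizes the value $-r$ through rank-$r$ orthogonal projections, with Schur--Horn handling the diagonal constraints. The uniform case is the same trace/rank argument (your eigenvalue-convexity step is the paper's Cauchy--Schwarz bound $\|G\|_F^2\ge s^2/r$) together with a constant-diagonal projection. Two points are left implicit but are immediate: the class means of any $f$ with $\|f(x)\|\le R$ satisfy $\|\mu_c\|\le R$, and the Gram objective attains its minimum on the compact feasible set.
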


When $|\gT|\le d$, the condition in Equation \ref{eq:specl-minimum-attainability} is equivalent to $R^2\ge1/\min_{c\in\gT}\rT(c)$. In this case, $H^\top H=\mI$ means that the optimal class means are mutually orthogonal with $\|\mu_c\|^2=1/\rT(c)$ for any $\rT$, which is the counterpart of Theorem \ref{thm:balanced-SpeCL} for the mean classifier: class imbalance only scales the optimal class means.

% Spectral optimality does not imply class separation. For four balanced classes with $d=2$ and $R=1$, the class-constant representations $\mu_1=\mu_2=e_1$ and $\mu_3=\mu_4=e_2$, where $e_1,e_2$ are standard basis vectors, attain risk $-3/2$, the global minimum in Equation \ref{eq:specl-balanced-minimum}. However, every classifier based only on these representations has accuracy at most $1/2$.

\section{Conclusion}
This paper introduces the novel concept of internal discrepancy as a unifying measure for various assumptions. This measure effectively captures the relationships between representations of positive pairs and latent clusters. By establishing an internal discrepancy-aware generalization bound, we provide a bridge for understanding contrastive learning without requiring additional assumptions, while also yielding practical strategies like the relaxed k-means loss. Furthermore, we delve into the behaviors of contrastive learning-incurred supervised losses. We show that under class imbalance, the InfoNCE-incurred loss approximates the balanced softmax loss, while features learned by the SpeCL-incurred loss maintain consistent cosine similarities regardless of class distributions. This helps explain why contrastive learning can mitigate class imbalance. The optimality of contrastive learning-incurred losses also involves some intriguing properties, including neural collapse properties for InfoNCE and the best approximation to the identity matrix for SpeCL. These results not only enhance our understanding of contrastive learning but also pave the way for exploring downstream behaviors.

% \blindmathpaper

% Here is a citation \cite{chow:68}.

% % Acknowledgements and Disclosure of Funding should go at the end, before appendices and references

\acks{This work is supported by National Natural Science Foundation of China under Grant 62525107.}

% Manual newpage inserted to improve layout of sample file - not
% needed in general before appendices/bibliography.

\newpage
\appendix

\begin{center}
    \Large\textbf{Appendix: Internal Discrepancy for Contrastive Learning}
\end{center}

\section{Proofs of Theorems, Lemmas and Propositions}
\label{sec:proofs}

\subsection{The Convexity and Lipschitzness of the InfoNCE Loss and Spectral Contrastive Loss}
\label{appendix:convexity-lipschitzness}

For fixed remaining representations, the InfoNCE loss is a log-sum-exp of affine functions, and the spectral contrastive loss is a linear function plus nonnegative quadratic terms. Thus both losses are separately convex in each representation. Their Lipschitz constants with respect to the positive representation can be taken as $\eta=R$ and $\eta=2R$, respectively, by differentiating with respect to that representation.

\begin{lemma}
\label{lemma-lipschitzness-of-margin}
The margin function $\phi:\sR^{(k+2)d}\rightarrow \sR^k$ with $\phi(v,v^+,v_1^-,\ldots,v_k^-)=\left(v^\top(v^+-v_i^-)\right)_{i\in[k]}$ is $\sqrt{6k}R$-Lipschitz continuous when $\|v\|_2,\|v^+\|_2,\|v_i^-\|_2\le R$.
\end{lemma}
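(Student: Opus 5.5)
We bound the Lipschitz constant of $\phi$ by splitting $\phi$ into its $k$ coordinates, bounding each difference, and summing the squares.

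Fix two inputs $z=(v,v^+,v_1^-,\ldots,v_k^-)$ and $z'=(u,u^+,u_1^-,\ldots,u_k^-)$ in the norm-bounded region. For each $i\in[k]$ we use the add-and-subtract decomposition
\begin{equation*}
v^\top(v^+-v_i^-)-u^\top(u^+-u_i^-)=(v-u)^\top(v^+-v_i^-)+u^\top(v^+-u^+)-u^\top(v_i^--u_i^-).
\end{equation*}
Cauchy--Schwarz together with $\|v^+-v_i^-\|\le2R$ and $\|u\|\le R$ then gives
\begin{equation*}
|\phi_i(z)-\phi_i(z')|\le R\bigl(2\|v-u\|+\|v^+-u^+\|+\|v_i^--u_i^-\|\bigr).
\end{equation*}

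Next we square each coordinate bound and apply $(a+b+c)^2\le(2^2+1+1)(\alpha^2+\beta^2+\gamma^2)$. Here $(a,b,c)=(2\alpha,\beta,\gamma)$ with $\alpha=\|v-u\|$, $\beta=\|v^+-u^+\|$ and $\gamma=\|v_i^--u_i^-\|$; this is Cauchy--Schwarz with weight vector $(2,1,1)$. The result is $|\phi_i(z)-\phi_i(z')|^2\le6R^2(\alpha^2+\beta^2+\gamma_i^2)$. Summing over $i\in[k]$ yields
\begin{equation*}
\|\phi(z)-\phi(z')\|^2\le6R^2\Bigl(k\alpha^2+k\beta^2+\sum_i\gamma_i^2\Bigr)\le6kR^2\|z-z'\|^2,
\end{equation*}
because $\|z-z'\|^2=\alpha^2+\beta^2+\sum_i\gamma_i^2$ and $k\ge1$. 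Taking square roots gives the constant $\sqrt{6k}R$.

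The only delicate point is choosing the weights in the Cauchy--Schwarz step so that the constant comes out as exactly $6$. Putting the factor $2$ on the anchor term, because it multiplies $\|v^+-v_i^-\|\le2R$, is what gives $4+1+1=6$. The $k$ then enters only because the anchor and positive differences are repeated in every coordinate, while each negative difference appears in just one coordinate.
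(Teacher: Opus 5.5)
Your proof is correct and gives exactly the constant $\sqrt{6k}R$, but it takes a calculus-free route that differs from the paper's.

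The paper computes the Jacobian of $\phi$ and notes that row $i$ has nonzero blocks $v^+-v_i^-$, $v$, and $-v$. It then bounds $\|J\|_2^2\le\|J\|_F^2=\sum_i(\|v^+-v_i^-\|^2+2\|v\|^2)\le 6kR^2$ and integrates the Jacobian along line segments, which uses convexity of the product of norm balls.

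You instead work with a finite difference directly. Your add-and-subtract identity plus Cauchy--Schwarz gives a per-coordinate bound with weights $(2R,R,R)$. Your weighted step $(2\alpha+\beta+\gamma_i)^2\le 6(\alpha^2+\beta^2+\gamma_i^2)$ is the discrete counterpart of the paper's row bound $4R^2+R^2+R^2$. The bookkeeping is therefore the same: a $6R^2$ bound per row, summed over $k$ rows.

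What your version buys is that it needs no differentiation, no mean-value or integration step, and no appeal to convexity of the domain. Your intermediate bound $6R^2(k\alpha^2+k\beta^2+\sum_i\gamma_i^2)$ also makes explicit that the factor $k$ is only incurred in the anchor and positive directions.

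What the paper's Jacobian template buys is reuse: it is applied again, essentially unchanged, to the spectral contrastive map in the next lemma. There the quadratic terms $(v^\top v_i^-)^2$ would make an add-and-subtract argument noticeably messier.
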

\begin{proof}
    The nonzero blocks in row $i$ of the Jacobian $J$ are $v^+-v_i^-$, $v$, and $-v$, corresponding to $v$, $v^+$, and $v_i^-$. Hence
    \begin{equation*}
        \|J\|_2^2\le\|J\|_F^2
        =\sum_{i=1}^k\bigl(\|v^+-v_i^-\|_2^2+2\|v\|_2^2\bigr)
        \le6kR^2.
    \end{equation*}
    Integrating the Jacobian along line segments in the convex product of norm balls proves the claim.
\end{proof}

\begin{lemma}
\label{lemma-lipschitzness-of-spectral}
The spectral contrastive function $\phi:\sR^{(k+2)d}\rightarrow\sR^k$ with $\phi(v,v^+,v_1^-,\ldots,v_k^-)=\left(2v^\top v^+-(v^\top v_i^-)^2\right)_{i\in[k]}$ is $\sqrt{12kR^2+12kR^6}$-Lipschitz continuous when $\|v\|_2,\|v^+\|_2,\|v_i^-\|_2\le R$.
\end{lemma}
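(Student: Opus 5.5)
The plan mirrors the proof of Lemma \ref{lemma-lipschitzness-of-margin}. I would bound the Frobenius norm of the Jacobian $J$ of $\phi$ uniformly on the convex product of norm balls, and then pass from the Jacobian bound to Lipschitz continuity by integrating along line segments. Writing $\phi_i=2v^\top v^+-(v^\top v_i^-)^2$, the only nonzero blocks in row $i$ of $J$ are
\begin{equation*}
\frac{\partial\phi_i}{\partial v}=2v^+-2(v^\top v_i^-)v_i^-,\qquad
\frac{\partial\phi_i}{\partial v^+}=2v,\qquad
\frac{\partial\phi_i}{\partial v_i^-}=-2(v^\top v_i^-)v.
\end{equation*}
Row $i$ has zero entries in the blocks of $v_j^-$ for $j\neq i$.

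The key step is bounding each block norm with the triangle and Cauchy--Schwarz inequalities, using $\|v\|,\|v^+\|,\|v_i^-\|\le R$.
\begin{itemize}
\item The second block satisfies $\|2v\|^2\le4R^2$.
\item The third block satisfies $\|2(v^\top v_i^-)v\|^2\le4R^6$.
\item The first block satisfies $\|2v^+-2(v^\top v_i^-)v_i^-\|^2\le(2R+2R^3)^2\le8R^2+8R^6$, using $(a+b)^2\le2a^2+2b^2$.
\end{itemize}
Summing the three gives a row contribution of at most $12R^2+12R^6$. Summing over the $k$ rows then gives $\|J\|_2^2\le\|J\|_F^2\le12kR^2+12kR^6$, which is exactly the claimed constant squared.

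Finally, the domain $\{\|v\|,\|v^+\|,\|v_i^-\|\le R\}$ is a product of balls and therefore convex. For two points $z,z'$ in it, the segment between them stays in the domain. Hence $\phi(z)-\phi(z')=\int_0^1J(z'+t(z-z'))(z-z')\,dt$, which gives $\|\phi(z)-\phi(z')\|\le\sup\|J\|_2\,\|z-z'\|$.

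The only point requiring care is the first block. The two terms there do not combine neatly, and a loose bound such as $(2R+2R^3)^2$ taken at face value has a cross term $8R^4$. Applying $(a+b)^2\le2a^2+2b^2$ absorbs it into the stated form, so no genuine obstacle remains.
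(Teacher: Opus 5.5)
Your proposal is correct and follows the paper's proof step for step. Both use the same three nonzero Jacobian blocks per row and the same block bounds, giving $\|J\|_2^2\le\|J\|_F^2\le 12kR^2+12kR^6$, and both pass to Lipschitz continuity through convexity of the product of balls. Your explicit use of $(a+b)^2\le 2a^2+2b^2$ on the first block just fills in a step the paper leaves implicit.
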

\begin{proof}
    To show that $\phi$ is Lipschitz continuous, we only need to prove that its Jacobian is bounded. Indeed, for all $i\in[k]$,
    \begin{equation*}
        \frac{\partial\phi_i}{\partial v}=2v^+-2v_i^-(v^\top v_i^-),\qquad
        \frac{\partial\phi_i}{\partial v^+}=2v,\qquad
        \frac{\partial\phi_i}{\partial v_j^-}=-2v(v^\top v_j^-)\mathbb{I}(i=j).
    \end{equation*}
    The Frobenius norm of the Jacobian $J$ therefore satisfies
    \begin{equation*}
        \begin{aligned}
            \|J\|_F^2
            &=\sum_{i=1}^k\left(\|2v^+-2v_i^-(v^\top v_i^-)\|_2^2
                +4\|v\|_2^2+4\|v(v^\top v_i^-)\|_2^2\right)\\
            &\le12kR^2+12kR^6.
        \end{aligned}
    \end{equation*}
    Since $\|J\|_2\le\|J\|_F$ and the domain is convex, the stated Lipschitz bound follows.
\end{proof}

Direct differentiation gives sharper tuple Lipschitz constants. For InfoNCE, write $q_i=e^{v^\top(v_i^--v^+)}/(1+\sum_j e^{v^\top(v_j^--v^+)})$ and $s=\sum_iq_i\le1$. Then
\begin{equation*}
    \nabla_v\ell=\sum_iq_i(v_i^--v^+),\qquad
    \nabla_{v^+}\ell=-sv,\qquad
    \nabla_{v_i^-}\ell=q_iv,
\end{equation*}
so $\|\nabla\ell\|^2\le R^2(5s^2+\sum_iq_i^2)\le6R^2$ and $L\le\sqrt{6}R$. Its values lie in $[\log(1+ke^{-2R^2}),\log(1+ke^{2R^2})]$, an interval of length at most $4R^2$. For SpeCL, differentiation gives
\begin{equation*}
    \nabla_v\ell=-2v^++\frac{2}{k}\sum_i(v^\top v_i^-)v_i^-,\qquad
    \nabla_{v^+}\ell=-2v,\qquad
    \nabla_{v_i^-}\ell=\frac{2}{k}(v^\top v_i^-)v,
\end{equation*}
and hence $\|\nabla\ell\|^2\le4[(R+R^3)^2+R^2+R^6/k]\le8(R+R^3)^2$. Thus $L\le2\sqrt{2}R(1+R^2)$, and its values lie in $[-2R^2,2R^2+R^4]$. The conditions of Theorem \ref{thm:generalization-bound-for-contrastive-learning} hold with $(\eta,\zeta_1(R),\zeta_2(R))=(R,\sqrt{6},4R^2)$ for InfoNCE and $(2R,\sqrt{2}(1+R^2),4R^2+R^4)$ for SpeCL.

\subsection{Generalization Bounds for Contrastive Learning}

\subsubsection{Proof of Theorem \ref{thm:bound-for-contrastive-losses}}

\begin{proof}
The key idea in the proof is the use of the Lipschitzness of the loss function $\ell$ and Jensen's inequality based on the convexity of $\ell$ with respect to $f(x^+)$ and $f(x_i^-)$. The unsupervised loss uses a random point from a class as a classifier; using the mean of the class as the classifier should only make the loss lower. Let $\mu_c=\E_{x\sim\gD^c} f(x)$ be the mean representation of the class $c\in\gT$ and $\mu_{y_x}$ denote the mean representation of the class of the sample $x$. Then we have
\begin{equation*}
\begin{aligned}
\Lcon(f,\gD)=&\E_{\substack{(x,x^+)\sim \Dpos\\\{x_i^-\}_{i=1}^k\iidsim \gD}}\ell\left(f(x),f(x^+),\{f(x_i^-)\}_{i=1}^k\right)\\
=&\E_{x\sim\gD}\E_{x^+\sim\Dpos(\cdot|x)}\E_{\{x_i^-\}_{i=1}^k\iidsim \gD}\ell\left(f(x),f(x^+),\{f(x_i^-)\}_{i=1}^k\right)\\
\ge &\E_{x\sim\gD}\E_{\{x_i^-\}_{i=1}^k\iidsim \gD}\ell\left(f(x),\E_{x^+\sim\Dpos(\cdot|x)} f(x^+),\{f(x_i^-)\}_{i=1}^k\right)\\
\ge &\E_{x\sim\gD}\E_{\{c_i\}_{i=1}^k\iidsim\rho_{\gT}}\ell\left(f(x),\mu_{y_x}+\E_{x^+\sim\Dpos(\cdot|x)} f(x^+)-\mu_{y_x},\{\mu_{c_i}\}_{i=1}^k\right)\\
\ge & \E_{x\sim\gD}\E_{\{c_i\}_{i=1}^k\iidsim\rho_{\gT}}\ell\left(f(x),\mu_{y_x}, \{\mu_{c_i}\}_{i=1}^k\right)\\
&\quad-\eta\cdot\E_{x\sim\gD}\left\|\E_{x^+\sim\Dpos(\cdot|x)} f(x^+)-\mu_{y_x}\right\|\\
=& \E_{x\sim\gD}h\left(\mu_{y_x}^\top f(x),\{\mu_{c}^\top f(x)\}_{c\not=y_x,c\in\gT}\right) -\eta\cdot\Disc(f),
% =&\E_{x\sim\gD}\E_{\{c_i\}_{i=1}^k\iidsim\rho_{\gT}}\ell\left(f(x),\mu_{y_x}, \{\mu_{c_i}\}_{i=1}^k\right)-\eta\cdot\Disc(f).
\end{aligned}
\end{equation*}
where $W=(\mu_c)_{c\in\gT}^\top$ and $h\left(\mu_{y_x}^\top f(x),\{\mu_{c}^\top f(x)\}_{c\not=y_x,c\in\gT}\right)=\E_{\{c_i\}_{i=1}^k\iidsim\rho_{\gT}}\ell\left(f(x),\mu_{y_x}, \{\mu_{c_i}\}_{i=1}^k\right)$. The first and second inequalities follow by applying Jensen's inequality and the convexity of $\ell$ with respect to $f(x^+)$ and $f(x_i^-)$, and the third inequality follows from the Lipschitzness of $\ell$. Under the stated class-constant condition, $\Disc(f)=0$ and all three inequalities are equalities.

\end{proof}

\subsubsection{Proof of Proposition \ref{prop:tightness}}

\begin{proof}
    For any $f'$ satisfying $\|f'(x)\|\le R$, let $\bar{\mu}'=\E_{x\sim\gD}f'(x)$. Since $x^+=x$, the InfoNCE loss is $\log[1+\sum_{i=1}^ke^{z_i}]$ with $z_i=f'(x)^\top f'(x_i^-)-\|f'(x)\|^2$, which is convex in $(z_1,\ldots,z_k)$. Applying Jensen's inequality to the negatives and then to the anchor gives
    \begin{equation*}
        \begin{aligned}
            \Lcon(f')&\ge\E_{x\sim\gD}\log\left[1+ke^{f'(x)^\top\bar{\mu}'-\|f'(x)\|^2}\right]\\
            &\ge\log\left[1+ke^{\|\bar{\mu}'\|^2-\E_{x\sim\gD}\|f'(x)\|^2}\right]
            \ge\log\left(1+ke^{-R^2}\right).
        \end{aligned}
    \end{equation*}

    Let $m=\ceil{k/(2\eps)}$, and let $\{u_{c,i}\}_{c\in\gT,i\in[m]}$ be orthonormal vectors in $\sR^d$, which exist since $d\ge m|\gT|$. Let $\gD$ assign probability $\rT(c)/(2m)$ to each of $2m$ distinct points $x_{c,i}$ and $x_{c,i}'$, $i\in[m]$, of class $c$, and set $f(x_{c,i})=Ru_{c,i}$ and $f(x_{c,i}')=-Ru_{c,i}$. Then $\E_{x\sim\gD^c}f(x)=0$ for every $c\in\gT$, so the mean classifier is $W=0$ and $\Lsup(\gD_{\gT},Wf,h)=\log(1+k)$. Moreover, $x^+=x$ and $\|f(x)\|=R$ give $\Disc(f)=R$. For an anchor $x$ and $z_i$ defined by $f$, we have $z_i=0$ if $x_i^-=x$ and $z_i\le-R^2$ otherwise. Letting $J$ be the number of negatives equal to $x$, we obtain
    \begin{equation*}
        \log\left[1+\sum_{i=1}^ke^{z_i}\right]\le\log\left[1+J+(k-J)e^{-R^2}\right]\le\log\left(1+ke^{-R^2}\right)+J.
    \end{equation*}
    Since $\E J=k\rT(y_x)/(2m)\le\eps$, we have $\Lcon(f)\le\log(1+ke^{-R^2})+\eps\le\inf_{f'}\Lcon(f')+\eps$. Finally, $\Lcon(f)+\eta\cdot\Disc(f)\le\log(1+ke^{-R^2})+R^2+\eps=\log(e^{R^2}+k)+\eps$, which rearranges to the stated inequality.
\end{proof}

\subsubsection{Proof of Lemma \ref{lemma:infonce-ce}}

\begin{proof}
    Write $C=|\gT|$, $\mu_c=\E_{x\sim\gD^c}f(x)$, $\bar{\mu}=C^{-1}\sum_c\mu_c$, and $\Delta=C^{-1}\sum_c\|\mu_c\|^2-\|\bar{\mu}\|^2\le R^2$. Define
    \begin{equation*}
        \Phi(z)=\sum_{j=0}^k\binom{k}{j}C^{-j}(1-C^{-1})^{k-j}
        \log[1+j+(k-j)e^z].
    \end{equation*}
    Conditioning on the number of negative classes equal to the target class, and applying Jensen's inequality to the other negative classes, the anchor, and the target class, gives
    \begin{equation*}
        \gH(f)\ge\Phi\left(-\frac{C\Delta}{C-1}\right)
        \ge\Phi\left(-\frac{CR^2}{C-1}\right).
    \end{equation*}
    Every class-constant representation whose class means form a simplex ETF of radius $R$ attains this bound, and such representations exist since their Gram matrix $\frac{CR^2}{C-1}(\mI-\frac{1}{C}\vone\vone^\top)$ is positive semidefinite of rank $C-1\le d$. Since $\Phi$ is strictly increasing, every global minimizer has $\Delta=R^2$, hence $\bar{\mu}=0$ and $\|\mu_c\|=R$ for every $c$. Moreover, $\E_{x\sim\gD^c}\|f(x)-\mu_c\|^2=\E_{x\sim\gD^c}\|f(x)\|^2-\|\mu_c\|^2\le R^2-\|\mu_c\|^2=0$, so $f(x)=\mu_c$ almost surely. The event with exactly $k-1$ negatives in the target class has positive probability. Equality in its strictly convex scalar Jensen step implies that $\mu_c^\top\mu_{c'}$ is constant over $c'\neq c$. As $\bar{\mu}=0$, this constant is $-R^2/(C-1)$. Hence $f$ minimizes $\gH(f)$ if and only if $f(x)=\mu_{y_x}$ almost surely, where $\|\mu_c\|=R$ and $\mu_c^\top\mu_{c'}=-R^2/(C-1)$ for $c\neq c'$.

    Applying Jensen's inequality in the same way to the CE risk gives
    \begin{equation*}
        \begin{aligned}
            &\E_{(x,y)\sim\gD_\gT}\log\left[1+\sum_{c\neq y}e^{f(x)^\top(\mu_c-\mu_y)}\right]\\
            &\qquad\ge\log\left[1+(C-1)e^{-C\Delta/(C-1)}\right]
            \ge\log\left[1+(C-1)e^{-CR^2/(C-1)}\right].
        \end{aligned}
    \end{equation*}
    Every minimizer characterized above attains this bound over the same mean-classifier representations. Conversely, equality in this chain requires $\Delta=R^2$, which again implies $\bar{\mu}=0$, $\|\mu_c\|=R$, and $f(x)=\mu_c$ almost surely. Equality in the strictly convex Jensen step over $c\neq y$ then implies that $\mu_y^\top\mu_c$ is constant over $c\neq y$, so every minimizer of the CE risk is also a minimizer of $\gH(f)$, completing the proof.
\end{proof}

\subsubsection{Proof of Theorem \ref{thm:generalization-bound-for-contrastive-learning}}

\begin{proof}
Let $w_c=\E_{x\sim\gD^c}\widehat{f}(x)$ and let $W$ have rows $w_c^\top$. Then $\Lsup(\gD_{\gT},W\widehat{f},h)=\E_{x\sim\gD}h(w_{y_x}^\top\widehat{f}(x),\{w_c^\top\widehat{f}(x)\}_{c\neq y_x,c\in\gT})$. According to Theorem \ref{thm:bound-for-contrastive-losses}, we have
\begin{equation*}
    \Lsup(\gD_{\gT}, W\widehat{f},h)\le \Lcon(\widehat{f})+\eta\cdot\Disc(\widehat{f}).
\end{equation*}
We further apply Theorem \ref{theorem:generalization-bound-rademacher}. Then, with probability at least $1-\delta$, for all $f\in\gF$,
\begin{equation*}
    \Lsup(\gD_{\gT}, W\widehat{f},h)\le  \Lcon(f)+\eta\cdot\Disc(\widehat{f})+O\left(\frac{\eta\zeta_1(R)\gR_\gS(\gF)}{M}+\zeta_2(R)\sqrt{\frac{\log(2/\delta)}{M}}\right),
\end{equation*}
which finishes the proof.
\end{proof}

The vector-contraction inequality is as follows:
\begin{lemma}[Corollary 4 in \citet{maurer2016vector}]
\label{lipschitz-rademacher}
Let $Z$ be any set and $\gS=\{z_j\}_{j=1}^M\in Z^M$. Let $\tilde{\gF}$ be a class of functions $\tilde{f}:Z\rightarrow \sR^n$ and $h:\sR^n\rightarrow \sR$ be $L$-Lipschitz. For all $\tilde{f}\in\tilde{\gF}$, let $g_{\tilde{f}}=h\circ \tilde{f}$. Then
\begin{equation}
    \E_{\sigma\sim\{\pm 1\}^M}\left[\sup_{\tilde{f}\in\tilde{\gF}}\langle \sigma,(g_{\tilde{f}})_{|\gS}\rangle\right]\le \sqrt{2}L \E_{\sigma\sim\{\pm 1\}^{nM}}\left[\sup_{\tilde{f}\in\tilde{\gF}}\langle \sigma,\tilde{f}_{|\gS}\rangle\right],
\end{equation}
where $\tilde{f}_{|\gS}=\left(\tilde{f}_t(z_j)\right)_{t\in[n],j\in[M]}$ and all signs are independent Rademacher variables.
\end{lemma}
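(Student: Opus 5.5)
The plan is to follow Maurer's argument: replace the scalar Rademacher variables one sample at a time by blocks of $n$ independent signs, and then induct over $j\in[M]$. The whole argument rests on a single-coordinate comparison. For any function $\psi:\tilde{\gF}\to\sR$ and any fixed point $z\in Z$, I will show
\begin{equation*}
\E_{\epsilon\sim\{\pm1\}}\sup_{\tilde f\in\tilde\gF}\left[\psi(\tilde f)+\epsilon\, h(\tilde f(z))\right]\le\E_{\sigma\sim\{\pm1\}^n}\sup_{\tilde f\in\tilde\gF}\left[\psi(\tilde f)+\sqrt2L\sum_{t=1}^n\sigma_t\tilde f_t(z)\right].
\end{equation*}
Given this, I process $j=1,\dots,M$ in turn. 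At step $j$, condition on all signs other than those attached to $z_j$. Let $\psi$ collect the terms already converted to vector signs together with the terms not yet converted. Applying the comparison with $z=z_j$ and taking expectations over the remaining signs (Fubini) swaps one scalar term for its vector counterpart. After $M$ steps the left-hand side of the lemma has become the right-hand side.

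For the single-coordinate comparison, I first write the left side as half the sum over $\epsilon=\pm1$. Combining the two suprema into one supremum over pairs gives
\begin{equation*}
\tfrac12\sup_{\tilde f,\tilde f'}\left[\psi(\tilde f)+\psi(\tilde f')+h(\tilde f(z))-h(\tilde f'(z))\right].
\end{equation*}
By the $L$-Lipschitzness of $h$, this is at most
\begin{equation*}
\tfrac12\sup_{\tilde f,\tilde f'}\left[\psi(\tilde f)+\psi(\tilde f')+L\|\tilde f(z)-\tilde f'(z)\|\right].
\end{equation*}
The key probabilistic input is the Khintchine inequality with Szarek's optimal constant: for any $v\in\sR^n$,
\begin{equation*}
\|v\|\le\sqrt2\,\E_\sigma\left|\sum_t\sigma_tv_t\right|.
\end{equation*}
I plan to cite this rather than reprove it.

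Fix a pair $(\tilde f,\tilde f')$ and set $v=\tilde f(z)-\tilde f'(z)$. For each realization of $\sigma$, the absolute value $|\langle\sigma,v\rangle|$ equals $\langle\sigma,g(z)-g'(z)\rangle$ for one of the two orderings $(g,g')\in\{(\tilde f,\tilde f'),(\tilde f',\tilde f)\}$. The quantity $\psi(\tilde f)+\psi(\tilde f')$ is symmetric in the pair, so it is unaffected by this choice. Hence
\begin{equation*}
\psi(\tilde f)+\psi(\tilde f')+\sqrt2L\,\E_\sigma|\langle\sigma,v\rangle|\le\E_\sigma\sup_{g,g'}\left[\psi(g)+\psi(g')+\sqrt2L\langle\sigma,g(z)-g'(z)\rangle\right].
\end{equation*}
The supremum on the right splits into two separate suprema. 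Because $-\sigma$ has the same distribution as $\sigma$, both equal $\E_\sigma\sup_g[\psi(g)+\sqrt2L\langle\sigma,g(z)\rangle]$. Taking the supremum over $(\tilde f,\tilde f')$ on the left and dividing by $2$ yields the claimed comparison.

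The main obstacle is this sign-choice step. The Lipschitz bound produces a norm, Khintchine converts it into an expected absolute value, and the absolute value must then be removed using the symmetry of $\psi(\tilde f)+\psi(\tilde f')$ in the pair. One also has to be careful that suprema may not be attained. I would handle this by working with $\varepsilon$-maximizing pairs and letting $\varepsilon\to0$, which is valid whenever the relevant expectations are finite.

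Once the comparison is established, the induction is routine bookkeeping. The constant $\sqrt2$ comes entirely from Szarek's constant in the Khintchine inequality.
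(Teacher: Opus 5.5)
Your proposal is correct. In substance it is Maurer's own proof of the corollary the paper cites: the pairing identity for a single Rademacher sign, the Lipschitz bound, Szarek's sharp Khintchine constant $\|v\|\le\sqrt{2}\,\E_\sigma|\langle\sigma,v\rangle|$, removal of the absolute value using the symmetry of $\psi(\tilde f)+\psi(\tilde f')$ and of $\sigma\mapsto-\sigma$, and a sample-by-sample replacement of scalar signs by blocks of $n$ signs.

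The paper reproduces none of this. It applies Corollary 4 of \citet{maurer2016vector} to the finite restrictions $\tilde f_{|\gS}$. It then remarks that if $h$ is only Lipschitz on the norm-bounded representation domain, one can pass to an $L$-Lipschitz extension on $\sR^n$. The citation is shorter. Your route is self-contained apart from Szarek's theorem, and has a small side benefit: it only uses $h(\tilde f(z))-h(\tilde f'(z))\le L\|\tilde f(z)-\tilde f'(z)\|$ at image points, so the paper's extension step becomes unnecessary.

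Your caveat about non-attained suprema is not needed. You bound each fixed pair $(\tilde f,\tilde f')$ before taking the supremum, and every expectation involved is a finite average over $\{\pm1\}$ or $\{\pm1\}^n$. So no near-maximizers, measurability, or Fubini issues arise, and values in $(-\infty,+\infty]$ are harmless.
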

\begin{proof}
    Apply Corollary 4 of \citet{maurer2016vector} to the finite-sample restrictions $\{\tilde{f}_{|\gS}:\tilde{f}\in\tilde{\gF}\}$ with the Euclidean Lipschitz constant $L$. The independent coordinate signs in that corollary give exactly the right-hand side. If $h$ is only defined on the relevant representation domain, a scalar $L$-Lipschitz extension to $\sR^n$ gives the same values on those restrictions.
\end{proof}

\begin{theorem}
\label{theorem:generalization-bound-rademacher}
Let the loss on representation tuples be $L$-Lipschitz with respect to the Euclidean norm and take values in $[a,a+B]$ for some $B>0$ on the domain where each representation has norm at most $R$. For $\delta\in(0,1)$, with probability at least $1-\delta$ over $M$ i.i.d. training tuples $\gS=\{(x_j,x_j^+,x_{j1}^-,\ldots,x_{jk}^-)\}_{j=1}^M$, the empirical minimizer $\widehat{f}$ satisfies, for all $f\in\gF$,
\begin{equation}
    \Lcon(\widehat{f})\le\Lcon(f)+O\left(\frac{L\gR_\gS(\gF)}{M}+B\sqrt{\frac{\log(2/\delta)}{M}}\right),
\end{equation}
where $\gR_\gS(\gF)=\E_{\sigma\sim\{\pm1\}^{(k+2)dM}}[\sup_{f\in\gF}\langle\sigma,f_{|\gS}\rangle]$ and $f_{|\gS}=\left(f_t(x_j),f_t(x_j^+),f_t(x_{j1}^-),\ldots,f_t(x_{jk}^-)\right)_{j\in[M],t\in[d]}$.
\end{theorem}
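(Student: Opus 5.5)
The proof follows the standard excess-risk decomposition for empirical risk minimization. For any fixed comparator $f\in\gF$, write
\begin{equation*}
\Lcon(\widehat{f})-\Lcon(f)=\bigl[\Lcon(\widehat{f})-\widehat{L}_{\mathrm{con}}(\widehat{f})\bigr]+\bigl[\widehat{L}_{\mathrm{con}}(\widehat{f})-\widehat{L}_{\mathrm{con}}(f)\bigr]+\bigl[\widehat{L}_{\mathrm{con}}(f)-\Lcon(f)\bigr].
\end{equation*}
The middle bracket is nonpositive because $\widehat{f}$ minimizes $\widehat{L}_{\mathrm{con}}$ over $\gF$. The first bracket is at most $\sup_{g\in\gF}(\Lcon(g)-\widehat{L}_{\mathrm{con}}(g))$. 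The third bracket involves a single fixed function, so Hoeffding's inequality controls it by $B\sqrt{\log(2/\delta)/(2M)}$ with probability $1-\delta/2$. Because this term is taken pointwise for the comparator, we need the statement ``for all $f$'' to hold simultaneously. I would therefore bound it instead by the same uniform supremum $\sup_{g}(\widehat{L}_{\mathrm{con}}(g)-\Lcon(g))$, which gets the same treatment. A union bound over the two one-sided deviations, each at level $\delta/2$, explains the $\log(2/\delta)$.

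For the uniform deviation, I would use the standard Rademacher argument on the loss class $\{z\mapsto\ell(f,z):f\in\gF\}$, where $z$ is a training tuple. Each tuple loss lies in $[a,a+B]$, so changing one tuple changes the supremum by at most $B/M$. McDiarmid's inequality then gives, with probability $1-\delta/2$,
\begin{equation*}
\sup_g\bigl(\Lcon(g)-\widehat{L}_{\mathrm{con}}(g)\bigr)\le\frac{2}{M}\E_{\sigma}\sup_g\sum_j\sigma_j\,\ell(g,z_j)+3B\sqrt{\frac{\log(4/\delta)}{2M}}.
\end{equation*}
This uses symmetrization followed by a second McDiarmid step to pass from expected to empirical Rademacher complexity. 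The mirror-image bound holds for the other direction.

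The main step is to reduce the loss-class Rademacher complexity to $\gR_\gS(\gF)$. Define $\tilde f(z_j)=(f(x_j),f(x_j^+),f(x_{j1}^-),\ldots,f(x_{jk}^-))\in\sR^{(k+2)d}$. The loss is $h\circ\tilde f$, where $h$ is the map from representation tuples to $\ell$, and $h$ is $L$-Lipschitz on the product of norm balls that contains the range because $\|f\|\le R$. Lemma \ref{lipschitz-rademacher} with $n=(k+2)d$ gives
\begin{equation*}
\E_\sigma\sup_f\sum_j\sigma_j\,\ell(f,z_j)\le\sqrt{2}L\,\E_\sigma\sup_f\langle\sigma,f_{|\gS}\rangle=\sqrt{2}L\,\gR_\gS(\gF).
\end{equation*}
Substituting this into the previous display yields the $O(L\gR_\gS(\gF)/M)$ term, and the McDiarmid terms yield $O(B\sqrt{\log(2/\delta)/M})$.

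The obstacle I expect is a technical one in applying the Lipschitz step. The loss is only Lipschitz on the bounded domain, so I would invoke the extension remark from the proof of Lemma \ref{lipschitz-rademacher}. I also need to make sure $\ell$ is applied to the raw tuple rather than to inner products; this is acceptable because $L$ is the tuple Lipschitz constant given in the appendix. Finally, the two uniform deviations and the minimality of $\widehat{f}$ combine on the same high-probability event, so the bound holds for all $f\in\gF$ simultaneously.
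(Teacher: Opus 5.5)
Your proposal is correct and follows essentially the same route as the paper: a two-sided uniform deviation bound from the classical empirical Rademacher inequality applied in both directions with a union bound, then the vector-contraction Lemma~\ref{lipschitz-rademacher} with $n=(k+2)d$ to reduce to $\gR_\gS(\gF)$, then empirical optimality of $\widehat{f}$ to obtain $\Lcon(\widehat{f})\le\Lcon(f)+2\Gamma_M$ for all $f\in\gF$ on the same event. The only cosmetic difference is that the paper first rescales the loss to $[0,1]$ via $(\ell-a)/B$, which is equivalent to your direct use of the range length $B$ in McDiarmid's inequality.
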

\begin{proof}
    First, we use the classical empirical Rademacher bound. For a class $\gG$ of functions from $Z$ to $[0,1]$ and an i.i.d. sample $\gS=\{z_j\}_{j=1}^M$, with probability at least $1-\delta/2$, for all $g\in\gG$,
    \begin{equation*}
        \E[g(z)]\le\frac1M\sum_{j=1}^M g(z_j)
        +\frac{2\gR_\gS(\gG)}{M}+3\sqrt{\frac{\log(4/\delta)}{2M}}.
    \end{equation*}
    Here $\gR_\gS(\gG)$ is unnormalized. Apply this bound with $Z=\gX^{k+2}$ and
    \begin{equation*}
        \gG=\left\{g_f(z)=\frac{\ell(f,x,x^+,\{x_i^-\}_{i=1}^k)-a}{B}:f\in\gF\right\}.
    \end{equation*}
    Apply it also to $1-\gG$ and use a union bound. These two classes have the same empirical Rademacher complexity by symmetry of the signs.

    To bound this complexity, set $n=d(k+2)$ and
    \begin{equation*}
        \tilde{\gF}=\left\{\tilde{f}(x,x^+,x_1^-,\ldots,x_k^-)
        =(f(x),f(x^+),f(x_1^-),\ldots,f(x_k^-)):f\in\gF\right\}.
    \end{equation*}
    The normalized loss is $L/B$-Lipschitz on the representation domain. Lemma \ref{lipschitz-rademacher} gives $\gR_\gS(\gG)\le\sqrt{2}(L/B)\gR_\gS(\gF)$. Therefore, with probability at least $1-\delta$, simultaneously for all $f\in\gF$,
    \begin{equation}
        \label{eq:generalization-bound-1}
        \left|\Lcon(f)-\widehat{L}_{\mathrm{con}}(f)\right|
        \le\Gamma_M\triangleq\frac{2\sqrt{2}L\gR_\gS(\gF)}{M}
        +3B\sqrt{\frac{\log(4/\delta)}{2M}}.
    \end{equation}
    Finally, empirical optimality gives $\Lcon(\widehat{f})\le\widehat{L}_{\mathrm{con}}(\widehat{f})+\Gamma_M\le\widehat{L}_{\mathrm{con}}(f)+\Gamma_M\le\Lcon(f)+2\Gamma_M$ for every $f\in\gF$, completing the proof. Substituting $L\le\eta\zeta_1(R)$ and $B\le\zeta_2(R)$ gives $2\Gamma_M=\Gen_M$ up to universal constants.
\end{proof}

The Rademacher complexity $\gR_\gS(\gF)$ depends on $k$ through the representation tuples.

For a concrete capacity bound, consider $f_A(x)=Ax$ with $\|A\|_F\le\Lambda$, $\|x\|\le R_x$, and $\Lambda R_x\le R$. Enumerate the $M(k+2)$ input occurrences in $\gS$ as $z_1,\ldots,z_{M(k+2)}$, and let $\sigma_j\in\{\pm1\}^d$ have independent coordinates. Frobenius duality and Cauchy--Schwarz give
\begin{equation}
    \label{eq:linear-rademacher-bound}
    \gR_\gS(\gF)
    =\Lambda\E_\sigma\left\|\sum_{j=1}^{M(k+2)}\sigma_jz_j^\top\right\|_F
    \le\Lambda\sqrt{d\sum_{j=1}^{M(k+2)}\|z_j\|^2}
    \le\Lambda R_x\sqrt{dM(k+2)}.
\end{equation}

% \begin{lemma}
% With probability at least $1-\delta$ over the training set $\gS$, for all $f\in\gF$, we have
% \begin{equation}
%     \Lcon(\widehat{f})\le \Lcon(f)+Gen_{M}.
% \end{equation}
% \end{lemma}

\subsubsection{Proof of Corollary \ref{cor:classification-error}}

\begin{proof}
    Fix $(x,y)$ and write $g_c=w_c^\top f(x)$. Grouping the $k$ negative classes by their values gives
    \begin{equation*}
        h(Wf(x),y)=\E\log\left[1+k_y+\sum_{c\neq y}k_ce^{g_c-g_y}\right]\ge h_0(y).
    \end{equation*}
    If $\hat{y}(x)\neq y$, then $g_c\ge g_y$ for $c=\hat{y}(x)$, and dropping the other nonnegative terms gives $h(Wf(x),y)\ge\E\log(1+k_y+k_c)=h_0(y)+\E\log(1+\frac{k_c}{1+k_y})\ge h_0(y)+\omega_k(y)$. Hence $h(Wf(x),y)-h_0(y)\ge\omega_k(y)\mathbb{I}(\hat{y}(x)\neq y)$. Taking expectations over $x\sim\gD^y$ and summing over $y$ with weights $\rT(y)$ yield
    \begin{equation*}
        \Lsup(\gD_{\gT},Wf,h)-\E_{y\sim\rT}h_0(y)\ge\sum_{y\in\gT}\rT(y)\omega_k(y)\sP_{x\sim\gD^y}\left[\hat{y}(x)\neq y\right].
    \end{equation*}
    Lower bounding $\omega_k(y)$ by $\min_{y'\in\gT}\omega_k(y')$, or $\rT(y)\omega_k(y)$ by $\min_{y'\in\gT}\rT(y')\omega_k(y')$, proves the two bounds. The final claim follows from Theorem \ref{thm:generalization-bound-for-contrastive-learning} with $\eta=R$.
\end{proof}

\subsubsection{Proof of Theorem \ref{thm:bound-supervised-CL}}
\label{sec:proof-supervised-CL}

\begin{proof}
    Let $w_c=\E_{x\sim\gD^c}\widehat{f}(x)$, and let $W\in\sR^{|\gT|\times d}$ have rows $w_c^\top$. Class-conditional independence of positive pairs gives $\Disc(\widehat{f})=0$. Conditioning on the anchor and the sampled negative classes, Jensen's inequality gives
    \begin{equation*}
        \begin{aligned}
            \Lconsup(\widehat{f},\gD;\ell)&\ge \E_{(x,y)\sim\gD_{\gT}}\E_{\{c_i\}_{i=1}^k\iidsim\rho_{\gT}^{\not y}}\ell\left(\widehat{f}(x),w_y,\{w_{c_i}\}_{i=1}^k\right)\\
            &=\Lsup(\gD_{\gT},W\widehat{f},h),
        \end{aligned}
    \end{equation*}
    where $h$ is the incurred supervised loss stated in the theorem.

    For $M$ i.i.d. supervised training tuples $\gS$, the argument based on Rademacher complexity and vector contraction in Theorem \ref{theorem:generalization-bound-rademacher} applies to the empirical minimizer $\widehat{f}$. Thus, with probability at least $1-\delta$, for all $f\in\gF$,
    \begin{equation*}
        \Lconsup(\widehat{f},\gD;\ell)\le \Lconsup(f,\gD;\ell)+\Gen_M,
    \end{equation*}
    where $\Gen_M$ is as in Theorem \ref{thm:generalization-bound-for-contrastive-learning}, with the empirical Rademacher complexity evaluated on $\gS$. Combining the inequalities completes the proof.
\end{proof}

\subsubsection{Proof of Proposition \ref{prop:regularized-bound}}
\label{sec:proof-regularized-bound}

\begin{proof}
    We first bound the estimation error of $\widehat{\Disc}_i$ uniformly over $\gF$. For $f\in\gF$, write $\mu_c=\E_{x\sim\gD^c}f(x)$ and $I_c=\{n:y_{x_n}=c\}$, and let $\widetilde{\Disc}_i(f)=\frac{1}{N}\sum_{n=1}^N\|f(x_n)-\mu_{y_{x_n}}\|$, whose expectation is $\Disc_i(f)$. Since $N_c(\widehat{\mu}_c-\mu_c)=\sum_{n\in I_c}(f(x_n)-\mu_c)$, the triangle inequality and $\|v\|=\sup_{\|u\|\le1}u^\top v$ give
    \begin{equation*}
        \left|\widehat{\Disc}_i(f)-\widetilde{\Disc}_i(f)\right|
        \le\frac{1}{N}\sum_{c\in\gT}\Bigl\|\sum_{n\in I_c}(f(x_n)-\mu_c)\Bigr\|
        =\sup_{\{u_c\}_{c\in\gT}\subseteq\gB}\frac{1}{N}\sum_{n=1}^Nu_{y_{x_n}}^\top\left(f(x_n)-\mu_{y_{x_n}}\right),
    \end{equation*}
    where $\gB$ is the unit ball in $\sR^d$ and every summand has zero mean. We apply the classical empirical Rademacher bound in the proof of Theorem \ref{theorem:generalization-bound-rademacher}, with $\delta/3$ in place of $\delta$, to $\gG_1=\{\|f(x)-\mu_{y_x}\|:f\in\gF\}$ in both directions and to $\gG_2=\{u_{y_x}^\top(f(x)-\mu_{y_x}):f\in\gF,\{u_c\}_{c\in\gT}\subseteq\gB\}$ in one direction. Their values lie in intervals of length $2R$ and $4R$, so with probability at least $1-\delta/2$, for all $f\in\gF$,
    \begin{equation*}
        \left|\widehat{\Disc}_i(f)-\Disc_i(f)\right|
        \le\frac{2\gR_{\gS'}(\gG_1)+2\gR_{\gS'}(\gG_2)}{N}+18R\sqrt{\frac{\log(12/\delta)}{2N}}.
    \end{equation*}
    To bound these complexities, let $\sigma_n,\sigma_n'\in\{\pm1\}^d$ have independent Rademacher coordinates. Since $\|\mu_c\|\le R$, $\E_\sigma\|\sum_{n\in I_c}\sigma_n\|\le\sqrt{dN_c}$, and $\sum_{c\in\gT}\sqrt{N_c}\le\sqrt{|\gT|N}$, we have
    \begin{equation*}
        \E_\sigma\sup_{f\in\gF}\sum_{n=1}^N\sigma_n^\top\left(f(x_n)-\mu_{y_{x_n}}\right)
        \le\gR_{\gS'}(\gF)+R\sum_{c\in\gT}\E_\sigma\Bigl\|\sum_{n\in I_c}\sigma_n\Bigr\|
        \le\gR_{\gS'}(\gF)+R\sqrt{d|\gT|N},
    \end{equation*}
    and similarly $\E_{\sigma'}\sup_{\{u_c\}_{c\in\gT}\subseteq\gB}\sum_{n=1}^N2R\sigma_n'^\top u_{y_{x_n}}\le2R\sqrt{d|\gT|N}$. The classes $\gG_1$ and $\gG_2$ are obtained by applying the $1$-Lipschitz map $v\mapsto\|v\|$ to $f(x)-\mu_{y_x}$ and the map $(a,b)\mapsto a^\top b/(2R)$, which is $\sqrt{2}$-Lipschitz on $\{\|a\|,\|b\|\le2R\}$, to $(2Ru_{y_x},f(x)-\mu_{y_x})$. By Lemma \ref{lipschitz-rademacher}, $\gR_{\gS'}(\gG_1)\le\sqrt{2}(\gR_{\gS'}(\gF)+R\sqrt{d|\gT|N})$ and $\gR_{\gS'}(\gG_2)\le2(\gR_{\gS'}(\gF)+3R\sqrt{d|\gT|N})$. Hence, with probability at least $1-\delta/2$,
    \begin{equation*}
        \sup_{f\in\gF}\left|\widehat{\Disc}_i(f)-\Disc_i(f)\right|
        \le\Gamma_N\triangleq\frac{7\gR_{\gS'}(\gF)+15R\sqrt{d|\gT|N}}{N}+18R\sqrt{\frac{\log(12/\delta)}{2N}}.
    \end{equation*}

    By Equation \ref{eq:generalization-bound-1} with $\delta/2$ in place of $\delta$, with probability at least $1-\delta/2$, $\sup_{f\in\gF}|\Lcon(f)-\widehat{L}_{\mathrm{con}}(f)|\le\Gamma_M$. On the intersection of both events, the definition of $\widehat{f}$ implies, for every $f\in\gF$,
    \begin{equation*}
        \begin{aligned}
            \Lcon(\widehat{f})+\eta\Disc_i(\widehat{f})
            &\le\widehat{L}_{\mathrm{con}}(\widehat{f})+\eta\widehat{\Disc}_i(\widehat{f})+\Gamma_M+\eta\Gamma_N\\
            &\le\widehat{L}_{\mathrm{con}}(f)+\eta\widehat{\Disc}_i(f)+\Gamma_M+\eta\Gamma_N\\
            &\le\Lcon(f)+\eta\Disc_i(f)+2\Gamma_M+2\eta\Gamma_N.
        \end{aligned}
    \end{equation*}
    By the triangle inequality and Jensen's inequality, $\Disc(\widehat{f})\le\Disc_a(\widehat{f})+\Disc_i(\widehat{f})$ as in Section \ref{sec:internal-discrepancy}. Applying Theorem \ref{thm:bound-for-contrastive-losses} to $\widehat{f}$, we obtain
    \begin{equation*}
        \Lsup(\gD_{\gT},W\widehat{f},h)\le\Lcon(\widehat{f})+\eta\Disc(\widehat{f})
        \le\Lcon(\widehat{f})+\eta\Disc_i(\widehat{f})+\eta\Disc_a(\widehat{f}).
    \end{equation*}
    Combining it with the previous display, taking the infimum over $f\in\gF$, and noting that $2\Gamma_M=\Gen_M$ and $2\Gamma_N=\Gen_N$ up to universal constants complete the proof.
\end{proof}

\subsubsection{Proof of Proposition \ref{prop:relaxed-kmeans}}

\begin{proof}
    Let $I_c=\{n:y_{x_n}=c\}$ for $c\in\gT$, and split $\widehat{L}_{\mathrm{k\text{-}means}}(f)=\widehat{L}_{\mathrm{w}}(f)+\widehat{L}_{\mathrm{b}}(f)$, where $\widehat{L}_{\mathrm{w}}(f)$ sums over the pairs $(i,j)$ from the same class and $\widehat{L}_{\mathrm{b}}(f)$ over the $N^2-\sum_{c\in\gT}N_c^2$ pairs from different classes. Each summand of $\widehat{L}_{\mathrm{b}}(f)$ lies in $[0,\tau]$, so $0\le\widehat{L}_{\mathrm{b}}(f)\le\tau(N^2-\sum_{c\in\gT}N_c^2)$, and the upper bound holds with equality under the additional condition. It therefore suffices to show that
    \begin{equation*}
        \frac{\tau N_{\min}N}{2R}\widehat{\Disc}_i(f)\le\widehat{L}_{\mathrm{w}}(f)\le2N_{\max}N\widehat{\Disc}_i(f).
    \end{equation*}
    Since $\|f(x)\|\le R$, all pairwise distances lie in $[0,2R]$, on which $\min\{a,\tau\}\ge\tau a/(2R)$ as $\tau\le2R$. For $i\in I_c$, Jensen's inequality gives $\sum_{j\in I_c}\|f(x_i)-f(x_j)\|\ge N_c\|f(x_i)-\widehat{\mu}_c\|$. Hence
    \begin{equation*}
        \widehat{L}_{\mathrm{w}}(f)\ge\frac{\tau}{2R}\sum_{c\in\gT}N_c\sum_{i\in I_c}\|f(x_i)-\widehat{\mu}_c\|\ge\frac{\tau N_{\min}N}{2R}\widehat{\Disc}_i(f).
    \end{equation*}
    Conversely, for $i,j\in I_c$, the triangle inequality gives $\min\{\|f(x_i)-f(x_j)\|,\tau\}\le\|f(x_i)-\widehat{\mu}_c\|+\|f(x_j)-\widehat{\mu}_c\|$. Summing over $i,j\in I_c$ and $c\in\gT$ yields
    \begin{equation*}
        \widehat{L}_{\mathrm{w}}(f)\le2\sum_{c\in\gT}N_c\sum_{i\in I_c}\|f(x_i)-\widehat{\mu}_c\|\le2N_{\max}N\widehat{\Disc}_i(f),
    \end{equation*}
    which completes the proof.
\end{proof}

\subsection{Implications for Mitigating Class Imbalance}

\subsubsection{Proof of Theorem \ref{thm:approximation-infonce-balanced-softmax}}
\label{sec:balanced-risk-interpretation}
To prove that $h=\tilde{h}+\log k + O(k^{-1})$ under the condition that $\|f(x)\|,\|w_c\|\le R$, we introduce $\overline{h}(x,y)\triangleq\log[1+k\E_{c\sim\rT}e^{f(x)^\top(w_c-w_y)}]$. We bound the Jensen gap using a second-order bound on the logarithm, as detailed in Lemma \ref{lemma:approximation-error} below. Its lower bound gives $\tilde{h}+\log k\le h+V_R/k$, where $V_R=(e^{R^2}-1)^2/2$. Combining this with Theorem \ref{thm:generalization-bound-for-contrastive-learning}, we can deduce
\begin{equation*}
    \Lsup(\gD_{\gT},W\widehat{f},\tilde{h})+\log k\le \Lcon(f)+\eta\cdot\Disc(\widehat{f})+\Gen_{M}+\frac{V_R}{k}.
\end{equation*}

\begin{lemma}
\label{lemma:approximation-error}
Assume $\|f(x)\|,\|w_c\|\le R$. For $\overline{h}(x,y)=\log[1+k\E_{c\sim\rT}e^{f(x)^\top(w_c-w_y)}]$, we denote its (biased) Monte Carlo estimate with $k$ i.i.d. random samples $c_i\sim\rT$, $i=1,\ldots,k$, by $\hat{h}(x,y,\{c_i\}_{i=1}^k)\triangleq \log\left[1+\sum_{i=1}^k e^{f(x)^\top(w_{c_i}-w_y)}\right]$. Then $h(x,y)=\E_{\{c_i\}_{i=1}^k\iidsim\rho_{\gT}}\hat{h}(x,y,\{c_i\}_{i=1}^k)$. Let $U_R=e^{2R^2}$ and $V_R=(e^{R^2}-1)^2/2$. Then
\begin{equation}
    -\frac{V_R}{k}\le h(x,y)-\log k-\tilde{h}(x,y)\le\frac{\min\{U_R,1/\rT(y)\}}{k}.
\end{equation}
Consequently, since $V_R<U_R$, $|h-\log k-\tilde{h}|\le U_R/k$.
\end{lemma}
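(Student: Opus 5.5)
Fix $(x,y)$ and write $a_c=e^{f(x)^\top(w_c-w_y)}$, so that $a_c\in[e^{-2R^2},e^{2R^2}]$ and $a_y=1$. Set $S=\sum_{i=1}^k a_{c_i}$ and $m=\E_{c\sim\rT}a_c$, so $\E S=km$. The identity $h=\E\hat h$ is immediate from the definitions. The quantity to control is
\[
h-\log k-\tilde h=\E\log(1+S)-\log(km)=\E\log\frac{1+S}{km},
\]
and I would split it as
\[
\E\log\frac{1+S}{km}=\E\log(1+S)-\log(1+km)+\log\frac{1+km}{km}.
\]

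\textbf{Lower bound.} The last term is nonnegative. For the Jensen gap I would use the elementary inequality $\log u\ge \log v+\frac{u-v}{v}-\frac{(u-v)^2}{2\min(u,v)^2}$. A cleaner route is the second-order bound $\log(1+S)\ge\log(1+km)+\frac{S-km}{1+km}-\frac{(S-km)^2}{2(1+\min\{S,km\})^2}$. Take $1+\min\{S,km\}\ge1+ke^{-R^2}\cdot$ something, or simply use the lower bound $1+\min\ge \max(1,\cdot)$. The linear term vanishes in expectation, and $\mathrm{Var}(S)=k\,\mathrm{Var}(a_c)$.

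To match $V_R=(e^{R^2}-1)^2/2$, I would exploit the ratio structure. Since $a_c/m$ lies in an interval whose endpoints have ratio at most $e^{2R^2}$, I would work with $Z=S/(km)$, which has mean $1$ and variance $\mathrm{Var}(a/m)/k$. Popoviciu's inequality on $a/m\in[\alpha,\beta]$ with $\alpha\le1\le\beta$ gives $\mathrm{Var}\le(\beta-1)(1-\alpha)$. I expect to need the center-of-range trick, using $a_c=e^{f^\top w_c}e^{-f^\top w_y}$, where the factor $e^{-f^\top w_y}$ cancels in $a/m$. Hence $a/m$ lies in a range of multiplicative width $e^{2R^2}$ and actually within $[e^{-R^2}\cdot,\ e^{R^2}\cdot]$ of a geometric center, which is what produces $(e^{R^2}-1)^2$.

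Then I would bound $\E\log(1+S)-\log(km)\ge\E\log Z\ge -\frac{\mathrm{Var}(Z)}{2\cdot(\text{lower end of }Z)^2}$, or use $\log z\ge (z-1)-\frac{(z-1)^2}{2z_{\min}^2}$ suitably. This is the step I expect to be the main obstacle: getting exactly $V_R/k$ requires choosing the Taylor remainder point carefully. I would first try the exact bound $\log z\ge z-1-\frac{(z-1)^2}{2z}$ combined with the fact that the $+1$ inside the log only helps.

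\textbf{Upper bound.} Since $\log$ is concave, Jensen gives $\E\log(1+S)\le\log(1+km)$, so the excess is at most $\log(1+\frac1{km})\le\frac1{km}$. Now $m\ge e^{-2R^2}$ gives $\frac{U_R}{k}$, and $m\ge\rT(y)a_y=\rT(y)$ gives $\frac{1}{\rT(y)k}$, which together yield the minimum. Finally, $V_R=(e^{R^2}-1)^2/2<e^{2R^2}/2<U_R$, and since $1/\rT(y)$ appears only in the min on the upper side, $|h-\log k-\tilde h|\le U_R/k$ follows.
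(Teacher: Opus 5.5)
\textbf{Gap: the lower bound is never established.} Your split $\E\log\frac{1+S}{km}=\bigl[\E\log(1+S)-\log(1+km)\bigr]+\log\bigl(1+\frac{1}{km}\bigr)$ is exactly the paper's decomposition into $-J$ and $\log(1+1/(km))$. Your upper bound (Jensen, $\log(1+t)\le t$, $m\ge e^{-2R^2}$, $m\ge\rT(y)$) is the paper's argument. You also correctly note that, with $s=f(x)^\top w_y$, $a=e^{-R^2-s}$ and $b=e^{R^2-s}$, every $a_c$ lies in $[a,b]$ with $b/a=e^{2R^2}$.

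None of the inequalities you list delivers the constant $V_R$:
\begin{itemize}
\item The Lagrange-remainder bounds (with $\min(u,v)^2$, $(1+\min\{S,km\})^2$ or $z_{\min}^2$ in the denominator) are valid. But they give only $\E\log Z\ge-\operatorname{Var}(Z)/(2\alpha^2)$ with $\alpha=a/m$, hence, after the variance bound, $-\frac{(b-m)(m-a)}{2ka^2}$. At $m=(a+b)/2$ this is $-\frac{(e^{2R^2}-1)^2}{8k}$, whose magnitude exceeds $V_R/k$ by the factor $(e^{R^2}+1)^2/4$.
\item Your ``exact bound'' $\log z\ge z-1-\frac{(z-1)^2}{2z}$ simplifies to $\log z\ge\frac12(z-z^{-1})$. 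This holds for $z\le1$ but is false for every $z>1$; at $z=e$ it reads $1\ge1.175$. Since $Z=S/(km)$ does take values above $1$, it cannot be applied.
\end{itemize}

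\textbf{The missing ingredient} is the one-sided bound $\log z\ge z-1-\frac{(z-1)^2}{2\min\{z,1\}}$ for all $z>0$. It follows by integrating $1/t-1$ between $z$ and $1$, using $1/t\le1/z$ when $z<1$ and $1-1/t\le t-1$ when $z>1$. The paper uses it in the form $-\log u\le1-u+\frac{(1-u)^2}{2\min\{u,1\}}$ with $u=X/\nu$. The point is that the minimum enters only to the first power.

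In your normalization the argument closes as follows:
\begin{itemize}
\item Since $\alpha\le1$, $\min\{Z,1\}\ge\alpha$, so $\E\log Z\ge-\operatorname{Var}(Z)/(2\alpha)\ge-\frac{(\beta-1)(1-\alpha)}{2k\alpha}$ with $\beta=b/m$. This uses the variance bound you wrote, which is the Bhatia--Davis inequality; Popoviciu's $(\beta-\alpha)^2/4$ would again lose the constant.
\item Then $\frac{(\beta-1)(1-\alpha)}{\alpha}=\frac{\beta}{\alpha}+1-\bigl(\beta+\frac{1}{\alpha}\bigr)\le(e^{R^2}-1)^2$ by AM--GM, since $\beta/\alpha=e^{2R^2}$. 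This is the precise form of your geometric-center heuristic and matches the paper's step $m+ab/m\ge2\sqrt{ab}$.
\item With $\E\log\frac{1+S}{km}\ge\E\log Z$, this gives the lower bound $-V_R/k$.
\end{itemize}
Dropping the $+1$ at the outset, as you do, is equivalent to the paper's route. The paper keeps $1/k$ inside $X$ and $\nu$ and discards it only through $\nu\min\{X,\nu\}\ge ma$.
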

\begin{proof}
    Fix $(x,y)$ and let $Z_i=e^{f(x)^\top(w_{c_i}-w_y)}$, $m=\E Z_1$, and $\bar{Z}=k^{-1}\sum_{i=1}^k Z_i$. Writing $s=f(x)^\top w_y$, $a=e^{-R^2-s}$, and $b=e^{R^2-s}$, the norm constraints imply $|f(x)^\top w_c|\le R^2$, so $a\le Z_i\le b$ with $b/a=e^{2R^2}$. Hence
    \begin{equation*}
        h-\log k=\E\log(\bar{Z}+1/k),\qquad \tilde{h}=\log m.
    \end{equation*}
    Let $X=\bar{Z}+1/k$ and $\nu=m+1/k$, so that $\E X=\nu$ and $X,\nu\ge a+1/k$. For $u>0$, integrating $1/s-1$ between $u$ and $1$ gives $-\log u\le1-u+\frac{(1-u)^2}{2\min\{u,1\}}$. Applying it with $u=X/\nu$ and taking expectations, the linear term vanishes. Together with Jensen's inequality, this gives
    \begin{equation}
    \label{eq:approximation-error-1}
    \begin{aligned}
        0\le J&\triangleq\overline{h}-h=\log(m+1/k)-\E\log(\bar{Z}+1/k)
        \le\E\frac{(X-\nu)^2}{2\nu\min\{X,\nu\}}\\
        &\le\frac{\operatorname{Var}(Z_1)}{2kma}
        \le\frac{(b-m)(m-a)}{2kma}
        \le\frac{(\sqrt{b}-\sqrt{a})^2}{2ka}=\frac{V_R}{k},
    \end{aligned}
    \end{equation}
    where independence gives $\operatorname{Var}(X)=\operatorname{Var}(Z_1)/k$, taking expectations in $(b-Z_1)(Z_1-a)\ge0$ gives $\operatorname{Var}(Z_1)\le(b-m)(m-a)$, and $m+ab/m\ge2\sqrt{ab}$ gives the last inequality. Also, using $m\ge a\ge e^{-2R^2}$ and $m\ge\rT(y)$, which holds as $Z_i=1$ whenever $c_i=y$, we have
    \begin{equation}
    \label{eq:approximation-error-2}
        0\le\log(m+1/k)-\log m
        =\log\left(1+\frac{1}{km}\right)
        \le\frac{1}{km}\le\frac{\min\{U_R,1/\rT(y)\}}{k}.
    \end{equation}
    Since $h-\log k-\tilde{h}=\log(1+1/(km))-J$, Equations \ref{eq:approximation-error-1} and \ref{eq:approximation-error-2} prove the claimed bounds.
\end{proof}

The order $k^{-1}$ cannot in general be improved under these assumptions: when all logits are zero, $Z_i\equiv1$ and $h-\log k-\tilde{h}=\log(1+1/k)\sim k^{-1}$.

% For the interpretation under class imbalance, fix a representation $z=f(x)$ and write $p_c=\rT(c)$ and $q_c(z)=\sP[y=c\mid f(x)=z]$ under the training distribution. For unrestricted scores $s_c(z)$, the conditional balanced softmax risk, up to a score-independent constant, is $\log\sum_c p_ce^{s_c}-\sum_cq_cs_c$. Differentiation shows that its minimum, when all $q_c>0$, is attained at
% \begin{equation*}
%     s_c(z)=\log\frac{q_c(z)}{p_c}+b(z),
% \end{equation*}
% where $b(z)$ is common to all classes; zero posterior probabilities are handled by limits. If the class-conditional distributions are fixed and the test prior is uniform, its posterior is proportional to $q_c(z)/p_c$, so $\argmax_c s_c(z)$ minimizes the balanced classification error $|\gT|^{-1}\sum_c\sP_{x\sim\gD^c}[\argmax_j s_j(f(x))\ne c]$. Under the original prior, the corresponding decision is $\argmax_c\{s_c(z)+\log p_c\}$. These statements concern unrestricted scores of fixed representations and need not be attained by the norm-bounded mean classifier.

\subsubsection{Proof of Theorem \ref{thm:balanced-SpeCL}}
\begin{proof}
The SpeCL-incurred supervised loss is formulated as $h(x,y)=-2f(x)^\top w_y+\E_{c\sim\rT}(f(x)^\top w_c)^2$.

Let $\mu_c=\E_{x\sim\gD^c}f(x)$ for $c\in\gT$. Then the population risk is 
\begin{equation*}
\begin{aligned}
\gL=&\E_{(x,y)\sim\gD_{\gT}}h(x,y)=\E_{(x,y)\sim\gD_{\gT}}\left[-2f(x)^\top w_y+\E_{c\sim\rT}(f(x)^\top w_c)^2\right]\\
\ge &\E_{c\sim\rT}\left[-2\mu_c^\top w_c + \E_{c'\sim\rT}(\mu_c^\top w_{c'})^2\right]\\
= & -2\sum_{c\in\gT}\rT(c)\mu_c^\top w_c+\sum_{c,c'\in\gT}\rT(c)\rT(c')(\mu_c^\top w_{c'})^2\\
=&\sum_{c\in\gT}\left[\rT^2(c)(\mu_c^\top w_c)^2-2\rT(c)\mu_c^\top w_c\right]+\sum_{c\neq c',c,c'\in\gT}\rT(c)\rT(c')(\mu_c^\top w_{c'})^2\\
=& \sum_{c\in\gT}\left[\rT(c)\mu_c^\top w_c-1\right]^2+\sum_{c\neq c',c,c'\in\gT}\rT(c)\rT(c')(\mu_c^\top w_{c'})^2-|\gT|\\
=& \left\|\left(\sqrt{\rT(c)}\mu_c\right)_{c\in\gT}^\top\left(\sqrt{\rT(c)}w_c\right)_{c\in\gT}-\mI_{|\gT|}\right\|_F^2-|\gT|.
\end{aligned}
\end{equation*}
Let $C=|\gT|$ and $r=\min\{d,C\}$. Since the matrix product above has rank at most $d$, $\gL\ge-r$. This lower bound is attained: choose $r$ classes $c_1,\ldots,c_r$, set $f(x)=e_j$ for $x\sim\gD^{c_j}$ and $w_{c_j}=e_j/\rT(c_j)$, and set the representations and classifier weights of the remaining classes to zero. Here $e_1,\ldots,e_r$ are standard basis vectors. Therefore, the minimum risk is $-r$ for every strictly positive class distribution, including the uniform distribution.

Writing $\gL_{\rT}(f,W)$ for the risk under $\rT$ and $\gL_{\mathrm{unif}}(f,W)$ for the risk under the uniform distribution, direct substitution of the stated transformations gives, for every pair $(\tilde{f},\tilde{W})$,
\begin{equation*}
    \begin{aligned}
        \gL_{\rT}(\hat{f},\hat{W})
        =&-\frac{2}{C}\sum_{c\in\gT}\E_{x\sim\gD^c}\tilde{f}(x)^\top\tilde{w}_c\\
        &+\frac{1}{C^2}\sum_{c,c'\in\gT}\E_{x\sim\gD^c}\bigl(\tilde{f}(x)^\top\tilde{w}_{c'}\bigr)^2\\
        =&\gL_{\mathrm{unif}}(\tilde{f},\tilde{W}).
    \end{aligned}
\end{equation*}
For a global minimizer under the uniform distribution, this value is $-r$, so the transformed pair is a global minimizer for every $s>0$. The stated identity for the logits follows by multiplying the two scaling factors for $x\sim\gD^c$.

Conversely, let $(\hat{f},\hat{W})$ be a global minimizer under $\rT$ and let $s>0$. Define $\tilde{f}(x)=\frac{\sqrt{\rT(c)C}}{s}\hat{f}(x)$ for $x\sim\gD^c$ and $\tilde{w}_c=s\sqrt{\rT(c)C}\,\hat{w}_c$ for $c\in\gT$. Then the identity above gives $\gL_{\mathrm{unif}}(\tilde{f},\tilde{W})=\gL_{\rT}(\hat{f},\hat{W})=-r$. Hence $(\tilde{f},\tilde{W})$ is a global minimizer under the uniform distribution, and its transformation with this $s$ is exactly $(\hat{f},\hat{W})$.
\end{proof}

\subsubsection{Proof of Proposition \ref{prop:normalized-SpeCL}}
\begin{proof}
    Fix $(x,y)$ and write $u_c=f(x)^\top w_c$ for $c\in\gT$. By the Cauchy--Schwarz inequality, $u_y\le\|f(x)\|\|w_y\|\le R^2\le1/\rT(y)$. Since $u\mapsto-2u+\rT(y)u^2$ is strictly decreasing on $(-\infty,1/\rT(y)]$, we have
    \begin{equation*}
        h(x,y)=-2u_y+\sum_{c\in\gT}\rT(c)u_c^2
        \ge-2u_y+\rT(y)u_y^2
        \ge-2R^2+\rT(y)R^4.
    \end{equation*}
    Taking expectations gives $\E_{(x,y)\sim\gD_{\gT}}h(x,y)\ge-2R^2+R^4\sum_{c\in\gT}\rT^2(c)$. Equality holds if and only if, for almost every $x$, $u_c=0$ for all $c\neq y_x$ and $u_{y_x}=R^2$. By the equality condition of the Cauchy--Schwarz inequality, $u_{y_x}=R^2$ holds if and only if $\|w_{y_x}\|=R$ and $f(x)=w_{y_x}$, in which case $u_c=w_{y_x}^\top w_c$. Since $\rT(c)>0$ for all $c\in\gT$, equality therefore holds if and only if $\{w_c/R\}_{c\in\gT}$ is orthonormal and $f(x)=w_{y_x}$ almost surely. Such minimizers exist since $d\ge|\gT|$. As they satisfy $\|f(x)\|=\|w_c\|=R$, the same characterization and minimum hold under the normalization.
\end{proof}

\subsection{The Optimality of Supervised Losses Incurred by Contrastive Learning}

\subsubsection{Proof of Theorem \ref{thm:optimality-of-InfoNCE}}

\begin{proof}
    With the mean classifier, $\Lsup$ is the risk $\gH(f)$ in Lemma \ref{lemma:infonce-ce}. The proof of that lemma shows that $f$ minimizes $\gH(f)$ if and only if $f(x)=\mu_{y_x}$ almost surely, where $\|\mu_c\|=R$ and $\mu_c^\top\mu_{c'}=-R^2/(|\gT|-1)$ for $c\neq c'$, which is the claim for $\Lsup$. The case of $\Lsup'$ is proved in Appendix \ref{sec:optimality-supervised-cases}.
\end{proof}

\subsubsection{Proof of Corollary \ref{cor:near-optimal-infonce}}
\label{sec:proof-near-optimal-infonce}

\noindent\textbf{Proof}\ Write $C=|\gT|$, $a=C/(C-1)$, and define
    \begin{equation*}
        \begin{aligned}
            F_{\mathrm{U}}(z)&=\sum_{j=0}^k\binom{k}{j}\frac{(C-1)^j}{C^k}
                \log[k-j+1+je^{az}],\\
            F_{\mathrm{S}}(z)&=\log[1+ke^{az}].
        \end{aligned}
    \end{equation*}
    Use $F=F_{\mathrm{U}}$ for $\Lsup$ and $F=F_{\mathrm{S}}$ for $\Lsup'$. Set
    \begin{equation*}
        D=R^2-\frac1C\sum_c\|\mu_c\|^2+\|\bar{\mu}\|^2,
        \qquad \kappa=F'(-R^2)>0.
    \end{equation*}
    The Jensen inequalities in the proof of Lemma \ref{lemma:infonce-ce}, with all $k$ negatives from other classes for $\Lsup'$, bound the respective risk below by $F(-R^2+D)$. The feasible simplex ETF attains $F(-R^2)$. Convexity of $F$ therefore gives $\eps\ge\kappa D$. Since
    \begin{equation*}
        \frac1C\sum_c\E_{x\sim\gD^c}\|f(x)-\mu_c\|^2+\|\bar{\mu}\|^2
        \le D,
    \end{equation*}
    the first bound follows from $\kappa\ge e^{-2R^2}/2$. Indeed, each denominator in $F'(-R^2)$ is at most $k+1$, and $j$ has mean $k(C-1)/C$ under the binomial weights of $F_{\mathrm{U}}$, so $\kappa\ge ke^{-aR^2}/(k+1)$ for both choices of $F$.

    For the Gram matrix, let $G=(\mu_c^\top\mu_{c'})_{c,c'}$ and $G_{\mathrm{ETF}}=aR^2(\mI_C-\vone\vone^\top/C)$. Write $\Psi(G)$ for the corresponding risk with the anchor representation replaced by its class mean. Jensen's inequality bounds the original risk below by $\Psi(G)$, with equality at $G_{\mathrm{ETF}}$. Every matrix on the segment joining $G_{\mathrm{ETF}}$ and $G$ is positive semidefinite with diagonal at most $R^2$. For any such matrix, fix a class vector of norm $r\le R$ in a Gram factorization and write $s_i\in[-rR,rR]$ for its sampled inner products, with margins $z_i=s_i-r^2$. Since $Z=e^{r^2}+\sum_i e^{s_i}\le(k+1)e^{rR}$, the softmax weights $q_0=e^{r^2}/Z$ and $q_i=e^{s_i}/Z$ satisfy
    \begin{equation*}
        q_0q_i\ge\frac{e^{r^2-3rR}}{(k+1)^2}\ge\frac{e^{-2R^2}}{(k+1)^2},
        \qquad q_iq_j\ge\frac{e^{-4R^2}}{(k+1)^2}.
    \end{equation*}
    For $\psi(z)=\log(1+\sum_i e^{z_i})$ and the margin direction $\xi_i=\Delta_{cc_i}$ with $\Delta$ defined below, the identity
    \begin{equation*}
        \xi^\top\nabla^2\psi(z)\xi
        =q_0\sum_iq_i\xi_i^2+\sum_{i<j}q_iq_j(\xi_i-\xi_j)^2
    \end{equation*}
    gives, after i.i.d. negative-class sampling, an expected curvature of at least $\gamma_k\operatorname{Var}(\xi_1)$, where
    \begin{equation*}
        \gamma_k=\frac{ke^{-2R^2}+k(k-1)e^{-4R^2}}{(k+1)^2}.
    \end{equation*}
    Here $\E(\xi_i-\xi_j)^2=2\operatorname{Var}(\xi_1)$ and $\E\xi_1^2\ge\operatorname{Var}(\xi_1)$; the $q_0$ term also covers $k=1$.

    Set $\Delta_{cc'}=G_{cc'}-G_{cc}+aR^2$ for $c\ne c'$, and $\Delta_{cc}=0$. Let $\bar{\Delta}_c=(C-1)^{-1}\sum_{c'\ne c}\Delta_{cc'}$ and $V=\sum_{c\ne c'}(\Delta_{cc'}-\bar{\Delta}_c)^2$. Since $\sum_{c\ne c'}\Delta_{cc'}=C^2D$, symmetry gives the linear term $\kappa D$ in the expansion of $\Psi$ at $G_{\mathrm{ETF}}$. For uniform sampling over all classes, the variance of $\Delta_{c c_i}$ is at least $(C-1)/C$ times its variance over the other classes. Taylor's formula along the Gram segment consequently gives, for both sampling schemes,
    \begin{equation*}
        \eps\ge\Psi(G)-\Psi(G_{\mathrm{ETF}})
        \ge\kappa D+\frac{\gamma_k}{2C^2}V.
    \end{equation*}
    Finally, let $\delta_c=R^2-G_{cc}\ge0$ and $u_c=\mu_c^\top\bar{\mu}$. The off-diagonal row mean of $G-G_{\mathrm{ETF}}$ is $(Cu_c+\delta_c)/(C-1)$, while $\sum_c(\delta_c^2+u_c^2)\le CR^2D$. Decomposing each row into its mean and centered part and applying Cauchy--Schwarz yields
    \begin{equation*}
        \begin{aligned}
            \|G-G_{\mathrm{ETF}}\|_F^2
            &=V+\sum_c\delta_c^2+\frac1{C-1}\sum_c(Cu_c+\delta_c)^2\\
            &\le V+\frac{C^2(C+1)}{C-1}R^2D.
        \end{aligned}
    \end{equation*}
    Combining these two bounds gives
    \begin{equation}
    \label{eq:explicit-gram-stability}
        \frac{\|G-G_{\mathrm{ETF}}\|_F^2}{C^2}
        \le\max\left\{\frac{2}{\gamma_k},\frac{C+1}{C-1}\frac{R^2}{\kappa}\right\}\eps.
    \end{equation}
    Since $\gamma_k\ge e^{-4R^2}/4$ and $\kappa\ge e^{-aR^2}/2$ for both choices of $F$, the coefficient is at most $8e^{4R^2}$ for all $k\ge1$.

    For Equation \ref{eq:learned-infonce-excess}, choose a class-constant simplex ETF $f_0\in\gF$ as the comparator in Theorem \ref{thm:generalization-bound-for-contrastive-learning}. At $f_0$, replacing negatives by their class means is exact. Replacing the positive representation by its target-class mean changes the loss only when $y_{x^+}\ne y_x$, and then by at most $\sqrt{2a}R^2$, since InfoNCE is $R$-Lipschitz in that representation and distinct class means of $f_0$ are at distance $\sqrt{2a}R$. Since $x^+$ follows the marginal distribution $\gD$, $f_0$ is class-constant almost surely for both views. Thus $\Lcon(f_0)\le\Lsup^*+\sqrt{2a}R^2\alpha$, and substitution into Theorem \ref{thm:generalization-bound-for-contrastive-learning} gives Equation \ref{eq:learned-infonce-excess}. Both stability bounds above apply with $\eps=R\Disc(\widehat{f})+\Gen_M+\sqrt{2a}R^2\alpha$.
\noindent\hfill\BlackBox\par\vspace{2mm}

\subsubsection{Proof of Theorem \ref{thm:optimality-of-specl}}

\begin{proof}
Let $H=[\sqrt{\rT(c)}\mu_c]_{c\in\gT}\in\sR^{d\times|\gT|}$.
For the spectral contrastive loss, we have
\begin{equation*}
\begin{aligned}
\Lsup=& \E_{x\sim\gD}\E_{\{c_i\}_{i=1}^k\iidsim\rT}\left[-2f(x)^\top \mu_{y_x}+\frac{1}{k}\sum_{i=1}^k (f(x)^\top\mu_{c_i})^2\right]\\
=& \E_{x\sim\gD}\left[-2f(x)^\top \mu_{y_x}+\E_{c\sim\rT} (f(x)^\top\mu_{c})^2\right]\\
\ge&\E_{c\sim\rT}\left[-2\mu_c^\top\mu_c+\E_{c'\sim\rT}(\mu_c^\top\mu_{c'})^2\right]\\
=& -2\sum_{c\in\gT}\rT(c)\|\mu_c\|_2^2+\sum_{c\in\gT}\sum_{c'\in\gT}\rT(c)\rT(c')(\mu_c^\top\mu_{c'})^2\\
=&\left\|\mI-H^\top H\right\|_F^2-|\gT|,
\end{aligned}
\end{equation*}
where $\mI$ is the $|\gT|\times|\gT|$ identity matrix, and the inequality follows from Jensen's inequality. More precisely, its gap is
    \begin{equation*}
        \Lsup-\left(\|\mI-H^\top H\|_F^2-|\gT|\right)=\E_{x\sim\gD}\|H^\top(f(x)-\mu_{y_x})\|^2.
    \end{equation*}
    Hence the global minimum is attained by class-constant representations whose means minimize the Gram objective subject to $\|\mu_c\|\le R$. The feasible set is compact, so such means exist. Equality only requires $H^\top(f(x)-\mu_{y_x})=0$ almost surely; it does not in general imply variability collapse for every minimizer.

    Write $C=|\gT|$, $r=\min\{d,C\}$, and $G=H^\top H$. The Gram objective minus $C$ is at least $-r$, with equality if and only if $G$ is a rank-$r$ orthogonal projection. Its diagonal $q_c=G_{cc}$ must satisfy $0\le q_c\le\min\{1,R^2\rT(c)\}$ and $\sum_cq_c=r$, so the stated sum condition is necessary. Conversely, that condition allows a choice of such $q_c$. By the Schur--Horn theorem \citep{horn1954doubly}, there is a real rank-$r$ orthogonal projection with this diagonal. A Gram factorization $G=H^\top H$ in dimension $d$ gives class means with $\|\mu_c\|^2=q_c/\rT(c)\le R^2$, proving sufficiency. When $C\le d$, the rank-$C$ projection is $\mI$. Since the gap above is nonnegative, every global minimizer in this case has such a $G$ for its class means. When $C>d$, such a $G$ satisfies $\|\mI-G\|_F^2=C-d$, the smallest error of any rank-$d$ approximation of $\mI$, as all singular values of $\mI$ equal one.

    For uniform $\rT$, let $s=\operatorname{tr}(G)\le R^2$. Since $\operatorname{rank}(G)\le r$, Cauchy--Schwarz gives $\|G\|_F^2\ge s^2/r$, and the risk is at least $-2s+s^2/r$. Its minimum over $0\le s\le R^2$ is attained at $s=\min\{R^2,r\}$. Again by Schur--Horn, choose a rank-$r$ orthogonal projection $P$ with constant diagonal $r/C$. Then $G=\min\{R^2/r,1\}P$ has diagonal at most $R^2/C$ and attains this lower bound, giving the stated minimum.
\end{proof}

\subsubsection{Optimality of Supervised Contrastive Cases}
\label{sec:optimality-supervised-cases}
The supervised contrastive loss $\Lconsup(f)$ is formulated as
\begin{equation*}
\Lconsup(f)=\E_{c\sim \rT}\E_{\substack{x,x^+\iidsim \gD^c\\\{x_i^-\}_{i=1}^k\iidsim\gD^{\not c}}}\ell(f(x),f(x^+),\{f(x^-_i)\}_{i=1}^k),
\end{equation*}
and its incurred supervised loss is formulated as
\begin{equation*}
\Lsup(\gD_{\gT},W f,h)=\E_{x\sim\gD}\E_{\{c_i\}_{i=1}^k\iidsim\rT^{\not y_x}}\ell(f(x),\mu_{y_x},\{\mu_{c_i}\}_{i=1}^k).
\end{equation*}

We now prove the case of $\Lsup'$ in Theorem \ref{thm:optimality-of-InfoNCE}.
\begin{proof}
Since $\rT$ is uniform, let $C=|\gT|$. Then we have $\overline{\mu}=\E_{c\sim\rT}\mu_c=\frac{1}{C}\sum_{c\in\gT}\mu_c$, and
\begin{equation*}
\begin{aligned}
\Lsup'=& \E_{x\sim\gD}\E_{\{c_i\}_{i=1}^k\iidsim\rT^{\not y_x}}\log\left[1+\sum_{i=1}^k\exp\left(f(x)^\top(\mu_{c_i}-\mu_{y_x})\right)\right]\\
 \ge& \E_{x\sim\gD}\log\left[1+k\exp\left(f(x)^\top\left(\tfrac{C}{C-1}\overline{\mu}-\tfrac{C}{C-1}\mu_{y_x}\right)\right)\right]\\
\ge& \frac{1}{C}\sum_{c\in\gT}\log\left[1+k\exp\left(\tfrac{C}{C-1}\mu_c^\top(\overline{\mu}-\mu_c)\right)\right]\\
\ge & \log\left[1+k\exp\left(\tfrac{1}{C-1}\sum_{c\in\gT}\mu_c^\top(\overline{\mu}-\mu_c)\right)\right]\\
\ge&\log\left[1+k\exp\left(-\tfrac{CR^2}{C-1}\right)\right].
\end{aligned}
\end{equation*}
    As in the proof of Lemma \ref{lemma:infonce-ce}, equality in the last inequality requires $\overline{\mu}=0$ and $\|\mu_c\|=R$, which imply $f(x)=\mu_c$ almost surely under $\gD^c$. The function $(z_1,\ldots,z_k)\mapsto\log(1+\sum_i e^{z_i})$ is strictly convex, so equality in the first inequality requires $\mu_c^\top\mu_{c'}$ to be constant over $c'\ne c$. Their sum is $-R^2$, hence each equals $-R^2/(C-1)$. Conversely, the feasible simplex Gram matrix in the proof of Lemma \ref{lemma:infonce-ce} attains equality throughout, completing the proof.
\end{proof}

\begin{theorem}
Consider the supervised spectral loss $\Lsup=\E_{x\sim\gD}\E_{\{c_i\}_{i=1}^k\iidsim\rT^{\not y_x}}\ell(f(x),\mu_{y_x},\{\mu_{c_i}\}_{i=1}^k)$, that is, $\ell(f(x),\mu_{y_x},\{\mu_{c_i}\}_{i=1}^k)=-2f(x)^\top \mu_{y_x}+\frac{1}{k}\sum_{i=1}^k (f(x)^\top\mu_{c_i})^2$. Then $f$ minimizes $\Lsup$ over all representation functions satisfying $\|f(x)\|\le R$ if and only if $f(x)=\mu_{y_x}^*$ almost surely, where $\{\mu_{c}^*\}_{c\in\gT}$ minimizes $\E_{c\sim\rT}\left[-2\mu_c^\top\mu_c+\E_{c'\sim\rT^{\not c}}(\mu_c^\top\mu_{c'})^2\right]$ subject to $\|\mu_c\|\le R$.
\end{theorem}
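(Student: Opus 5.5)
The plan is to follow the proof of Theorem \ref{thm:optimality-of-specl}. First I would separate the loss into a term that depends only on the class means and a nonnegative variance-type remainder. Then I would read off the optimal value and the structure of the minimizers from the equality conditions.

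\textbf{Step 1 (decomposition).} Since the spectral loss is linear in $\frac1k\sum_i(\cdot)$, averaging over the i.i.d. negatives collapses $\frac1k\sum_{i=1}^k$ to a single draw $c'\sim\rT^{\not y_x}$. This gives
\begin{equation*}
\Lsup=\E_{x\sim\gD}\Bigl[-2f(x)^\top\mu_{y_x}+\E_{c'\sim\rT^{\not y_x}}(f(x)^\top\mu_{c'})^2\Bigr].
\end{equation*}
Conditioning on the class $c$ of $x$, the linear term averages to $-2\|\mu_c\|^2$. For each fixed $c'$, the quadratic term satisfies
\begin{equation*}
\E_{x\sim\gD^c}(f(x)^\top\mu_{c'})^2=(\mu_c^\top\mu_{c'})^2+\E_{x\sim\gD^c}\bigl(\mu_{c'}^\top(f(x)-\mu_c)\bigr)^2 .
\end{equation*}
Hence $\Lsup=\Phi(\{\mu_c\})+\E_{x\sim\gD}\E_{c'\sim\rT^{\not y_x}}\bigl(\mu_{c'}^\top(f(x)-\mu_{y_x})\bigr)^2$, where $\Phi(\{\mu_c\})=\E_{c\sim\rT}\bigl[-2\|\mu_c\|^2+\E_{c'\sim\rT^{\not c}}(\mu_c^\top\mu_{c'})^2\bigr]$ is exactly the objective in the statement. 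In particular $\Lsup\ge\Phi(\{\mu_c\})\ge\min\Phi$, where the minimum of $\Phi$ over $\|\mu_c\|\le R$ exists by compactness and continuity.

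\textbf{Step 2 (the ``if'' direction).} If $f(x)=\mu^*_{y_x}$ almost surely, then the class means of $f$ are $\mu^*_c$ and the remainder vanishes. So $\Lsup=\min\Phi$, which is the global minimum by Step~1; it is feasible since $\|\mu_c^*\|\le R$.

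\textbf{Step 3 (the ``only if'' direction, main obstacle).} Let $f$ be a minimizer. By Steps 1 and 2 we need $\Phi(\{\mu_c\})=\min\Phi$ and a vanishing remainder, so $\{\mu_c\}$ is a minimizer of $\Phi$. However, a vanishing remainder only forces $f(x)-\mu_c\perp\operatorname{span}\{\mu_{c'}:c'\neq c\}$, which is the caveat already noted after Theorem \ref{thm:optimality-of-specl}. To upgrade this to variability collapse, I would show that every minimizer of $\Phi$ has $\|\mu_c\|=R$ for all $c$. Then, as in the proof of Lemma \ref{lemma:infonce-ce}, $\E_{x\sim\gD^c}\|f(x)-\mu_c\|^2=\E\|f(x)\|^2-\|\mu_c\|^2\le0$ forces $f(x)=\mu_c$ almost surely.

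To prove the norm saturation, I would argue by contradiction. Suppose $\|\mu_c\|<R$ at an optimum, and perturb $\mu_c\mapsto\mu_c+tv$. If some $v\neq0$ is orthogonal to all $\mu_{c'}$ with $c'\neq c$, the cross terms are unchanged. Choosing $v=\mu_c$ when $\mu_c$ itself lies in that orthogonal complement (or otherwise $\pm v$ with $\mu_c^\top v\ge0$), the term $-2\rT(c)\|\mu_c+tv\|^2$ then strictly decreases, a contradiction. This works whenever $d\ge|\gT|$, since $d-(|\gT|-1)\ge1$ guarantees such a $v$. For $d<|\gT|$ I would instead try scaling $\mu_c\mapsto(1+t)\mu_c$. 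The first-order change is $-4\rT(c)\|\mu_c\|^2+4\sum_{c'\neq c}w_{cc'}(\mu_c^\top\mu_{c'})^2$ with the appropriate weights $w_{cc'}$. I would then use the first-order optimality conditions of $\Phi$ to show this is negative, i.e., that the cross terms are dominated by the diagonal term. I expect this to be the delicate step. If it fails in general, the honest fallback is the weaker conclusion of Theorem \ref{thm:optimality-of-specl}: the minimizers are exactly the $f$ whose class means minimize $\Phi$ and with $H^\top(f(x)-\mu_{y_x})=0$ almost surely for the restricted weights. Variability collapse would then be claimed under the extra dimension condition.
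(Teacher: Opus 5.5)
Your Steps 1 and 2 match the paper's decomposition $\Lsup=\Phi(\{\mu_c\})+\E_{x\sim\gD}\E_{c'\sim\rT^{\not y_x}}\bigl(\mu_{c'}^\top(f(x)-\mu_{y_x})\bigr)^2$ and are correct. Your Step 3 is also complete when $d\ge|\gT|$. The gap is the case $d<|\gT|$, which the statement does not exclude.

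In that case the intermediate claim you aim for, that every minimizer of $\Phi$ has $\|\mu_c\|=R$ for all $c$, is false. Take $d=1$, three balanced classes and $R^2=10$, and write $a_c=\mu_c^2$. Then $\Phi=\frac13\bigl[-2(a_1+a_2+a_3)+a_1a_2+a_1a_3+a_2a_3\bigr]$. This equals $-20/3$ at $(\mu_1,\mu_2,\mu_3)=(\sqrt{10},0,0)$, but $80$ at the only norm-saturated configurations, those with $a_c\equiv10$. So no minimizer saturates the norms. Your scaling perturbation cannot rescue this. It does nothing when $\mu_c=0$, as in this example. When $0<\|\mu_c\|<R$, both signs of $t$ are feasible, so at a minimizer the first-order change along $\mu_c$ must vanish rather than be strictly negative. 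Your fallback would therefore prove only a weaker statement with an added dimension assumption.

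The missing idea is that you need neither saturation for every class nor dimension counting to find $v$: the vanishing remainder itself supplies the direction. Suppose class $c$ has positive within-class variance. Then $\|\mu_c\|^2=\E_{x\sim\gD^c}\|f(x)\|^2-\E_{x\sim\gD^c}\|f(x)-\mu_c\|^2<R^2$. Moreover, $\mu_{c'}^\top(f(x)-\mu_c)=0$ for $\gD^c$-almost every $x$ and every $c'\ne c$, since each such $c'$ has positive weight under $\rT^{\not c}$. So you may take $v=f(x)-\mu_c\neq0$ for some $x$ in that full-measure set with $f(x)\neq\mu_c$; such $x$ exist because the variance is positive. Replacing $\mu_c$ by $\mu_c+sv$ has three effects:
\begin{itemize}
\item it stays feasible for small $|s|$;
\item it leaves every cross term $(\mu_c^\top\mu_{c'})^2$ unchanged, and $\Phi$ has no self-term because $c'\sim\rT^{\not c}$;
\item it changes $\Phi$ by $-2\rT(c)\bigl(2s\mu_c^\top v+s^2\|v\|^2\bigr)$, which is strictly negative for a suitable sign of $s$ (a purely second-order effect if $\mu_c^\top v=0$).
\end{itemize}
This contradicts $\Phi(\{\mu_c\})=\min\Phi$. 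Hence every minimizer is class-constant almost surely for every $d$, which is exactly how the paper closes the argument.
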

\begin{proof}
For the spectral contrastive loss, we have
\begin{equation*}
\begin{aligned}
\Lsup=& \E_{x\sim\gD}\E_{\{c_i\}_{i=1}^k\iidsim\rT^{\not {y_x}}}\left[-2f(x)^\top \mu_{y_x}+\frac{1}{k}\sum_{i=1}^k (f(x)^\top\mu_{c_i})^2\right]\\
=& \E_{x\sim\gD}\left[-2f(x)^\top \mu_{y_x}+\E_{c\sim\rT^{\not {y_x}}} (f(x)^\top\mu_{c})^2\right]\\
\ge&\E_{c\sim\rT}\left[-2\mu_c^\top\mu_c+\E_{c'\sim\rT^{\not c}}(\mu_c^\top\mu_{c'})^2\right].
\end{aligned}
\end{equation*}
    Denote the last expression by $Q(\{\mu_c\})$. It has a minimizer on the compact set $\|\mu_c\|\le R$, and the corresponding class-constant representations attain the lower bound. Moreover,
    \begin{equation*}
        \Lsup-Q(\{\mu_c\})=\E_{c\sim\rT}\E_{x\sim\gD^c}\E_{c'\sim\rT^{\not c}}\left[((f(x)-\mu_c)^\top\mu_{c'})^2\right].
    \end{equation*}
    Any minimizer must therefore minimize $Q$ and satisfy $(f(x)-\mu_c)^\top\mu_{c'}=0$ almost surely for every $c'\ne c$. If a class has nonzero within-class variance, then $\|\mu_c\|<R$ and there exists a nonzero $v$ orthogonal to all $\mu_{c'}$, $c'\ne c$. Replacing $\mu_c$ by $\mu_c+sv$ for sufficiently small $s$ preserves feasibility and all cross-class inner products. Choosing the sign of $s$ so that $2s\mu_c^\top v+s^2\|v\|^2>0$ strictly decreases $Q$ through its term $-2\rT(c)\|\mu_c\|^2$, a contradiction. Thus every minimizer is class-constant almost surely, proving both directions.
\end{proof}

\section{One Negative Sample Case}
\label{appendix:more-results}
% \subsection{One Negative Sample Case}
To better understand the supervised loss incurred by contrastive learning, we also explore the special case where only one negative sample is used (i.e., $k=1$). Negative sampling is also used in Word2Vec \citep{mikolov2013distributed}. Here, we consider the contrastive objective with one negative sample, defined as follows:
\begin{equation}
\Lcon(f,\gD;\ell)=\E_{\substack{x,x^+\sim\Dpos\\x^-\sim\Dneg}}\ell(f(x),f(x^+),f(x^-)).
\end{equation}
In this case, the incurred loss $h$ in Theorem \ref{thm:bound-for-contrastive-losses} can be precisely characterized when class collision results in a constant loss value. Let $\pi=\E_{c,c'\iidsim\rT}\mathbb{I}(c=c')$. For losses satisfying $\ell(v,u,u)=\ell(0)$ for all $\|v\|,\|u\|\le R$, where $\ell(0)$ is constant, we can derive a similar bound to that of \citet{saunshi2019theoretical} without relying on the assumption of conditional independence of positive pairs.
\begin{proposition}
\label{bound-for-binary}
Under the notation and conditions in Theorem \ref{thm:bound-for-contrastive-losses}, suppose $k=1$ and $\ell(v,u,u)=\ell(0)$ is constant for $\|v\|,\|u\|\le R$. Then, for all $f\in\gF$, there exist a linear classifier $W\in\sR^{|\gT|\times d}$ and a supervised loss $h$ such that
\begin{equation}
\Lsup(\gD_{\gT}, Wf, h)\le \Lcon(f)+\eta\cdot \Disc(f)-\pi\cdot\ell(0),
\end{equation}
where the incurred loss $h(w_y^\top f(x),\{w_c^\top f(x)\}_{c\ne y})=(1-\rT(y))\E_{c\sim\rT^{\not y}}\ell(f(x),w_y,w_c)$.
\end{proposition}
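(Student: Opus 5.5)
The plan is to specialize the chain of inequalities in the proof of Theorem \ref{thm:bound-for-contrastive-losses} to $k=1$ and then separate out the class-collision event. Let $\mu_c=\E_{x\sim\gD^c}f(x)$ and take $W=(\mu_c)_{c\in\gT}^\top$ as the mean classifier. Following that proof, I will first apply Jensen's inequality in $f(x^+)$, conditioning on $x$. Next I apply Jensen's inequality in $f(x^-)$, conditioning on the negative's class. Finally I use the $\eta$-Lipschitzness of $\ell$ in the positive argument to replace $\E_{x^+\sim\Dpos(\cdot|x)}f(x^+)$ by $\mu_{y_x}$. Together these give
\begin{equation*}
\Lcon(f)\ge\E_{x\sim\gD}\E_{c\sim\rT}\ell\left(f(x),\mu_{y_x},\mu_c\right)-\eta\cdot\Disc(f).
\end{equation*}
This step is exactly Theorem \ref{thm:bound-for-contrastive-losses} with $k=1$, so I will simply invoke it.

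Then I will split the inner expectation over $c\sim\rT$ according to whether $c=y_x$. This gives
\begin{equation*}
\E_{c\sim\rT}\ell(f(x),\mu_{y},\mu_c)=\rT(y)\,\ell(f(x),\mu_y,\mu_y)+(1-\rT(y))\E_{c\sim\rT^{\not y}}\ell(f(x),\mu_y,\mu_c).
\end{equation*}
The norm bound $\|f(x)\|\le R$ implies $\|\mu_y\|\le R$ by convexity of the ball. The hypothesis $\ell(v,u,u)=\ell(0)$ then makes the collision term equal to $\rT(y)\ell(0)$. The remaining term is exactly the stated incurred loss $h(w_y^\top f(x),\{w_c^\top f(x)\}_{c\neq y})$ with $w_c=\mu_c$.

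The last step is to take the expectation over $(x,y)\sim\gD_\gT$. The collision contribution becomes $\E_{y\sim\rT}\rT(y)\ell(0)=\sum_c\rT(c)^2\ell(0)=\pi\cdot\ell(0)$, since $\pi=\E_{c,c'\iidsim\rT}\mathbb{I}(c=c')=\sum_c\rT(c)^2$. Rearranging yields $\Lsup(\gD_\gT,Wf,h)\le\Lcon(f)+\eta\Disc(f)-\pi\ell(0)$.

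There is no real obstacle here. The only points to check are that the collision term is evaluated at arguments inside the norm ball where the constancy hypothesis applies, which holds for class means, and that the Jensen steps from Theorem \ref{thm:bound-for-contrastive-losses} apply verbatim when $k=1$.
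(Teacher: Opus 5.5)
Your proposal is correct and follows the paper's proof: apply Theorem \ref{thm:bound-for-contrastive-losses} with $k=1$ and the mean classifier $w_c=\mu_c$, then split the expectation over the negative class on the collision event $c=y_x$. The constancy hypothesis applies because $\|\mu_y\|\le R$, and the collision mass averages to $\E_{y\sim\rT}\rT(y)=\pi$.
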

\begin{proof}
    Let $w_c=\mu_c=\E_{x\sim\gD^c}f(x)$. Splitting the mean-classifier loss according to whether the negative class equals the anchor class gives
    \begin{equation*}
        \begin{aligned}
            \E_{x\sim\gD}\E_{c\sim\rT}\ell(f(x),\mu_{y_x},\mu_c)
            &=\E_{x\sim\gD}\left[(1-\rT(y_x))\E_{c\sim\rT^{\not y_x}}\ell(f(x),\mu_{y_x},\mu_c)+\rT(y_x)\ell(0)\right]\\
            &=\Lsup(\gD_{\gT},Wf,h)+\pi\ell(0).
        \end{aligned}
    \end{equation*}
    Theorem \ref{thm:bound-for-contrastive-losses} bounds the left-hand side by $\Lcon(f)+\eta\Disc(f)$, completing the proof.
\end{proof}

\noindent\textbf{Multi-Class Binary Cross Entropy.} For the InfoNCE loss with only one negative sample, we have the incurred loss
\begin{equation}
\label{eq:multi-class-binary-ce}
h(Wf(x),y)=(1-\rT(y))\E_{c\sim\rT^{\not y}}\log\left[1+e^{f(x)^\top(w_c-w_y)}\right],
\end{equation}
which can be viewed as a \textit{multi-class binary cross entropy} that trains the binary classifiers $\{w_y-w_c\}_{c\neq y}$ with weighted logistic losses for each sample $x$ drawn from $\gD^y$.

Taking $\ell(0)=\log2$ in Proposition \ref{bound-for-binary} gives the deterministic InfoNCE bound $\Lsup(\gD_{\gT},Wf,h)\le\Lcon(f)+\eta\Disc(f)-\pi\log2$ for the loss in Equation \ref{eq:multi-class-binary-ce}.

For a direct classification guarantee, let $h_1$ denote the $k=1$ InfoNCE-incurred loss in Theorem \ref{thm:bound-for-contrastive-losses}, including target-class negatives, and set $p_{\min}=\min_{c\in\gT}\rT(c)>0$. With any fixed rule for breaking ties, the classification error under $\gD_\gT$ satisfies
\begin{equation}
    \label{eq:infonce-classification-bound}
    \sP_{(x,y)\sim\gD_\gT}\left[\argmax_{c\in\gT}w_c^\top f(x)\ne y\right]
    \le\frac{\Lsup(\gD_\gT,Wf,h_1)-\pi\log2}{p_{\min}\log2}.
\end{equation}
This is the case $k=1$ of the first bound in Corollary \ref{cor:classification-error}, since then $\E_{y\sim\rT}h_0(y)=\pi\log2$ and $\omega_1(y)=\log2\cdot\min_{c\ne y}\rT(c)\ge p_{\min}\log2$, and that corollary also bounds its numerator for the mean classifier of $\widehat{f}$.

\section{Experiments}
\label{sec:appendix-exps}

\subsection{Experimental Details}
In this section, we present the training details and experimental results across various tasks, encompassing unsupervised contrastive learning and supervised learning on CIFAR-10, CIFAR-100 \citep{krizhevsky2009learning}, and ImageNet-100 (the first 100 classes of ImageNet) \citep{deng2009imagenet}. The chosen network architectures are ResNet-18 \citep{he2016deep} for CIFAR-10/CIFAR-100, and ResNet-50 for ImageNet-100. All models are trained for 200 epochs using the SGD optimizer and a cosine decay learning rate schedule, with an initial learning rate of 0.2, a batch size of 512, and a weight decay of 5e-4.
% The GPUs are NVIDIA GeForce RTX 3090Ti.
\begin{table}[!t]
\centering
% \scriptsize
% \setlength{\tabcolsep}{1.5mm}
\caption{Linear probing accuracies (\%) on CIFAR-10, CIFAR-100, and ImageNet-100 using unsupervised contrastive learning with the InfoNCE loss and the relaxed k-means loss, i.e., $\widehat{L}_{\mathrm{con}}+\lambda\cdot \widehat{L}_{\mathrm{k\text{-}means}}(f)$, where $\lambda\in\{0, 0.01, 0.02, 0.05, 0.1, 0.2, 0.5, 1\}$ and $\tau=0.5$.}
\label{tab:relaxed-loss}
\begin{sc}
\begin{tabular}{c|cccc}
\toprule
$\lambda$       & 0                     & 0.01                  & 0.02                  & 0.05                  \\
\midrule
CIFAR-10     & 91.73 $\pm$ 0.06          & 91.68 $\pm$ 0.05          & \textbf{91.88 $\pm$ 0.07} & \textbf{91.85 $\pm$ 0.08} \\
CIFAR-100    & 77.32 $\pm$ 0.08          & \textbf{77.37 $\pm$ 0.05} & \textbf{77.35 $\pm$ 0.10} & \textbf{77.54 $\pm$ 0.11} \\
ImageNet-100 & 81.09 $\pm$ 0.20            & \textbf{81.22 $\pm$ 0.18}   & \textbf{81.40 $\pm$ 0.23}   & \textbf{81.28 $\pm$ 0.48}   \\
\midrule\midrule
$\lambda$       & 0.1                   & 0.2                   & 0.5                   & 1                     \\
\midrule
CIFAR-10     & \textbf{91.80 $\pm$ 0.04} & 89.84 $\pm$ 0.08          & 86.57 $\pm$ 0.16          & 81.51 $\pm$ 0.20            \\
CIFAR-100    & \textbf{77.37 $\pm$ 0.06} & 75.04 $\pm$ 0.07          & 69.03 $\pm$ 0.25          & 65.87 $\pm$ 1.51            \\
ImageNet-100 & \textbf{81.63 $\pm$ 0.24}   & \textbf{82.10 $\pm$ 0.14}   & \textbf{83.09 $\pm$ 0.17}   & 65.49 $\pm$ 2.46  \\
\bottomrule
\end{tabular}
\end{sc}
\end{table}

\begin{table}[!t]
\centering
\setlength{\tabcolsep}{1.6mm}
\caption{Illustration of $\frac{1}{\rT(c)}$ and $\mu_c^\top w_c$ on long-tailed CIFAR-10 with an imbalance ratio of 10.}
\label{tab:val-specl}
\begin{sc}
\begin{tabular}{c|cccccccccc}
\toprule
Class Index & 1 & 2  & 3 & 4 & 5 & 6 & 7 & 8 & 9 & 10\\
\midrule
$\frac{1}{\rT(c)}$  & 4.09 &	5.28	&6.82	&8.81	&11.38	&14.69	&18.97	&24.50&	31.68&	40.86	  \\
$\mu_c^\top w_c$ (train) & 4.08	&5.21	&6.50	&8.12	&10.54&	12.36	&17.17&	21.79&	27.12&	34.11\\
$\mu_c^\top w_c$ (test) & 4.08	&	5.21	&	6.51	&	8.06	&	10.46	&	12.37	&	17.17	&	21.79	&	27.12	&	34.11\\
\bottomrule
\end{tabular}
\end{sc}
\end{table}

\subsection{More Experimental Results}
\paragraph{More Results on the Relaxed K-Means Loss.}
\label{sec:more-results-relaxed-kmeans}
We conduct experiments with independent runs and report standard deviations to illustrate the performance of the relaxed k-means loss. For Table \ref{tab:relaxed-loss}, we run 3 trials with random seeds (0, 1, 2) for each $\lambda\in\{0, 0.01, 0.02, 0.05, 0.1, 0.2, 0.5, 1\}$. As is evident in the table, the performance in linear probing deteriorates when $\lambda$ is too large, suggesting inherent limitations of the relaxed k-means loss. This could be attributed to the fact that it only focuses on reducing distances between representations within a specific range rather than rigorously shrinking the distance to cluster centers. Consequently, this relaxation would inevitably cause representations of semantically inconsistent samples to become close, particularly in the early stage of training. Despite these limitations, we still observe an improvement, particularly on ImageNet-100. 
% This underscores the feasibility of minimizing the internal discrepancy in this manner. We hope that the relaxed k-means loss can stimulate new insights into unsupervised representation learning.

% \paragraph{More Results for Class Imbalance.} The complete experimental results for long-tailed CIFAR-10 and long-tailed CIFAR-100 are presented in Table~\ref{tab:complete-imbalanced-learning}.

\subsection{Validation of Theorem \ref{thm:balanced-SpeCL}}
\label{sec:val-specl-imbalance}
In the proof of Theorem \ref{thm:balanced-SpeCL}, a global minimizer is obtained by minimizing $\lVert (\sqrt{\rT(c)}\mu_c)^\top_{c\in\gT}(\sqrt{\rT(c)}w_c)_{c\in\gT}-\mI\rVert_F^2$ over class means and classifier weights, where $\mu_c=\E_{x\sim\gD^c}f(x)$. For $s>0$, let $\hat{\mu}_c=s\sqrt{|\gT|\rT(c)}\mu_c$ and $\hat{w}_c=\frac{1}{s}\sqrt{|\gT|\rT(c)}w_c$. The matrix objective then equals $\lVert (\frac{1}{\sqrt{|\gT|}}\hat{\mu}_c)^\top_{c\in\gT}(\frac{1}{\sqrt{|\gT|}}\hat{w}_c)_{c\in\gT}-\mI\rVert_F^2$ for the class-balanced case. Both distributions attain the same minimum, so this scaling maps corresponding global minimizers while preserving cosine similarities between their nonzero features and classifier weights.

To illustrate this point, we conduct experiments on long-tailed CIFAR-10 using ResNet-18 with a feature dimension of 512, and $|\gT|=10$. Since $d\ge|\gT|$, we can derive that the global minimizer of $\lVert (\sqrt{\rT(c)}\mu_c)^\top_{c\in\gT}(\sqrt{\rT(c)}w_c)_{c\in\gT}-\mI\rVert_F^2$ satisfies $\mu_c^\top w_c=\frac{1}{\rT(c)}$, $\mu_c^\top w_{c'}=0$ for $c\neq c'$. The results are presented in Table \ref{tab:val-specl}. As observed, the logits $\mu_c^\top w_c$ of the representation model roughly align with $\frac{1}{\rT(c)}$ across the board.

\vskip 0.2in
\bibliography{preference}

\end{sloppypar}
\end{document}